\documentclass{article}

  \usepackage[preprint]{neurips_2026}

\usepackage{comment}
\usepackage{tabularx}
\usepackage[utf8]{inputenc} 
\usepackage[T1]{fontenc}    
\usepackage{hyperref}
\usepackage{url}            
\usepackage{booktabs}       
\usepackage{amsfonts}       
\usepackage{nicefrac}       
\usepackage{microtype}      
\usepackage{xcolor}         
\usepackage{amsmath}        
\usepackage{amssymb}        
\usepackage{mathtools}      
\usepackage{amsthm}         
\usepackage{bm}
\usepackage{cleveref}
\usepackage{tikz}
\DeclareRobustCommand{\E}{\mathop{\mathbb{E}}\nolimits}
\usetikzlibrary{arrows.meta}
\usepackage{thmtools}
\usepackage{thm-restate}
\newtheorem{theorem}{Theorem}
\newtheorem{proposition}{Proposition}
\newtheorem{lemma}{Lemma}
\newtheorem{corollary}{Corollary}
\theoremstyle{definition}
\newtheorem{definition}{Definition}
\newtheorem{assumption}{Assumption}

\theoremstyle{remark}
\newtheorem{remark}{Remark}
\usepackage{enumitem}
\usepackage{placeins}
\newcommand{\R}{\mathbb{R}}

\DeclareMathOperator*{\essinf}{ess\,inf}
\DeclareMathOperator*{\esssup}{ess\,sup}

\newcommand{\Wone}{\mathsf W_1}
\newcommand{\Wtwo}{\mathsf W_2}

\newcolumntype{Y}{>{\raggedright\arraybackslash}X}
\newcolumntype{P}[1]{>{\raggedright\arraybackslash}p{#1}}
\title{Load--Reserve Wasserstein Propagation for Isotropic Diffusion Samplers}

\author{%
  Zicheng Lyu \\
  School of Data Science, Fudan University \\
  \texttt{lyuzicheng@gmail.com} \\
  \And
  Zengfeng Huang\thanks{Corresponding author.} \\
  School of Data Science, Fudan University \\
  Shanghai Innovation Institute \\
  \texttt{huangzf@fudan.edu.cn} \\
}

\begin{document}

\maketitle

\begin{abstract}
Many Wasserstein analyses of diffusion samplers control reverse-time propagation
by global stability summaries of the learned drift. These summaries can hide
radial geometry: equal-height expansive regions of different width can yield
different propagation costs. We give a profile-adapted propagation interface for
scalar-isotropic reverse-SDE windows with certified learned-drift profiles. A
certified lower radial profile is compiled into an affine-tail transportation cost:
reflection coupling reduces stability to a one-dimensional slope budget, and
Hardy capacity quantifies the load paid before a contractive tail reserve. The
compiler yields an adapted cost, contraction rate, and retained tail slope.
Score-modeling and solver residuals are treated as forcing inputs and propagate
additively in the adapted Wasserstein distance. Quadratic Wasserstein error is
reported only at terminal time, using the retained tail slope with tail, moment,
or support information. Gaussian-smoothed denoising geometry supplies
inverse-radius profiles for uniformly dissipative, bounded-amplitude, and
common-covariance mixture windows. Fixed-height examples show that adverse
height, even with eventual reserve, does not determine the certificate; barrier
examples show that the load dependence is structural.
\end{abstract}
\section{Introduction}
\label{sec:introduction}

Score-based diffusion models are now a standard framework for generative modeling
\citep{sohl_dickstein2015nonequilibrium,song2019score,ho2020ddpm,song2021sde}.
Their nonasymptotic analyses usually decompose error into score estimation,
reverse-time propagation, discretization, and terminal conversion
\citep{chen2022sampling,lee2023general,chen2023improved,chen2023pfode_fast,
gao2024pfode,benton2024nearly,gao2025general_w2,wang2024gaussian_tail,
li2025odt,yu2025discretization,chen2026high_accuracy,pfarr2026higher_order}.
We study the continuous propagation module for scalar-isotropic reverse SDEs:
treating score and solver error envelopes as forcing inputs, we ask which certified feature of the
learned reverse dynamics controls whether mismatch is amplified or dissipated.

A widely used way to close this propagation step is to summarize the learned
drift by a global stability scale, such as a Lipschitz, Hessian, semiconvexity,
or dissipativity constant
\citep{chen2022sampling,chen2023improved,lee2023general,gao2025general_w2,bruno2025semiconvex}.
These summaries are natural and often effective when small and verifiable. Our contribution is a load--reserve radial stability certificate: the learned drift may be
locally expansive if the expansive radial region has finite width-weighted load before a
contractive tail reserve.

The certificate is compiled into the metric used for propagation. Reflection
coupling reduces learned-flow stability to a one-dimensional slope budget: a
concave transportation cost spends derivative while crossing the adverse core and
then retains an affine tail. The theorem propagates score-modeling and solver
residuals in this adapted cost, rather than requiring a direct \(\Wtwo^2\)
propagation estimate. Quadratic Wasserstein error is reported only at terminal
time, using the retained affine-tail slope together with tail, moment, or support
information.

Gaussian smoothing supplies structured windows where such certificates can be
computed. The exact score is a Gaussian inward pull plus a smoothed denoiser; when
the pull dominates pairwise denoiser expansion, the learned drift has an
inverse-radius load--reserve profile. This covers uniformly dissipative windows,
bounded-amplitude transfers, and common-covariance Gaussian-mixture smoothing
windows. Thus analytic or verified learned-flow profiles can be translated into
propagation rates and terminal reporting slopes.

\begin{figure*}[t]
\centering
\begin{tikzpicture}[
    >=Latex,
    font=\footnotesize,
    line cap=round,
    line join=round,
    every node/.style={inner sep=1.0pt},
    panel/.style={draw=gray!55, rounded corners=2.2pt, fill=gray!2},
    axis/.style={gray!65, ->},
    title/.style={font=\bfseries\footnotesize},
    soft/.style={gray!75}
]

\def\Wtop{6.00}
\def\Htop{2.85}
\def\Wbot{8.40}
\def\Hbot{3.00}

\begin{scope}[shift={(0,3.55)}]
  \draw[panel] (0,0) rectangle (\Wtop,\Htop);
  \node[anchor=west,title] at (0.16,\Htop-0.18)
    {A. height loses width};

  \draw[axis] (0.50,0.58) -- (5.60,0.58);
  \draw[axis] (0.60,0.42) -- (0.60,2.35);
  \node[anchor=south west,soft] at (0.74,2.14) {$\bar\kappa(r)$};
  \node[anchor=north east,soft] at (5.55,0.50) {$r$};

  \draw[gray!55,dashed] (0.82,1.72) -- (5.35,1.72);
  \draw[gray!45,dashed] (0.82,0.88) -- (5.35,0.88);

  \fill[orange!25]
    (1.05,1.72)
      .. controls (1.15,1.38) and (1.18,1.05) .. (1.30,0.88)
      -- (1.80,0.88)
      .. controls (1.92,1.05) and (1.95,1.38) .. (2.05,1.72)
      -- cycle;
  \draw[orange!85!black,thick]
    (0.90,1.72)
      .. controls (1.10,1.72) and (1.15,1.05) .. (1.30,0.88)
      -- (1.80,0.88)
      .. controls (1.95,1.05) and (2.00,1.72) .. (2.20,1.72);

  \fill[purple!18]
    (2.80,1.72)
      .. controls (2.95,1.25) and (3.05,0.95) .. (3.25,0.88)
      -- (4.95,0.88)
      .. controls (5.15,0.95) and (5.25,1.25) .. (5.40,1.72)
      -- cycle;
  \draw[purple!75!black,thick]
    (2.60,1.72)
      .. controls (2.95,1.72) and (3.05,0.95) .. (3.25,0.88)
      -- (4.95,0.88)
      .. controls (5.15,0.95) and (5.25,1.72) .. (5.50,1.72);

  \draw[<->,gray!70] (2.32,1.72) -- (2.32,0.88);
  \node[gray!70,anchor=west] at (2.38,1.22) {$B$};
  \node[orange!85!black] at (1.52,0.66) {small $A_R$};
  \node[purple!75!black] at (4.10,0.66) {large $A_R$};
  \node[soft] at (3.48,2.22) {same height, different load};
\end{scope}

\begin{scope}[shift={(6.55,3.55)}]
  \draw[panel] (0,0) rectangle (\Wtop,\Htop);
  \node[anchor=west,title] at (0.16,\Htop-0.18)
    {B. reflection reduces to the radius};

  \fill[blue!75!black] (1.15,0.92) circle (1.8pt);
  \fill[red!75!black]  (4.90,1.98) circle (1.8pt);
  \node[blue!75!black,below left] at (1.15,0.92) {$x$};
  \node[red!75!black,above right] at (4.90,1.98) {$y$};

  \draw[black!80,thick]
    (1.15,0.92) -- node[midway, above=4pt, sloped, fill=white, inner sep=1.2pt]
    {$r=\|x-y\|$} (4.90,1.98);

  \draw[->,black!70] (1.72,1.08) -- (0.98,1.36);
  \draw[->,black!70] (4.34,1.82) -- (5.08,1.54);

  \node[soft] at (3.02,2.34) {$e=(x-y)/\|x-y\|$};
  \node[align=center] at (3.02,0.76)
    {$\langle \widehat b_t(x)-\widehat b_t(y),\, e\rangle$};
\end{scope}

\begin{scope}[shift={(2.05,0)}]
  \draw[panel] (0,0) rectangle (\Wbot,\Hbot);
  \node[anchor=west,title] at (0.16,\Hbot-0.18)
    {C. certified radial profile: load before reserve};

  \draw[axis] (0.55,1.48) -- (7.85,1.48);
  \draw[axis] (0.68,0.42) -- (0.68,2.50);
  \node[anchor=south west,soft] at (0.80,2.15) {$\bar\kappa(r)$};
  \node[anchor=north east,soft] at (7.78,1.40) {$r$};

  \draw[gray!55] (4.90,0.62) -- (4.90,2.35);
  \node[fill=white] at (4.90,0.84) {$R$};

  \draw[red!70!black,dashed,thick] (5.05,2.00) -- (7.55,2.00);
  \node[red!70!black,anchor=west] at (5.58,2.16) {$m_R$};

  \fill[orange!28]
    (1.60,1.48)
      .. controls (1.72,1.34) and (1.78,1.28) .. (1.82,1.24)
      .. controls (1.96,1.06) and (2.12,0.86) .. (2.28,0.76)
      .. controls (2.42,0.70) and (2.58,0.79) .. (2.70,0.88)
      .. controls (2.84,0.95) and (2.98,1.01) .. (3.08,1.06)
      .. controls (3.20,1.12) and (3.36,1.20) .. (3.52,1.28)
      .. controls (3.62,1.34) and (3.72,1.41) .. (3.79,1.48)
    -- cycle;

  \draw[teal!70!black,thick,smooth]
    plot coordinates {
      (0.95,1.88) (1.38,1.72) (1.82,1.24) (2.28,0.76) (2.70,0.88)
      (3.08,1.06) (3.52,1.28) (3.95,1.60) (4.42,1.90)
      (4.90,1.97) (5.80,1.99) (7.20,2.00)
    };

  \node[orange!85!black] at (2.50,1.10) {$A_R$};
  \node[soft] at (2.68,0.58) {adverse core};
  \node[soft] at (6.30,0.82) {contractive tail};
\end{scope}

\draw[gray!55,->,thick] (6.00,3.10) -- (6.00,2.98);
\draw[gray!55,->,thick] (7.10,3.10) -- (7.10,2.98);

\end{tikzpicture}
\caption{Core intuition. (\textbf{A}) A scalar adverse-height summary can miss
how much radial width a concave metric must cross. (\textbf{B}) Under same-drift
isotropic reflection coupling of learned copies, an increasing radial cost sees
the learned drift through its projection onto the separation direction.
(\textbf{C}) The certified radial profile is compressed into adverse load
\(A_R\) before a cutoff and contractive reserve \(m_R\) beyond it; the affine
tail retains slope for terminal \(\Wtwo^2\) reporting.}
\vspace{-0.4em}\label{fig:intro-intuition}
\end{figure*}

Figure~\ref{fig:intro-intuition} summarizes the mechanism. The following
window-level statement is the main propagation bound in informal form.

\begin{theorem}[Informal]
On one certified scalar-isotropic window \([0,\tau]\), let \(\mu_t\) be the
ideal law and \(\widetilde\mu_t\) the implemented law. Assume
\(\sigma_t\in[\sigma_-,\sigma_+]\), \(\sigma_->0\), and let the learned drift have
profile \(\bar\kappa\). For a cutoff \(R>0\), set
\[
A_R:=\int_0^R[-\bar\kappa(r)]_+r\,dr,
\qquad
m_R:=\essinf_{r\ge R}\bar\kappa(r).
\]
If \(A_R<\infty\) and \(m_R>0\), then there is an explicit increasing concave
cost \(\phi_R\), affine on \([R,\infty)\), such that
\[
\mathsf W_{\phi_R}(\widetilde\mu_\tau,\mu_\tau)
\lesssim
 e^{-\rho_R\tau}\Wtwo(\widetilde\mu_0,\mu_0)
+
\int_0^\tau e^{-\rho_R(\tau-t)}(\eta_t+\zeta_t)\,dt,
\]
where \(\eta_t\) is the score-modeling residual, \(\zeta_t\) is the solver
residual, and
\[
\rho_R\gtrsim
\left(m_R\wedge\frac{\sigma_-^2}{R^2}\right)
\exp\!\left\{-\frac{A_R}{2\sigma_-^2}\right\}.
\]
A terminal reporting step converts the affine-tail bound into \(\Wtwo^2\), using
the tail slope of \(\phi_R\) and terminal tail, moment, or support data. See
\Cref{thm:affine-tail-metric} and \Cref{thm:main-certificate},
\Cref{cor:terminal-w2-reporting}, and \Cref{prop:window-composition}.
\end{theorem}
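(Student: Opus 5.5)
The plan follows Eberle's reflection-coupling strategy, adapted to an affine-tail cost. The argument has four steps: build the cost, couple, close a Gronwall estimate, and report in \(\Wtwo^2\) at the terminal time.

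\textbf{Step 1: the cost.} Set \(f(r)=[-\bar\kappa(r)]_+\) on \([0,R]\). Define the weight
\[
\varphi(r)=\exp\Big\{-\tfrac{1}{2\sigma_-^2}\int_0^{r\wedge R} s\,f(s)\,ds\Big\},
\]
so that \(\varphi\ge e^{-A_R/(2\sigma_-^2)}\). Next take \(g\), decreasing from \(1\) to \(1/2\) on \([0,R]\) and constant beyond \(R\); an Eberle-type choice is \(g(r)=1-\tfrac12\int_0^{r}\Phi/\varphi\big/\int_0^R\Phi/\varphi\), with \(\Phi(r)=\int_0^r\varphi\). Finally put \(\phi_R(r)=\int_0^r\varphi g\). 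This \(\phi_R\) is increasing and concave, and it is affine on \([R,\infty)\) with slope \(\varphi(R)/2\). It is comparable to \(r\) with constants
\[
e^{-A_R/(2\sigma_-^2)}/2\le\phi_R'\le1.
\]

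\textbf{Step 2: coupling.} Couple the implemented and ideal reverse SDEs by reflection while \(X_t\ne Y_t\), and synchronously afterwards. The key reduction is to split each drift difference as
\[
\widehat b(X)-\widehat b(Y)+\text{residuals}.
\]
The learned-drift part, projected on \(e=(X-Y)/\|X-Y\|\), is bounded by \(-\bar\kappa(r)r\) by definition of the certified profile. The score residual \(\eta_t\) and the solver residual \(\zeta_t\) enter additively, multiplied by \(\phi_R'\le1\). Itô's formula for \(r_t=\|X_t-Y_t\|\) then gives
\[
d\phi_R(r_t)\le\Big[2\sigma_t^2\phi_R''(r_t)-\bar\kappa(r_t)\,r_t\,\phi_R'(r_t)\Big]dt+(\eta_t+\zeta_t)\,dt+dM_t.
\]
Using \(\sigma_t^2\ge\sigma_-^2\) and \(\phi_R''\le0\), we may replace \(\sigma_t\) by \(\sigma_-\). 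If the paper normalizes the diffusion differently, the factor \(2\) or \(4\) is absorbed into \(\varphi\).

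\textbf{Step 3: the drift inequality.} We need \(2\sigma_-^2\phi''-\bar\kappa r\phi'\le-\rho_R\phi\) for all \(r\).
\begin{itemize}
\item On \([0,R]\): \(\varphi'=-\tfrac{rf}{2\sigma_-^2}\varphi\) cancels the adverse term exactly. What remains is \(2\sigma_-^2\varphi g'\), which the choice of \(g\) makes at most \(-\rho\Phi\le-\rho\phi\). This yields \(\rho\gtrsim\sigma_-^2/\big(R^2 e^{A_R/(2\sigma_-^2)}\big)\), since \(\int_0^R\Phi/\varphi\le R^2e^{A_R/(2\sigma_-^2)}/2\).
\item On \([R,\infty)\): \(\phi''=0\) and \(-\bar\kappa r\phi'\le-m_Rr\phi'\). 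Because \(\phi(r)\le r\) while \(\phi'=\varphi(R)/2\), this is at most \(-m_R e^{-A_R/(2\sigma_-^2)}\phi/2\).
\end{itemize}
Taking the minimum of the two gives the stated \(\rho_R\).

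\textbf{Step 4: Gronwall and reporting.} Taking expectations, the martingale is localized away and we get
\[
\tfrac{d}{dt}\E\phi_R(r_t)\le-\rho_R\E\phi_R(r_t)+\eta_t+\zeta_t .
\]
Gronwall's inequality yields the bound, with initial term \(\E\phi_R(r_0)\le\E r_0\le\Wtwo(\widetilde\mu_0,\mu_0)\) under the optimal initial coupling. For the \(\Wtwo^2\) report, apply \(r\le 2e^{A_R/(2\sigma_-^2)}\phi_R(r)\) and split \(r^2\) at a truncation level controlled by the terminal tail, moment, or support data.

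The main obstacle is Step 3, specifically matching the core part with the tail at \(r=R\). The ODE for \(g\) has to give a uniform rate without destroying concavity or the affine tail. I would first try exactly Eberle's construction with the integral cut at \(R\), and check the junction by hand. A further difficulty is making the residual forcing rigorous when \(\eta,\zeta\) are given only as \(L^1\)/\(L^2\) envelopes. I would handle this by defining them as \(\E\|\text{drift error}\|\) along the coupling, which suffices because \(\phi_R'\le1\).
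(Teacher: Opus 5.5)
\textbf{Verdict: genuine gap.} Your cost construction is fine, but the coupling in Step~2 does not give the differential inequality you use in Step~4 beyond the first meeting time.

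\textbf{What works.} Steps~1 and~3 are correct. You use Eberle's multiplicative factor $g$, whereas the paper bends the exponent by the constant $b=\sigma^2/R$, i.e.\ $\varphi_R'(r)=\exp\{-\int_0^{r\wedge R}(q(u)u+b)/(2\sigma^2)\,du\}$. Both give $\rho_R\gtrsim(m_R\wedge\sigma_-^2/R^2)e^{-A_R/(2\sigma_-^2)}$ and a tail slope of order $e^{-A_R/(2\sigma_-^2)}$. The junction at $r=R$ that you flag as the main obstacle is harmless, since your $\phi_R'=\varphi g$ is continuous there.

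\textbf{The gap: the diagonal in Step~2.} You reflection-couple the implemented process (drift $\widehat b_t+u^{\rm sol}_t$) directly with the ideal one (drift $b_t=\widehat b_t-e_t$). For different drifts the diagonal is not absorbing: at a meeting time the drift difference is $u^{\rm sol}_t+e_t(Y_t)\neq0$, so the pair separates again. Neither continuation works.
\begin{itemize}
\item If you stay synchronous after meeting, the term $2\sigma_t^2\phi_R''$ disappears and nothing beats the adverse core. With $\widehat b_t(x)=Bx$ locally and a constant residual $v$, the separation is $v(e^{Bt}-1)/B$.
\item If you reflect whenever $X_t\neq Y_t$, then $r_t$ is a one-dimensional diffusion with noise $2\sigma_t\,d\beta_t$ reflected at $0$. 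Tanaka's formula then adds $\phi_R'(0+)\,d\ell^0_t$, where $\ell^0$ is the local time of $r$ at $0$. This term is positive and not controlled by the residuals. For example, if $\widehat b_t$ is locally constant and the residual is a constant of size $\epsilon$, then from $X_0=Y_0$ one gets $\E r_t$ of order $\sigma\sqrt t$ for small $t$, not $O(\epsilon t)$.
\end{itemize}
So your differential inequality, and hence the Gronwall step, hold only up to the first meeting time.

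\textbf{The missing idea.} Never reflect two processes with different drifts. The paper does this in two steps.
\begin{itemize}
\item It reflects only two copies of the same learned diffusion, which may be coalesced after meeting. This gives contraction of $\widehat P_{s,t}$ in $\mathsf W_{\phi_R}$ (\Cref{lem:reflection-radial-comparison}).
\item It then inserts the residuals by duality (\Cref{lem:contracted-flow-perturbation-app}). For $f$ that is $1$-Lipschitz for $d_{\phi_R}$, the function $\psi_t=\widehat P_{t,T}f$ is $e^{-\rho_R(T-t)}$-Lipschitz for $d_{\phi_R}$. Since $\phi_R(r)\le r$, this gives $\|\nabla\psi_t\|\le e^{-\rho_R(T-t)}$. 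Differentiating $\widetilde\mu_t(\psi_t)-\mu_t(\psi_t)$ via the backward equation yields exactly the forcing integral, with the path-dependent solver residual entering through $\E[u^{\rm sol}_t\mid X_t]$.
\end{itemize}

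\textbf{Alternative repair.} If you want to keep a single coupling, use the Eberle--Guillin--Zimmer mixed coupling: full reflection for $r\ge\delta$, synchronous for $r\le\delta/2$, and Lipschitz interpolation in between. Then $r_t$ carries no noise near $0$, so no local time appears. On $\{r_t<\delta\}$ the drift of $\phi_R(r_t)$ is at most $-\rho_R\phi_R(r_t)+(L+\rho_R)\delta+\|u^{\rm sol}_t\|+\|e_t(Y_t)\|$, where $L$ is the global Lipschitz constant of $\widehat b_t$ from \Cref{ass:standing-window}. Applying Gronwall and then letting $\delta\downarrow0$ recovers \eqref{eq:wphi-propagation-main}.
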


\paragraph{Contributions.}
\begin{itemize}[leftmargin=1.25em,itemsep=1pt,topsep=2pt,parsep=0pt,partopsep=0pt]
    \item \emph{Forcing--stability--reporting interface.}
    We separate the inputs in reverse-SDE error propagation:
score-modeling and solver residuals are forcing terms, a certified pairwise radial
profile is the learned-flow stability input, and \(\Wtwo^2\) is recovered only by terminal
reporting.

    \item \emph{Load--reserve affine-tail compiler.}
    Reflection radialization, Hardy capacity, and
an explicit slope budget compile \((A_R,m_R,R)\) into
\((\varphi_R,\rho_R,a_R)\): an adapted-cost contraction rate, a retained tail slope
for terminal reporting, and a metric-level window-composition rule.

    \item \emph{Certification and necessity.}
     Gaussian-smoothed denoising geometry gives
inverse-radius certificates on structured windows, while fixed-height and barrier
examples show that height-only summaries cannot recover the load--reserve
certificate.
\end{itemize}

\FloatBarrier

\subsection{Technique overview}
\label{subsec:tech-overview}
The proof follows the forcing--stability--reporting split: construct the learned-flow metric, transport
residual inputs through it, and convert to \(\Wtwo^2\) only at terminal time.
Scalar isotropy is used in the same-drift reflection step. For two learned copies
coupled by reflection until meeting and synchronously afterwards, the separation
radius \(r_t\) obeys the radial comparison
\[
  d\phi(r_t)
  \le
  \bigl\{2\sigma_-^2\phi''(r_t)
        -\bar\kappa(r_t)r_t\phi'(r_t)\bigr\}\,dt+dM_t
  = L_{\bar\kappa,\sigma_-}\phi(r_t)\,dt+dM_t .
\]
Thus any increasing concave cost satisfying
\(L_{\bar\kappa,\sigma_-}\phi\le -\rho\phi\) contracts the learned semigroup in
\(\mathsf W_\phi\). In the fixed-semigroup limit this recovers the Eberle-type contraction
mechanism; here it serves as the stability engine inside a sampler propagation
interface. The diagonal is absorbing because reflection is used only between two
copies of the same learned diffusion, which are coalesced after meeting.

For fixed \(\bar\kappa\), the remaining problem is one-dimensional coercivity.
Writing
\[
\Phi_{\kappa,\sigma}(r)
=\exp\!\left\{-\frac{1}{2\sigma^2}\int_0^r\kappa(u)u\,du\right\},
\qquad
M_\sigma(\kappa)
=\sup_{r>0}\left(\int_0^r\Phi^{-1}\right)
\left(\int_r^\infty\Phi\right),
\]
the one-sided Hardy inequality gives \(\lambda_H(\kappa,\sigma)\asymp
\sigma^2/M_\sigma(\kappa)\). The load--reserve theorem then bounds this capacity
using only
\[
A_R=\int_0^R[-\kappa(r)]_+r\,dr,
\qquad
m_R=\essinf_{r\ge R}\kappa(r),
\]
which yields the affine-tail certificate
\[
\rho_R\gtrsim
\left(m_R\wedge\frac{\sigma^2}{R^2}\right)e^{-A_R/(2\sigma^2)},
\qquad
 a_R\ge e^{-A_R/(2\sigma^2)-1/2}.
\]
The rate \(\rho_R\) is used for adapted-cost propagation; the surviving slope
\(a_R\) records how much large-distance transportation cost remains available for
terminal reporting.

The perturbation step uses this certificate without rebuilding geometry. A
Duhamel/duality argument gives
\[
  \mathsf W_{\phi_R}(\widetilde\mu_T,\mu_T)
  \le
  e^{-\rho_RT}\mathsf W_{\phi_R}(\widetilde\mu_0,\mu_0)
  +
  \int_0^T e^{-\rho_R(T-t)}(\eta_t+\zeta_t)\,dt .
\]
Here the additive appearance of \(\zeta_t\) is a statement about the final
perturbation inequality: local geometry, stiffness, or step-size information may
enter the solver envelope before it is inserted. Terminal tails, moments, or
bounded support are used only afterward to convert the affine-tail estimate into
a \(\Wtwo^2\) report. Profile supply is separate from propagation; the main text
uses the analytic Gaussian-smoothing route, while compact and localized
verification routes are deferred to Appendix~\ref{app:auxiliary-diffusion}.

\section{Related work}
\label{sec:related-work}

We organize related work by the object used to close the continuous propagation
step. Score-estimation bounds, solver analyses, and tail or functional-inequality
estimates can supply forcing, discretization, and terminal reporting inputs. Our
module takes these inputs together with a certified learned-drift radial profile
and transports mismatch in the affine-tail cost determined by that profile.

\paragraph{Global summaries and modular diffusion analyses.}
Many direct \(\Wtwo\)-type guarantees close reverse-time propagation by
controlling the learned drift through a global derivative summary, such as a
Lipschitz constant, Hessian bound, one-sided semiconvexity parameter, or uniform
dissipativity
\citep{chen2022sampling,chen2023improved,lee2023general,gao2025general_w2,bruno2025semiconvex}.
Such summaries are useful when small and verifiable; in our language, uniform
dissipativity is the zero-load case. The load--reserve certificate is
complementary: when a pairwise profile is available, it retains the width and
placement of adverse radial regions instead of compressing them into one scalar
scale. KL/Girsanov, weak-regularity, score-estimation, solver, and
terminal-conversion results may supply the inputs that the profile-based
propagation theorem transports.

\paragraph{Reflection coupling and adapted contraction.}
Reflection coupling is classical
\citep{lindvall1986reflection,chen1989coupling}, and Eberle-type methods
construct concave contraction metrics from radial curvature profiles
\citep{eberle2016reflection,eberle2019couplings}; related geometry-adapted
contraction ideas include non-log-concave Langevin and matrix-metric approaches
\citep{majka2020nonasymptotic,monmarche2023almost}. We use this
reflection-radialization and concave-cost machinery as one component of a sampler
error-propagation interface for learned reverse SDEs. Classical radial-coupling
results prove contraction of a fixed Markov semigroup in an adapted metric. Here
the learned drift supplies the profile used to build the metric, ideal--learned
score mismatch and numerical residuals enter later as residual inputs, and the
affine tail is retained for terminal \(\Wtwo^2\) reporting. Additional
comparisons with Eberle-type contraction, modular \(W_2\) decompositions, solver
analyses, weak-regularity results, and certification interfaces are deferred to
Appendix~\ref{app:additional-related-work}.

\section{Propagation inputs and radial coercivity}
\label{sec:preliminaries}

This section defines the forcing residual and the certified radial profile used
by the theorem, and then introduces the one-dimensional Hardy object used in the
proof. Forcing is measured under the ideal law, while geometry is a pairwise certificate for the learned drift.

\subsection{Reverse diffusion and the forcing residual}

We write sampler time as \(t\in[0,T]\) and noise time as \(s=T-t\). By the standard time-reversal formulas for diffusions \citep{anderson1982reverse,haussmann1986time,song2021sde}, the ideal and learned reverse processes are written abstractly as
\begin{equation}
  dY_t=b_t(Y_t)\,dt+\sigma_t\,dB_t,
  \qquad
  d\widehat Y_t=\widehat b_t(\widehat Y_t)\,dt+\sigma_t\,d\widehat B_t,
  \label{eq:reverse-sdes}
\end{equation}
with laws \(\mu_t\) and \(\widehat\mu_t\). Here \(\widehat\mu_t\) is the
continuous learned reverse law without solver forcing. Later, after a discrete
sampler is embedded in continuous time, \(\widetilde\mu_t\) denotes the
implemented law with solver residuals. In a score-based diffusion model,
\[
  b_t(x)=F_s(x)+g(s)^2\nabla\log p_s(x),
  \qquad
  \widehat b_t(x)=F_s(x)+g(s)^2s_\theta(x,s),
  \qquad s=T-t,
\]
where \(F_s\) denotes the schedule-known non-score drift contribution and \(g(s)=\sigma_t\). In the standard VE/VP/sub-VP affine schedules this contribution is linear, but the propagation theorem below does not use linearity of \(F_s\); the decomposition only identifies the theorem-level drift mismatch with score-modeling error. Hence
\[
  e_t(x):=\widehat b_t(x)-b_t(x)
  =
  g(s)^2\{s_\theta(x,s)-\nabla\log p_s(x)\}.
\]

\begin{definition}[Score-modeling residual]
\label{def:score-modeling-residual}
The additive forcing in the propagation estimate is the \(L^2(\mu_t)\) size of this mismatch:
\begin{equation}
  \eta_t:=\left(\int \|e_t(x)\|^2\,\mu_t(dx)\right)^{1/2}.
  \label{eq:eta-def}
\end{equation}
\end{definition}

When \(Y_t\sim p_{T-t}\), \(\eta_t\) is the schedule-weighted population score-modeling discrepancy at noise time \(s=T-t\). In general, \eqref{eq:eta-def} is the theorem-level forcing term under the ideal law, regardless of how it is estimated. It measures injected learned-score error; it does not determine whether the learned reverse drift contracts or expands pairwise separations.

\begin{assumption}[Certified nondegenerate isotropic window]
\label{ass:standing-window}
On each certified time window,
\begin{equation}
  0<\sigma_-\le \sigma_t\le \sigma_+<\infty .
  \label{eq:noise-window}
\end{equation}
For the clean main statement, \(b_t\) and \(\widehat b_t\) are globally Lipschitz on the window, have at most linear growth, and the laws have the moments required by the displayed bounds.
\end{assumption}

The lower bound \(\sigma_->0\) is structural: concavity gives \(2\sigma_t^2\varphi''\le2\sigma_-^2\varphi''\), so the learned-flow reflection contraction is controlled by \(L_{\bar\kappa,\sigma_-}\), and the Hardy/load--reserve rate is evaluated at the same diffusion scale. The Lipschitz, growth, and moment assumptions are only regularity assumptions for the clean statement; they are not propagation multipliers except insofar as they may be used in a separate profile or solver-residual certificate. Appendix~\ref{app:reflection-coupling} handles reflection, concave-cost regularization, and the coalescing diagonal argument for the learned semigroup; Appendix~\ref{app:propagation-reporting} handles the perturbation step that inserts score and solver forcing. Terminal tail assumptions are separate reporting assumptions.

\subsection{The certified radial profile}

The geometric quantity is pairwise radial contraction. For a vector field \(u:\R^d\to\R^d\), define
\begin{equation}
  \kappa_u(r):=
  \inf_{\|x-y\|=r}
  -\frac{\langle u(x)-u(y),x-y\rangle}{\|x-y\|^2},
  \qquad r>0.
  \label{eq:kappa-u-def}
\end{equation}
Positive values mean contraction of all pairs at radius \(r\) in the separation direction; negative values allow radial expansion. 

\begin{definition}[Certified lower profile and load--reserve data]
\label{def:certified-profile}
Let \(I\subset[0,T]\). A Borel function \(\bar\kappa_I:(0,\infty)\to\R\) is a certified lower radial profile for the learned drift on \(I\) if, for a.e. \(t\in I\) and every \(x\ne y\),
\begin{equation}
  \langle \widehat b_t(x)-\widehat b_t(y),x-y\rangle
  \le
  -\bar\kappa_I(\|x-y\|)\|x-y\|^2.
  \label{eq:certified-profile}
\end{equation}
For \(R>0\), define
\begin{equation}
  A_R(I):=\int_0^R[-\bar\kappa_I(r)]_+\,r\,dr,
  \qquad
  m_R(I):=\essinf_{r\ge R}\bar\kappa_I(r).
  \label{eq:load-reserve-data}
\end{equation}
We say that \((I,R,A,m)\) is profile-admissible if \(A_R(I)\le A<\infty\) and \(m_R(I)\ge m>0\).
\end{definition}

The pairwise form in \eqref{eq:certified-profile} is what the coupling argument needs; purely radius-wise a.e. statements are insufficient without a Borel minorant valid for all pairs. Analytic bounds, interval or verified-network bounds, and deterministic compact
finite covers can provide such conservative minorants; empirical binning is a
useful diagnostic for locating candidate load--reserve windows.

\subsection{The radial Hardy object}

Once a certified profile is fixed, reflection coupling leaves a one-dimensional radial generator. The remaining question is coercivity: can an increasing concave cost spend enough slope to cross the adverse core and still retain positive affine slope in the tail?

The quantities in this subsection are proof-scale objects. They identify the one-dimensional bottleneck left by reflection. The theorem-level stability input remains the load--reserve data \((A_R,m_R,R)\), and the object actually propagated is the affine-tail cost constructed from that data.

Let \(\sigma>0\) and assume that \(r\mapsto\kappa(r)r\) is locally integrable at the origin and away from it, equivalently
\[
  \int_0^a |\kappa(u)|u\,du<\infty
  \qquad\text{for every }a<\infty .
\]
We write this as \(\kappa(r)r\in L^1_{\mathrm{loc}}([0,\infty))\). Define
\begin{equation}
\label{eq:radial-hardy-objects}
\begin{gathered}
L_{\kappa,\sigma}\varphi(r):=
2\sigma^2\varphi''(r)-\kappa(r)r\varphi'(r),
\qquad
\Phi_{\kappa,\sigma}(r):=
\exp\!\left[-\frac{1}{2\sigma^2}
\int_0^r\kappa(u)u\,du\right],\\[-0.25em]
M_\sigma(\kappa):=
\sup_{r>0}
\Bigl(\int_0^r\Phi_{\kappa,\sigma}(s)^{-1}\,ds\Bigr)
\Bigl(\int_r^\infty\Phi_{\kappa,\sigma}(s)\,ds\Bigr).
\end{gathered}
\end{equation}
The weak Sturm--Liouville identity
\[
  L_{\kappa,\sigma}\varphi
  =
  \frac{2\sigma^2}{\Phi_{\kappa,\sigma}}
  (\Phi_{\kappa,\sigma}\varphi')'
\]
shows that \(M_\sigma(\kappa)\) is the one-sided Muckenhoupt Hardy bottleneck for functions vanishing at the origin \citep{muckenhoupt1972hardy}. The corresponding Hardy scale is
\begin{equation}
  \lambda_{\mathrm H}(\kappa,\sigma)
  :=
  \inf_f
  \frac{2\sigma^2\int_0^\infty |f'(r)|^2\Phi_{\kappa,\sigma}(r)\,dr}
       {\int_0^\infty f(r)^2\Phi_{\kappa,\sigma}(r)\,dr},
  \label{eq:lambda-h-def}
\end{equation}
where the infimum is over nonzero functions \(f\in AC_{\mathrm{loc}}([0,\infty))\) such that \(f(0)=0\) and \(f\equiv0\) on \([L,\infty)\) for some finite \(L\). For an increasing concave cost \(\varphi\) with \(\varphi(0)=0\), set
\[
  \mathsf W_\varphi(\nu,\xi)
  :=\inf_{\pi\in\Pi(\nu,\xi)}\int\varphi(\|x-y\|)\,\pi(dx\,dy).
\]
We use \(\mathsf W_\varphi\) on pairs for which the displayed integral is finite. In the main theorem this follows from \(\Wtwo(\widetilde\mu_0,\mu_0)<\infty\) at the initial time and from \(\varphi_R(r)\le r\) throughout propagation.
All constants \(c,C\) are universal unless explicitly stated otherwise.
\section{Main results: adapted propagation and $\Wtwo^2$ reporting}
\label{sec:main-results}

We now separate construction of the propagation certificate from its use.  The
object used by the main theorem is the affine-tail triple
\((\varphi_R,\rho_R,a_R)\): the cost propagated by the learned flow, its
contraction rate, and the tail slope retained for terminal \(\Wtwo^2\) reporting.
Reflection and Hardy capacity turn a radial profile into load--reserve data
\((A_R,m_R,R)\), and the load--reserve construction compiles these data into
\((\varphi_R,\rho_R,a_R)\).  Once this certificate is available, score and solver
residuals are transported as additive inputs.

\subsection{Reflection exposes learned-semigroup contraction}

Scalar isotropy is used only in this step. For two copies of the same learned diffusion, reflection makes the difference noise purely radial and removes the transverse It\^o term. The learned drift is then seen only through its projection onto the separation direction. After the two copies meet, they are coupled synchronously so the diagonal is absorbing; Appendix~\ref{app:reflection-coupling} gives the diagonal and nonsmooth-cost details.

\begin{lemma}[Reflection contraction for the learned semigroup]
\label{lem:reflection-radial-comparison}
Consider two copies of the learned diffusion
\[
  dX_t=\widehat b_t(X_t)\,dt+\sigma_t\,dB^X_t,
  \qquad
  dY_t=\widehat b_t(Y_t)\,dt+\sigma_t\,dB^Y_t
\]
on a window satisfying \Cref{ass:standing-window}. Let $\bar\kappa$ be a certified lower profile for $\widehat b_t$ on this window, and let $\varphi$ be increasing, concave, \(1\)-Lipschitz, and satisfy \(\varphi(0)=0\). Under reflection coupling until the meeting time and synchronous coupling afterwards, for $r_t=\|X_t-Y_t\|$,
\[
  d\varphi(r_t)
  \le
  L_{\bar\kappa,\sigma_-}\varphi(r_t)\,dt+dM_t
\]
on the pre-coupling interval, up to localization and concave-cost approximation. In particular, if
\[
  L_{\bar\kappa,\sigma_-}\varphi\le -\rho\varphi
  \qquad\text{a.e. on }(0,\infty),
\]
then the learned evolution family \(\widehat P_{s,t}\) satisfies
\[
  \mathsf W_\varphi(\nu\widehat P_{s,t},\xi\widehat P_{s,t})
  \le
  e^{-\rho(t-s)}\mathsf W_\varphi(\nu,\xi)
  \qquad(0\le s\le t)
\]
for all probability laws for which the displayed costs are finite.
\end{lemma}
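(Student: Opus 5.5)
We construct the reflection coupling explicitly and apply It\^o's formula to \(\varphi(r_t)\). Set \(Z_t=X_t-Y_t\), \(e_t=Z_t/\|Z_t\|\) on \(\{Z_t\ne0\}\), and couple by \(dB^Y_t=(I-2e_te_t^\top)\,dB^X_t\) up to the meeting time \(T_c=\inf\{t:Z_t=0\}\), with \(B^Y=B^X\) afterwards. L\'evy's characterization shows \(B^Y\) is a Brownian motion, since \(I-2ee^\top\) is orthogonal. The coupled system has a strong solution up to \(T_c\): the coefficients are locally Lipschitz away from the diagonal and the drifts are globally Lipschitz by \Cref{ass:standing-window}. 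After \(T_c\) synchronous coupling with the same drift keeps \(Z\equiv0\) by pathwise uniqueness, so the diagonal is absorbing.

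On \([0,T_c)\) we have \(dZ_t=(\widehat b_t(X_t)-\widehat b_t(Y_t))\,dt+2\sigma_t e_t\,dW_t\), where \(W_t=\int e_s^\top dB^X_s\) is a one-dimensional Brownian motion. The noise is purely radial, so It\^o's formula for \(r_t=\|Z_t\|\) has no transverse term \((d-1)/(2r)\). This is the key cancellation, and it gives
\[
dr_t=\langle \widehat b_t(X_t)-\widehat b_t(Y_t),e_t\rangle\,dt+2\sigma_t\,dW_t
\le -\bar\kappa(r_t)r_t\,dt+2\sigma_t\,dW_t,
\]
where the inequality uses the pairwise certificate \eqref{eq:certified-profile}. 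For \(\varphi\in C^2\), increasing and concave,
\[
d\varphi(r_t)=\varphi'(r_t)\,dr_t+2\sigma_t^2\varphi''(r_t)\,dt.
\]
Since \(\varphi'\ge0\), we may substitute the drift bound. Since \(\varphi''\le0\) and \(\sigma_t\ge\sigma_-\), we have \(2\sigma_t^2\varphi''\le2\sigma_-^2\varphi''\). Together these give \(d\varphi(r_t)\le L_{\bar\kappa,\sigma_-}\varphi(r_t)\,dt+dM_t\) with \(dM_t=2\sigma_t\varphi'(r_t)\,dW_t\).

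For the contraction statement, assume \(L\varphi\le-\rho\varphi\). Applying It\^o's formula to \(e^{\rho t}\varphi(r_t)\) shows it is a local supermartingale on \([s,T_c)\). It equals \(0\) after \(T_c\), because \(\varphi(0)=0\). The martingale part is a true martingale: \(\varphi'\le1\), \(\sigma_t\le\sigma_+\), and the time interval is finite. Localization by stopping at \(\{r\le\epsilon\}\cup\{r\ge N\}\) plus Fatou's lemma then gives \(\E\varphi(r_t)\le e^{-\rho(t-s)}\E\varphi(r_s)\). Starting the coupled pair from an optimal \(\mathsf W_\varphi\)-coupling \(\pi\) of \((\nu,\xi)\) at time \(s\) yields the stated bound. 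A measurable selection or gluing of the Markov coupling kernels lets us start from an arbitrary initial coupling. Finiteness of costs is preserved because \(\varphi(r)\le r\).

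The main obstacle is regularity. The \(\varphi\) of interest, and the certificate inequality \(L\varphi\le-\rho\varphi\), hold only a.e., with \(\varphi\) merely \(C^1\) with absolutely continuous derivative, or even just concave. Also, \(\bar\kappa\) is only Borel. We handle this in three steps.
\begin{enumerate}
\item We use the It\^o--Tanaka/Meyer formula for concave functions of the continuous semimartingale \(r_t\). The second-derivative measure is nonpositive, which preserves the inequality direction.
\item The a.e. inequality transfers to the process via the occupation-time formula. Since \(d\langle r\rangle_t=4\sigma_t^2\,dt\) with \(\sigma_t\ge\sigma_->0\), Lebesgue-null sets of radii are visited for zero time.
\item Alternatively, we mollify \(\varphi\) while keeping it concave and increasing, and pass to the limit.
\end{enumerate}
The nondegeneracy \(\sigma_->0\) is exactly what makes this occupation argument work.
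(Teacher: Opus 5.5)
Your proposal is correct and follows essentially the same route as the paper's proof. The shared steps are: reflection via $I-2e_te_t^\top$ with L\'evy's characterization, the radial It\^o formula with the transverse term cancelled, the certified profile together with $\varphi''\le0$ and $\sigma_t\ge\sigma_-$, Tanaka--Meyer with a nonpositive singular part for kinked concave costs, stopping before the diagonal and coalescing afterwards, and an infimum over initial couplings. Your occupation-time step, which uses $d\langle r\rangle_t=4\sigma_t^2\,dt\ge4\sigma_-^2\,dt$ so that Lebesgue-null sets of radii are visited for zero time, makes explicit how the a.e.\ inequality in $r$ and the merely Borel profile transfer to the process; the paper leaves this point implicit.
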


This learned-flow contraction is the only place where reflection is used. It is
applied to two copies of the same learned diffusion, so after coalescence the
diagonal is absorbing. Ideal--learned mismatch and solver residuals are inserted
later by perturbing this contracted evolution; no non-coalescing reflection
comparison between different drifts is required.

\subsection{Constructing the affine-tail propagation certificate}

After reflection, certificate construction is one-dimensional: the Hardy theorem
identifies the radial bottleneck, the load--reserve theorem compresses it into
\((A_R,m_R,R)\), and the affine-tail theorem outputs
\((\varphi_R,\rho_R,a_R)\) for \Cref{thm:main-certificate}.

\begin{theorem}[Hardy capacity of the radialized problem]
\label{thm:hardy-completeness}
For every Borel profile $\kappa$ satisfying $\kappa(r)r\in L^1_{\mathrm{loc}}([0,\infty))$ and every $\sigma>0$,
\begin{equation}
  \frac{\sigma^2}{2M_\sigma(\kappa)}
  \le
  \lambda_{\mathrm H}(\kappa,\sigma)
  \le
  \frac{2\sigma^2}{M_\sigma(\kappa)},
  \label{eq:hardy-completeness-main}
\end{equation}
with the convention $1/\infty=0$.
\end{theorem}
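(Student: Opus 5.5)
This is the classical Muckenhoupt weighted Hardy inequality on the half-line, applied to the weight pair \(u=v=\Phi:=\Phi_{\kappa,\sigma}\). Removing the factor \(2\sigma^2\), \(\lambda_{\mathrm H}/(2\sigma^2)\) is the best constant \(1/C\) in
\[
\int_0^\infty f^2\Phi\le C\int_0^\infty |f'|^2\Phi ,
\qquad f(0)=0 .
\]
Muckenhoupt's theorem gives \(M\le C\le 4M\), with \(M=M_\sigma(\kappa)\). This translates into \(2\sigma^2/(4M)\le\lambda_{\mathrm H}\le 2\sigma^2/M\), which is exactly \eqref{eq:hardy-completeness-main}. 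I would reprove both directions directly so that the compact-support class of test functions and the degenerate cases (\(M=\infty\), \(\int^\infty\Phi=\infty\)) are handled.

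\emph{Upper bound on \(\lambda_{\mathrm H}\).} Fix \(r>0\) and \(L>r\). Take \(f(s)=\int_0^{s\wedge r}\Phi^{-1}\) on \([0,L]\), and make it decay linearly to zero on \([L,L+1]\). The linear cutoff contributes a derivative term of order \(F(r)^2\int_L^{L+1}\Phi\), where \(F(r):=\int_0^r\Phi^{-1}\). I would deal with this term by instead choosing the cutoff profile \(\Phi^{-1}\)-adapted on \([L,L']\), or simply by noting that a liminf along \(L\) suffices. The Dirichlet energy is then
\[
\int_0^r \Phi^{-2}\Phi = F(r),
\]
while \(\int f^2\Phi\ge F(r)^2\int_r^L\Phi\). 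This yields
\[
\lambda_{\mathrm H}\le \frac{2\sigma^2}{F(r)\int_r^L\Phi}+o(1).
\]
Letting \(L\to\infty\) and taking the supremum over \(r\) gives \(\lambda_{\mathrm H}\le 2\sigma^2/M\), and this covers \(M=\infty\) as \(\lambda_{\mathrm H}=0\). Local integrability of \(\kappa(r)r\) makes \(\Phi\) and \(\Phi^{-1}\) continuous and positive, so every integral here is finite.

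\emph{Lower bound.} Assume \(M<\infty\), and take \(f\) absolutely continuous with \(f(0)=0\) and compact support. Write \(f(r)=\int_0^r f'\). Apply Cauchy--Schwarz with the weight \(h(s)=F(s)^{1/2}\):
\[
f(r)^2\le \Bigl(\int_0^r |f'|^2\Phi\, h\Bigr)\Bigl(\int_0^r \Phi^{-1}h^{-1}\Bigr),
\qquad
\int_0^r \Phi^{-1}F^{-1/2}=2F(r)^{1/2}.
\]
Multiply by \(\Phi(r)\) and integrate, using Fubini:
\[
\int f^2\Phi\le 2\int_0^\infty |f'(s)|^2\Phi(s)F(s)^{1/2}\Bigl(\int_s^\infty F(r)^{1/2}\Phi(r)\,dr\Bigr)ds .
\]
The inner integral needs \(\int_s^\infty F^{1/2}\Phi\le 2MF(s)^{-1/2}\). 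To get this, set \(G(r)=\int_r^\infty\Phi\le M/F(r)\) and integrate by parts: \(\int_s^\infty F^{1/2}(-dG)=F(s)^{1/2}G(s)+\int_s^\infty G\,d(F^{1/2})\). Bounding \(G\le M/F\) gives \(\le M F(s)^{-1/2}+M F(s)^{-1/2}\). The boundary term at infinity needs care: \(F^{1/2}G\le MF^{-1/2}\to0\) when \(F\to\infty\), and if \(F\) stays bounded the term vanishes anyway because \(G\to0\); here \(M<\infty\) forces \(\int^\infty\Phi<\infty\).

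Together these give \(C\le 4M\), hence \(\lambda_{\mathrm H}\ge 2\sigma^2/(4M)=\sigma^2/(2M)\). The main obstacle is this integration-by-parts step with its boundary terms, especially checking that \(G(r)F(r)^{1/2}\to0\). The remaining steps are routine.
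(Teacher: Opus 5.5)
Your two directions are the paper's. Cauchy--Schwarz with weight $F^{1/2}$, followed by $\int_s^\infty F^{1/2}\Phi\le 2MF(s)^{-1/2}$, gives $C\le 4M$; this is the paper's argument with $\beta=1/2$. The test function $F(s\wedge r)$ plus a cutoff gives $C\ge M$. The lower bound $\lambda_{\mathrm H}\ge\sigma^2/(2M)$ is correct in all cases. The upper bound is also complete whenever $\int_0^\infty\Phi<\infty$, in particular whenever $M<\infty$, because your linear cutoff then costs $F(r)^2\int_L^{L+1}\Phi\to0$. The paper instead uses a $\Phi^{-1}$-adapted cutoff. That needs $F(\infty)=\infty$, which follows from $\int^\infty\Phi<\infty$ via $L^2\le\int_0^L\Phi\int_0^L\Phi^{-1}$.

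The genuine gap is your claim that the same construction ``covers $M=\infty$ as $\lambda_{\mathrm H}=0$''. If $\int^\infty\Phi=\infty$ but $\int^\infty\Phi^{-1}<\infty$, neither cutoff is negligible. The linear one costs $F(r)^2\int_L^{L+1}\Phi$. The adapted one costs $F(r)^2/(F(L')-F(L))\ge F(r)^2/(F(\infty)-F(L))$. For $\Phi=e^{cr^2}$, either one makes the Rayleigh quotient grow like $L^2$.

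No cleverer cutoff can help, because the upper inequality is false in this regime. Take $\kappa\equiv-B$ with $B>0$. Then $\Phi(r)=e^{cr^2}$ with $c=B/(4\sigma^2)$, so $\int_r^\infty\Phi=\infty$ and $M_\sigma(\kappa)=\infty$. For an admissible $f$ vanishing on $[L,\infty)$, write $f=e^{-cr^2/2}g$. Then $g(0)=g(L)=0$, and since $-2c\int_0^L rgg'=c\int_0^L g^2$,
\[
\int_0^L|f'|^2\Phi=\int_0^L(g'-crg)^2=\int_0^L g'^2+c\int_0^L g^2+c^2\int_0^L r^2g^2\ge c\int_0^\infty f^2\Phi .
\]
Hence $\lambda_{\mathrm H}(\kappa,\sigma)\ge 2\sigma^2c=B/2>0$, whereas the theorem asserts $\lambda_{\mathrm H}\le 2\sigma^2/M=0$.

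The paper's own proof has the same hole. It derives $U(\infty)=\infty$ from $M<\infty$ and then reuses that construction when $M=\infty$. Your argument becomes correct once you add a hypothesis, for example either of the following:
\begin{itemize}
\item $\int_0^\infty\Phi<\infty$. This is automatic when $m_R>0$ for some $R$, which is the only situation the paper uses later.
\item $\int_0^\infty\Phi^{-1}=\infty$. Then the adapted cutoff's cost tends to $0$, while the mass $F(r)^2\int_r^L\Phi\to\infty$.
\end{itemize}
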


This is the one-sided weighted Hardy inequality \citep{muckenhoupt1972hardy}; see Appendix~\ref{app:hardy-capacity}.  The next result compresses this infinite-dimensional capacity into the finite load--reserve data.

\begin{theorem}[Load--reserve compression]
\label{thm:load-reserve}
Let \(\kappa\) be a Borel profile satisfying
\[
  \kappa(r)r\in L^1_{\mathrm{loc}}([0,\infty)).
\]
Let \(A_R\) and \(m_R\) be the load--reserve quantities in \eqref{eq:load-reserve-data}, with \(\bar\kappa_I\) replaced by \(\kappa\). Assume \(A_R<\infty\) and \(m_R>0\), and set \(L_m:=\sigma/\sqrt{m_R}\). Then
\begin{equation}
    \label{eq:load-reserve-capacity-main}
    M_\sigma(\kappa)
  \le
  C\{L_m^2+e^{A_R/(2\sigma^2)}R(R+L_m)\}.
\end{equation}

Consequently,
\begin{equation}
  \lambda_{\mathrm H}(\kappa,\sigma)
  \ge
  c
  \left(
    m_R\wedge \frac{\sigma^2}{R^2}
  \right)
  e^{-A_R/(2\sigma^2)} .
  \label{eq:load-reserve-lambda-main}
\end{equation}
\end{theorem}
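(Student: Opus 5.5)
Write \(\Phi=\Phi_{\kappa,\sigma}\) and \(K(r)=\frac{1}{2\sigma^2}\int_0^r\kappa(u)u\,du\), so \(\Phi=e^{-K}\). The plan is to bound the two factors of \(M_\sigma(\kappa)\) in \eqref{eq:radial-hardy-objects} separately. We use pointwise bounds on \(K\) coming from the load and the reserve. Then \eqref{eq:load-reserve-lambda-main} follows from the lower bound in \Cref{thm:hardy-completeness}.

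\emph{Pointwise control of \(K\).} On \([0,R]\) we have \(\kappa\ge-[-\kappa]_+\), hence \(K(r)\ge -A_R/(2\sigma^2)\). For \(s\ge r\ge R\), using \(\kappa\ge m_R\) a.e.,
\[
K(s)-K(r)\ge \frac{m_R}{4\sigma^2}(s^2-r^2)=\frac{s^2-r^2}{4L_m^2}.
\]
Consequently \(\Phi(s)\le e^{A_R/(2\sigma^2)}\) on \([0,R]\), and \(\Phi(s)\le \Phi(r)\,e^{-(s^2-r^2)/(4L_m^2)}\) for \(s\ge r\ge R\). For the reciprocal we need lower bounds on \(\Phi\), i.e. upper bounds on \(K\). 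On \([0,r]\) we only need \(\Phi^{-1}(s)/\Phi^{-1}(r)=e^{K(s)-K(r)}\). On the tail, \(K(r)-K(s)\ge (r^2-s^2)/(4L_m^2)\ge 0\) for \(R\le s\le r\). On the core \(s\le R\le r\), we use \(K(s)-K(r)=K(s)-K(R)-(K(R)-K(r))\). We bound the core term by \(K(s)-K(R)\le \frac{1}{2\sigma^2}\int_s^R[-\kappa]_+u\,du\le A_R/(2\sigma^2)\). This uses only the negative part; the positive part of \(\kappa\) only helps here, since it makes \(K\) increase. In all cases the ratio \(\Phi(s)/\Phi(r)\) for \(s\ge r\), and \(\Phi^{-1}(s)/\Phi^{-1}(r)\) for \(s\le r\), pick up at most one factor \(e^{A_R/(2\sigma^2)}\) from crossing the core, plus Gaussian decay in the tail.

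\emph{Case analysis for the product.} We write the product as
\[
\Bigl(\int_0^r\Phi^{-1}\Bigr)\Bigl(\int_r^\infty\Phi\Bigr)=\int_0^r\!\!\int_r^\infty \frac{\Phi(t)}{\Phi(s)}\,dt\,ds,
\]
and note that \(\Phi(t)/\Phi(s)=e^{-(K(t)-K(s))}\) with \(s\le r\le t\).

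Case \(r\ge R\). We split \(s\in[0,R]\) and \(s\in[R,r]\).
\begin{itemize}[leftmargin=1.25em,itemsep=1pt,topsep=2pt]
\item For \(s,t\ge R\), \(K(t)-K(s)\ge (t^2-s^2)/(4L_m^2)\). The Gaussian double integral \(\int_R^r\int_r^\infty e^{-(t^2-s^2)/4L_m^2}\,dt\,ds\) is bounded by \(CL_m^2\). This is the standard estimate \(\int_r^\infty e^{-(t^2-r^2)/4L^2}dt\le C\min(L,L^2/r)\) combined with \(\int_R^r e^{-(r^2-s^2)/4L^2}ds\le C\min(r,L^2/r)\).
\item For \(s\le R\), we get \(K(t)-K(s)\ge -A_R/(2\sigma^2)+(t^2-R^2)/(4L_m^2)\). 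This contributes at most \(R\,e^{A_R/(2\sigma^2)}\int_R^\infty e^{-(t^2-R^2)/4L_m^2}dt\le C e^{A_R/(2\sigma^2)}R\,L_m\).
\end{itemize}

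Case \(r<R\). The inner integral over \(t\) is split at \(R\).
\begin{itemize}[leftmargin=1.25em,itemsep=1pt,topsep=2pt]
\item For \(t\in[r,R]\), \(K(t)-K(s)\ge -A_R/(2\sigma^2)\). This gives at most \(R^2e^{A_R/(2\sigma^2)}\).
\item For \(t>R\), we additionally gain the Gaussian factor. This gives \(R\,L_m\,e^{A_R/(2\sigma^2)}\).
\end{itemize}

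Summing the cases yields \eqref{eq:load-reserve-capacity-main}.

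\emph{Consequence.} By \Cref{thm:hardy-completeness}, \(\lambda_{\mathrm H}\ge\sigma^2/(2M_\sigma)\). Since \(e^{A_R/(2\sigma^2)}\ge1\),
\[
M_\sigma\le C' e^{A_R/(2\sigma^2)}(L_m^2+R^2),
\]
using \(RL_m\le (R^2+L_m^2)/2\). Hence
\[
\lambda_{\mathrm H}\ge c\,e^{-A_R/(2\sigma^2)}\frac{\sigma^2}{\sigma^2/m_R+R^2}\ge \tfrac c2\,e^{-A_R/(2\sigma^2)}\Bigl(m_R\wedge\frac{\sigma^2}{R^2}\Bigr).
\]

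The main obstacle is the bookkeeping in the \(K(t)-K(s)\) lower bound. We must make sure the load \(A_R\) enters only once, which relies on monotonicity of \(\int[-\kappa]_+u\) and on not requiring an upper bound on \(\kappa\). We must also verify the Gaussian double-integral estimate uniformly in \(r\ge R\), including large \(r\) where the \(\min(\cdot,L^2/r)\) forms are needed to avoid a factor of \(r\).
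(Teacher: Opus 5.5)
Your proposal is correct and follows essentially the paper's proof. It uses the same double-integral representation of the Muckenhoupt product, the same split into \(r\le R\) and \(r\ge R\) with core and tail subregions so that \(A_R\) enters only once, and the same conversion to the Hardy scale via the lower bound in \Cref{thm:hardy-completeness}. The only difference is in the tail--tail region: you factor \(e^{-(t^2-s^2)/4L_m^2}\) and use the \(\min(L_m,L_m^2/r)\) and \(\min(r,L_m^2/r)\) bounds, whereas the paper uses \(t^2-s^2\ge(t-s)^2\) and the substitution \(u=r-s\), \(v=t-r\); both give \(CL_m^2\) uniformly in \(r\).
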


The factors in \eqref{eq:load-reserve-lambda-main} have separate roles:
\(m_R\) is tail reserve, \(\sigma^2/R^2\) is the core-crossing scale, and
\(\exp\{-A_R/(2\sigma^2)\}\) is the slope loss paid before reaching reserve.
Thus \(A_R\) records the width-weighted load spent by a concave metric,
which a pointwise curvature scale alone may miss. The next theorem turns this scale into the
affine-tail certificate used for propagation and reporting.

\begin{theorem}[Constructive affine-tail metric]
\label{thm:affine-tail-metric}
Let \(\kappa\) satisfy the assumptions of \Cref{thm:load-reserve}. Then there exists an increasing concave cost \(\varphi_R\), normalized by \(\varphi_R(0)=0\) and \(\varphi_R'(0+)=1\), affine on \([R,\infty)\), such that
\[
  L_{\kappa,\sigma}\varphi_R(r)
  \le
  -\rho_R\varphi_R(r)
  \qquad\text{for a.e. }r>0,
\]
where the rate can be chosen to satisfy
\begin{equation}
  \rho_R
  \ge
  c\left(m_R\wedge\frac{\sigma^2}{R^2}\right)
  \exp\left\{-\frac{A_R}{2\sigma^2}\right\}.
  \label{eq:rho-R-main}
\end{equation}
Its affine tail slope \(a_R:=\lim_{r\to\infty}\varphi_R'(r)\) satisfies
\begin{equation}
  a_R\ge\exp\left\{-\frac{A_R}{2\sigma^2}-\frac12\right\}.
  \label{eq:a-R-main}
\end{equation}
\end{theorem}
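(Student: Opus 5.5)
The plan is to build \(\varphi_R\) by hand, in the spirit of Eberle's construction, rather than passing through \Cref{thm:hardy-completeness}. The cost will have an explicit derivative on the core \([0,R]\) and be continued affinely on \([R,\infty)\). On the core only the adverse part of \(\kappa\) has to be absorbed. So I introduce the load weight
\[
\psi(r):=\exp\Bigl\{-\frac{1}{2\sigma^2}\int_0^{r\wedge R}[-\kappa(u)]_+u\,du\Bigr\}.
\]
This weight is nonincreasing and locally absolutely continuous, because \(\kappa(r)r\in L^1_{\mathrm{loc}}\). It satisfies \(\psi\ge e^{-A_R/(2\sigma^2)}\) and \(2\sigma^2\psi'=-[-\kappa]_+r\,\psi\) a.e. on \((0,R)\).

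On \([0,R]\) I set \(\varphi_R(r)=\int_0^r\psi g\), with a correction factor \(g\) solving
\[
g(0)=1,\qquad g'(r)=-\frac{\rho}{2\sigma^2}\,\frac{\varphi_R(r)}{\psi(r)} .
\]
This is a Volterra-type ODE with a nonnegative right-hand side, so \(g\) is nonincreasing. I will keep \(g>0\) on \([0,R]\) by the choice of \(\rho\). Then \(\varphi_R'=\psi g\) is positive and nonincreasing, and \(\varphi_R\) is increasing, concave, \(1\)-Lipschitz, with \(\varphi_R(0)=0\) and \(\varphi_R'(0+)=1\). A direct a.e. computation gives
\[
L_{\kappa,\sigma}\varphi_R=2\sigma^2(\psi'g+\psi g')-\kappa r\psi g=-[\kappa]_+r\,\varphi_R'-\rho\varphi_R\le-\rho\varphi_R .
\]
Here the term \([-\kappa]_+r\varphi_R'\) produced by \(\psi'\) cancels exactly the adverse part of \(-\kappa r\varphi_R'\).

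Next I quantify the surviving slope. Since \(\varphi_R(s)\le s\) and \(\psi^{-1}\le e^{A_R/(2\sigma^2)}\),
\[
1-g(R)\le\frac{\rho}{2\sigma^2}\,e^{A_R/(2\sigma^2)}\,\frac{R^2}{2}.
\]
Choosing \(\rho\le c_1(\sigma^2/R^2)e^{-A_R/(2\sigma^2)}\) with a small universal constant \(c_1\) makes \(1-g(R)\le 1-e^{-1/2}\). Hence \(g(R)\ge e^{-1/2}\), and with \(a_R:=\varphi_R'(R-)=\psi(R)g(R)\) this gives \eqref{eq:a-R-main}.

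On \([R,\infty)\) I define \(\varphi_R(r)=\varphi_R(R)+a_R(r-R)\). The derivative is then continuous at \(R\), so concavity is preserved, and the cost is affine on the tail with slope \(a_R\). For a.e. \(r\ge R\) we have \(L_{\kappa,\sigma}\varphi_R=-\kappa(r)ra_R\le-m_Rra_R\). Using \(\varphi_R(R)\le R\) and \(a_R\le 1\), it therefore suffices that
\[
m_Rra_R\ge\rho\bigl(R+a_R(r-R)\bigr)\quad\text{for all } r\ge R .
\]
Both sides are affine in \(r\), so it is enough to check \(r=R\) and the slopes. The condition at \(r=R\) is the binding one and reads \(\rho\le m_Ra_R\). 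The slope comparison reads \(m_Ra_R\ge\rho a_R\), which is then automatic.

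So I take
\[
\rho_R:=\min\bigl\{c_1\sigma^2R^{-2},\;e^{-1/2}m_R\bigr\}\,e^{-A_R/(2\sigma^2)}.
\]
This satisfies both constraints and yields \eqref{eq:rho-R-main}.

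The step needing the most care is the core ODE for \(g\). I have to justify solvability and positivity of \(g\) in the a.e./absolutely continuous setting with only \(\kappa r\in L^1_{\mathrm{loc}}\). I also have to verify the generator inequality a.e. when \(\varphi_R''\) exists only as an \(L^1_{\mathrm{loc}}\) function. I would handle this by a Picard iteration on \([0,R]\): the map \(g\mapsto 1-\frac{\rho}{2\sigma^2}\int_0^r\psi^{-1}\int_0^s\psi g\) is a contraction for the chosen \(\rho\). Alternatively, one can simply replace \(\varphi_R\) by its upper bound \(r\) in the definition of \(g'\), which preserves the inequality \(L\varphi_R\le-\rho\varphi_R\) and keeps everything explicit. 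The remaining steps are the bookkeeping at the junction \(r=R\) and the elementary constant check.
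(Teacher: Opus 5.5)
Your argument is correct and is essentially the paper's construction. The paper also writes \(\varphi_R'\) on \([0,R]\) as your load weight \(\psi\) times a bending factor, continues affinely with slope \(a_R=\varphi_R'(R)\), and checks core and tail separately, with the tail condition reducing to \(\rho\le m_Ra_R\) exactly as in your proof. The only difference is the bending factor: the paper uses the explicit \(e^{-r/(2R)}\) (the constant \(b=\sigma^2/R\) inside \(H\)), which gives \(a_R=e^{-A_R/(2\sigma^2)-1/2}\) as an identity and converts \(-b\varphi_R'\) into \(-\frac{\sigma^2}{R^2}a_R\varphi_R\) via \(\varphi_R'\ge a_R\) and \(\varphi_R\le R\), whereas your Eberle-type factor \(g\) produces \(-\rho\varphi_R\) directly at the cost of the Volterra step (which your explicit variant \(g'=-\frac{\rho}{2\sigma^2}\,r/\psi\) removes).
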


This theorem outputs the propagation certificate
\((\varphi_R,\rho_R,a_R)\).  The operator inequality
\[
  L_{\kappa,\sigma}\varphi_R\le -\rho_R\varphi_R
\]
is the learned-flow contraction certificate.  The rate \(\rho_R\) is used in
\(\mathsf W_{\varphi_R}\) propagation, while the tail slope \(a_R\) is saved for
terminal \(\Wtwo^2\) reporting.  The same adverse load that slows contraction
also reduces \(a_R\), reflecting the rate--reporting tradeoff created by the
slope budget.  Appendix~\ref{app:load-reserve-metric} gives the construction and a
tunable variant, and Appendix~\ref{app:sharpness} shows that the exponential
load factor is structural.

\paragraph{Two regimes.}
Uniform dissipativity is the zero-load case: if
\(\bar\kappa(r)\ge m>0\), then \(A_R=0\), and choosing
\(R\asymp \sigma/\sqrt m\) gives \(\rho_R\gtrsim m\) with a constant affine
tail slope. The first finite-load regime is inverse-radius geometry. If
\(\bar\kappa(r)\ge \alpha-\beta/r\) with \(\alpha,\beta>0\), then choosing
\(R=4\beta/\alpha\) gives
\[
m_R\ge \frac{3\alpha}{4},\qquad
A_R\le \frac{\beta^2}{2\alpha},
\]
and hence
\[
\rho_R\gtrsim
\left(\alpha\wedge \frac{\sigma^2\alpha^2}{\beta^2}\right)
\exp\!\left\{-\frac{\beta^2}{4\alpha\sigma^2}\right\},
\qquad
a_R\ge
\exp\!\left\{-\frac{\beta^2}{4\alpha\sigma^2}-\frac12\right\}.
\]
Thus a bounded adverse region is paid through integrated load, while the affine
tail records how much large-distance transportation cost remains available for
terminal reporting.

\subsection{Profile-adapted propagation and terminal reporting}
We now use the certificate rather than construct new geometry.  An implemented
sampler is represented by
\[
  dX_t=\{\widehat b_t(X_t)+u_t^{\rm sol}\}\,dt+\sigma_t\,dB_t,
  \qquad X_t\sim\widetilde\mu_t,
\]
with integrable envelope
\(\zeta_t=(\mathbb E\|u_t^{\rm sol}\|^2)^{1/2}\).
Deriving \(\zeta_t\) may use local geometry, stiffness, or step-size information;
once certified, it enters the contracted-flow perturbation bound additively.
Appendix~\ref{subsec:euler-solver-envelope} gives a basic Euler example.

\begin{theorem}[Profile-adapted propagation]
\label{thm:main-certificate}
Assume \Cref{ass:standing-window}. Let $\bar\kappa$ be a certified lower profile for $\widehat b_t$ on $[0,T]$, with \(\bar\kappa(r)r\in L^1_{\mathrm{loc}}([0,\infty))\), and suppose $A_R<\infty$ and $m_R>0$ for some $R>0$. Construct $\varphi_R$ by \Cref{thm:affine-tail-metric} with $\kappa=\bar\kappa$ and $\sigma=\sigma_-$. If $\eta,\zeta\in L^1([0,T])$ and $\Wtwo(\widetilde\mu_0,\mu_0)<\infty$, then
\begin{equation}
  \mathsf W_{\varphi_R}(\widetilde\mu_T,\mu_T)
  \le
  e^{-\rho_RT}\mathsf W_{\varphi_R}(\widetilde\mu_0,\mu_0)
  +
  \int_0^T e^{-\rho_R(T-t)}(\eta_t+\zeta_t)\,dt .
  \label{eq:wphi-propagation-main}
\end{equation}
Consequently, since $\varphi_R(r)\le r$,
\begin{equation}
  \mathsf W_{\varphi_R}(\widetilde\mu_T,\mu_T)
  \le
  \Gamma_T,
  \qquad
  \Gamma_T
  :=
  e^{-\rho_RT}\Wtwo(\widetilde\mu_0,\mu_0)
  +
  \int_0^T e^{-\rho_R(T-t)}(\eta_t+\zeta_t)\,dt .
  \label{eq:Gamma-main}
\end{equation}
\end{theorem}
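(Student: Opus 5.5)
We propose to prove \eqref{eq:wphi-propagation-main} by a Duhamel-type perturbation of the contracted learned evolution. First, \Cref{thm:affine-tail-metric} with $\kappa=\bar\kappa$, $\sigma=\sigma_-$ gives $\varphi_R$ with $L_{\bar\kappa,\sigma_-}\varphi_R\le-\rho_R\varphi_R$. Since $\varphi_R$ is concave with $\varphi_R'(0+)=1$, it is $1$-Lipschitz, so \Cref{lem:reflection-radial-comparison} applies. It yields $\mathsf W_{\varphi_R}(\nu\widehat P_{s,t},\xi\widehat P_{s,t})\le e^{-\rho_R(t-s)}\mathsf W_{\varphi_R}(\nu,\xi)$ for the learned evolution family. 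The ideal law is $\mu_t=\mu_0 P_{0,t}$ under the drift $b_t$. The implemented law is $\widetilde\mu_t$ under the drift $\widehat b_t+u^{\rm sol}_t$. Relative to the learned flow, the ideal process carries the forcing $b_t-\widehat b_t=-e_t$, and the implemented process carries $u_t^{\rm sol}$.

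\textbf{One-step comparison.} The key estimate is the short-time bound, valid for $h\downarrow0$,
\begin{equation*}
\mathsf W_{\varphi_R}(\mu_{t+h},\mu_t\widehat P_{t,t+h})\le \int_t^{t+h}\eta_s\,ds+o(h),
\end{equation*}
together with the analogous bound for $\widetilde\mu$ with $\zeta_s$ in place of $\eta_s$. To prove it, run the ideal and learned SDEs from the same initial point $Y_t\sim\mu_t$ under \emph{synchronous} coupling, with the same Brownian motion. The noise then cancels in the difference, so that $\frac{d}{ds}\|Y_s-\widehat Y_s\|\le\|b_s(Y_s)-\widehat b_s(Y_s)\|+\mathrm{Lip}\cdot\|Y_s-\widehat Y_s\|$. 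Here we use global Lipschitzness from \Cref{ass:standing-window}, but only as an $O(h^2)$ correction, not as a propagation multiplier. Gronwall over $[t,t+h]$, the bound $\varphi_R(r)\le r$, and the observation that $\mu_s$ stays close to $\mu_t$ (continuity of $s\mapsto\|e_s\|_{L^2(\mu_s)}$, or an averaging argument) together give $\mathbb E\varphi_R(\|Y_{t+h}-\widehat Y_{t+h}\|)\le\int_t^{t+h}\eta_s\,ds+o(h)$. The Cauchy--Schwarz step $\mathbb E\|e_s(Y_s)\|\le\eta_s$ is what places the forcing under the ideal law.

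\textbf{Telescoping.} Let $D_t:=\mathsf W_{\varphi_R}(\widetilde\mu_t,\mu_t)$. By the triangle inequality for $\mathsf W_{\varphi_R}$, which is a metric because $\varphi_R$ is concave, increasing, and satisfies $\varphi_R(0)=0$,
\begin{equation*}
D_{t+h}\le \mathsf W_{\varphi_R}(\widetilde\mu_t\widehat P_{t,t+h},\mu_t\widehat P_{t,t+h})+\int_t^{t+h}(\eta_s+\zeta_s)\,ds+o(h).
\end{equation*}
Applying the contraction to the first term gives $D_{t+h}\le e^{-\rho_Rh}D_t+\int_t^{t+h}(\eta_s+\zeta_s)\,ds+o(h)$. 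A discrete Gronwall over a partition of mesh $h\to0$ then yields \eqref{eq:wphi-propagation-main}. A cleaner alternative is to multiply by $e^{\rho_R t}$ and sum over the partition, then use $\eta,\zeta\in L^1$ and Riemann-sum convergence; this avoids needing pointwise differentiability of $D_t$. Finiteness of $D_0$ follows from $\varphi_R(r)\le r\le$ the $\Wtwo$ coupling cost, and $D_t$ stays finite by the linear growth and moment assumptions. Finally, \eqref{eq:Gamma-main} follows from $\mathsf W_{\varphi_R}\le\Wone\le\Wtwo$.

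\textbf{Main obstacle.} The hardest step is making the one-step $o(h)$ error uniform enough to sum over the partition: the $o(h)$ terms must total $o(1)$. We would first establish a uniform-in-$t$ estimate $\mathbb E\|Y_s-\widehat Y_s\|\le\int_t^{s}\eta_u\,du\,(1+Ch)$ on each subinterval, using the Lipschitz constant only inside the factor $(1+Ch)$. Summing these gives an error $\le Ch\int_0^T(\eta+\zeta)\to0$. The solver term needs the same treatment, with $\zeta_s$ defined under $\widetilde\mu_s$ and synchronous coupling of the implemented process against the learned flow started from $\widetilde\mu_t$.
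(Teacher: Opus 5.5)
Your proof is correct, but it takes a different route from the paper.

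\textbf{The paper's route.} The paper proves a general perturbation lemma by Kantorovich duality. For a $d_{\varphi_R}$-$1$-Lipschitz test function $f$, the function $\psi_t=\widehat P_{t,T}f$ inherits the Lipschitz constant $e^{-\rho_R(T-t)}$ from the contraction itself, and so $\|\nabla\psi_t\|\le e^{-\rho_R(T-t)}$. Differentiating $\alpha_t(\psi_t)$ along the Fokker--Planck equations, using the backward equation for $\psi_t$, gives a Duhamel formula in which the forcing appears as $\int v_t\cdot\nabla\psi_t\,d\alpha_t$. The path-dependent solver residual is first replaced by its Markovian projection $\bar u_t(x)=\mathbb E[u^{\rm sol}_t\mid X_t=x]$, and Jensen gives the $\zeta_t$ bound.

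\textbf{Your route.} You work pathwise. On each step of a partition, synchronous coupling gives
$\mathsf W_{\varphi_R}(\mu_{t_{k+1}},\mu_{t_k}\widehat P_{t_k,t_{k+1}})\le e^{Lh}\int_{t_k}^{t_{k+1}}\eta_s\,ds$. This holds exactly, because the ideal process started from $\mu_{t_k}$ has marginal $\mu_s$. So your appeal to ``$\mu_s$ staying close to $\mu_t$'' is unnecessary. The analogous bound holds with $\zeta$. The triangle inequality plus the contraction then telescopes. Since $e^{-\rho_R(T-t_{k+1})}\le e^{\rho_R h}e^{-\rho_R(T-s)}$ for $s\in[t_k,t_{k+1}]$, the result is the target bound with a prefactor $e^{(L+\rho_R)h}\to1$. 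No Riemann-sum argument is needed.

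\textbf{What yours buys.} It avoids duality and the regularity needed to justify the backward equation for $\psi_t$. It also handles path-dependent residuals directly, with no conditional-mean or mimicking step. The learned copy started from $X_{t_k}$ has law $\widetilde\mu_{t_k}\widehat P_{t_k,t_{k+1}}$ because future Brownian increments are independent of $\mathcal F_{t_k}$.

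\textbf{What it costs.} Your route uses the global Lipschitz constant of $\widehat b_t$. This is harmless here: Assumption~\ref{ass:standing-window} provides it, and it enters only through a vanishing factor. It also requires the perturbed processes to be strong solutions on a common Brownian filtration.

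\textbf{What the paper's form buys.} The paper's lemma works at the level of Fokker--Planck perturbations of an abstract contracted flow, with the gradient bound coming from the contraction rather than from a Lipschitz constant. That is the form it reuses in the localized propagation proposition. It is also more robust if one only had the radial profile: an inverse-radius minorant $\alpha-\beta/r$ would leave a nonvanishing $O(\beta h)$ per-step error in a synchronous-coupling Gronwall argument.
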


\begin{corollary}[Terminal $\Wtwo^2$ reporting]
\label{cor:terminal-w2-reporting}
Under the assumptions of \Cref{thm:main-certificate}, let $a_R$ be the affine-tail slope of $\varphi_R$. If 
\(S_\lambda
:=
\int e^{\lambda\|x\|^2}\,\mu_T(dx)
+
\int e^{\lambda\|x\|^2}\,\widetilde\mu_T(dx)
<\infty,\)
then
\begin{equation}
  \Wtwo^2(\widetilde\mu_T,\mu_T)
  \le
  C\frac{\Gamma_T}{a_R\sqrt\lambda}
  \sqrt{\log\left(e+\frac{a_RS_\lambda}{\Gamma_T\sqrt\lambda}\right)},
  \label{eq:w2-subgaussian-main}
\end{equation}
with the right side interpreted as zero when $\Gamma_T=0$. If instead, for some $q>2$,
\(M_q
  :=
  \int \|x\|^q\,\mu_T(dx)
  +
  \int \|x\|^q\,\widetilde\mu_T(dx)
  <\infty,\)
then
\begin{equation}
  \Wtwo^2(\widetilde\mu_T,\mu_T)
  \le
  C_q
  \left(\frac{\Gamma_T}{a_R}\right)^{\frac{q-2}{q-1}}
  M_q^{\frac{1}{q-1}} .
  \label{eq:w2-polynomial-main}
\end{equation}
If instead $\mu_T$ and $\widetilde\mu_T$ are supported in a common set of diameter at most $D$, then
\begin{equation}
  \Wtwo^2(\widetilde\mu_T,\mu_T)
  \le
  \frac{D}{a_R}\Gamma_T .
  \label{eq:w2-compact-main}
\end{equation}
\end{corollary}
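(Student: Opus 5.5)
The plan is to reduce all three bounds to one comparison between \(\varphi_R\) and the identity. Concavity, \(\varphi_R(0)=0\), and the affine tail of slope \(a_R\) give \(\varphi_R'\ge a_R\) everywhere, hence
\[
  \varphi_R(r)\ge a_R r\qquad(r\ge0).
\]
Let \(\pi\) be an optimal (or \(\epsilon\)-optimal) coupling for \(\mathsf W_{\varphi_R}(\widetilde\mu_T,\mu_T)\). By \Cref{thm:main-certificate},
\[
  \int\|x-y\|\,\pi(dx\,dy)\le a_R^{-1}\mathsf W_{\varphi_R}(\widetilde\mu_T,\mu_T)\le \Gamma_T/a_R=:\delta .
\]
So \(\Wone(\widetilde\mu_T,\mu_T)\le\delta\), and the corollary becomes the task of converting \(\Wone\) into \(\Wtwo^2\) under tail information. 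For each bound I will use the same coupling \(\pi\) and write \(\Wtwo^2\le\int r^2\,d\pi\) with \(r=\|x-y\|\).

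The compact case is immediate. On the support, \(r\le D\), so \(r^2\le Dr\) and
\[
  \Wtwo^2\le D\delta=\frac{D}{a_R}\Gamma_T .
\]
For the other two cases I will truncate at a level \(K\). I split
\[
  \int r^2\,d\pi\le K\int r\,d\pi+\int_{r>K}r^2\,d\pi\le K\delta+\int_{r>K}r^2\,d\pi .
\]
For the tail piece I use \(r\le\|x\|+\|y\|\), so \(r^2\le 2\|x\|^2+2\|y\|^2\). On \(\{r>K\}\), one of \(\|x\|,\|y\|\) is at least \(K/2\). This lets me bound the tail by marginal integrals: it is at most a constant times \(\int_{\|z\|\ge K/2}\|z\|^2\) under each marginal, plus cross terms that are handled the same way. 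Cauchy–Schwarz is not needed, because on the event where, say, \(\|x\|\ge K/2\ge\|y\|\), one has \(r\le 2\|x\|\).

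In the polynomial case, \(\int_{\|z\|\ge K/2}\|z\|^2\le (K/2)^{2-q}M_q\), so the total is at most \(K\delta+C K^{2-q}M_q\). Optimizing with \(K\asymp(M_q/\delta)^{1/(q-1)}\) gives \(\delta^{(q-2)/(q-1)}M_q^{1/(q-1)}\), which is \eqref{eq:w2-polynomial-main}.

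In the sub-Gaussian case, Markov's inequality with the weight \(\|z\|^2e^{-\lambda\|z\|^2}\) gives
\[
  \int_{\|z\|\ge K/2}\|z\|^2\le \frac{C}{\lambda}\,e^{-\lambda K^2/8}S_\lambda ,
\]
using \(u\le C\lambda^{-1}e^{\lambda u/2}\) for \(u=\|z\|^2\) and then absorbing. I then choose
\[
  K^2\asymp\lambda^{-1}\log\bigl(e+S_\lambda/(\delta\sqrt\lambda)\bigr).
\]
With this choice the tail term is at most \(C\delta/\sqrt\lambda\), and the main term is \(K\delta\asymp(\delta/\sqrt\lambda)\sqrt{\log(\cdot)}\), which matches \eqref{eq:w2-subgaussian-main} with \(\delta=\Gamma_T/a_R\). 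When \(\Gamma_T=0\), \(\pi\) is concentrated on the diagonal and every bound is trivially zero.

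The main obstacle is bookkeeping, not ideas. In the sub-Gaussian case I must check that the logarithm argument \(a_RS_\lambda/(\Gamma_T\sqrt\lambda)\) as stated is the right one; this needs care with constants in the exponent, and the factor \(e+\) ensures \(K\gtrsim\lambda^{-1/2}\) so that the tail estimate is valid. I must also justify that an optimal coupling exists, or pass to \(\epsilon\)-optimal couplings and let \(\epsilon\to0\). This follows from lower semicontinuity of the continuous cost \(\varphi_R\) together with the finiteness guaranteed by \(\varphi_R(r)\le r\) and the moment assumptions.
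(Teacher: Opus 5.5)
Your proposal is correct and follows essentially the same route as the paper. Both arguments use the lower bound $\varphi_R(r)\ge a_R r$, split at a threshold with the core controlled by $r^2\le (K/a_R)\varphi_R(r)$, bound the tail with the terminal sub-Gaussian, moment, or support data, and then optimize the threshold. The only difference is cosmetic: you bound the tail marginal by marginal via $r\le 2\max\{\|x\|,\|y\|\}$, while the paper works on the coupling directly (Cauchy--Schwarz on $e^{\lambda r^2/4}$, and $r^q\le 2^{q-1}(\|x\|^q+\|y\|^q)$). Your worry that the $e+$ term is needed to force $K\gtrsim\lambda^{-1/2}$ is unnecessary, since $u\le (2/\lambda)e^{-\lambda K^2/8}e^{\lambda u}$ holds for every $u\ge K^2/4$.
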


The theorem and corollary use different parts of the same certificate.
Propagation uses \((\varphi_R,\rho_R)\): residual inputs enter through
\(\Gamma_T\), while learned-flow geometry enters through the certificate
constructed from \((A_R,m_R,R)\). Reporting uses \(a_R\), the large-distance
cost still visible after crossing the adverse core. Terminal tail, moment, or
support assumptions provide the final quadratic reporting input: once one of
these inputs is available, the propagated affine-tail bound becomes a
\(\Wtwo^2\) estimate. Window composition is recorded in
Proposition~\ref{prop:window-composition}.

\subsection{When the certificate is supplied by data geometry}
\label{subsec:data-profile}

Gaussian smoothing supplies a concrete route from data geometry to radial
profiles. With isotropic kernel precision \(\tau_s\),
\[
\nabla\log p_s(x)=-\tau_s\{x-m_s(x)\},
\qquad
m_s(x)=\mathbb E[C_s\mid X_s=x],
\]
where \(C_s\) is the center of the smoothing kernel, such as \(C_s=X_0\) in the
additive VE normalization and \(C_s=a_sX_0\) in an affine VP normalization. The
Gaussian term supplies inward pull; the smoothed denoiser is the data-dependent
term whose pairwise expansion must be controlled.

Assume that on a window \(T-I\),
\[
\langle m_s(x)-m_s(y),x-y\rangle
\le \ell_s\|x-y\|^2+D_s\|x-y\|,
\qquad
\|s_\theta(\cdot,s)-\nabla\log p_s\|_\infty\le E_s .
\]
Then, for
\[
\widehat b_t(x)=f(s)x+g(s)^2s_\theta(x,s),
\qquad s=T-t,
\]
the learned drift has the inverse-radius lower profile
\[
\bar\kappa_I(r)\ge \alpha_I-\frac{\beta_I}{r},
\]
where
\[
\alpha_I:=\essinf_{s\in T-I}\{g(s)^2\tau_s(1-\ell_s)-f(s)\},
\qquad
\beta_I:=\esssup_{s\in T-I}g(s)^2(\tau_sD_s+2E_s).
\]
Thus \(\alpha_I>0\) gives
\[
A_R(I)\le \frac{\beta_I^2}{2\alpha_I},
\qquad
m_R(I)\ge \alpha_I-\frac{\beta_I}{R},
\qquad R>\beta_I/\alpha_I .
\]
The data-level meaning is denoising contraction: on the certified window,
Gaussian pull dominates pairwise expansion of the smoothed denoiser, up to
bounded range and learned-score amplitude terms. Uniformly contractive smoothed
laws give zero load. Bounded denoiser range, common-covariance Gaussian mixtures,
and bounded-amplitude transfers give finite inverse-radius load. Appendix~F
records the calculation and optional localized verification routes.

Combining this inverse-radius profile with Theorem~\ref{thm:affine-tail-metric}
gives a complete window certificate. Taking \(R=4\beta_I/\alpha_I\) when
\(\beta_I>0\) yields
\[
\rho_R\gtrsim
\left(\alpha_I\wedge
\frac{\sigma_-^2\alpha_I^2}{\beta_I^2}\right)
\exp\!\left\{-\frac{\beta_I^2}{4\alpha_I\sigma_-^2}\right\},
\qquad
 a_R\ge
\exp\!\left\{-\frac{\beta_I^2}{4\alpha_I\sigma_-^2}-\frac12\right\}.
\]
Thus the dimensionless load
\(\Lambda_I:=\beta_I^2/(4\alpha_I\sigma_-^2)\) controls both propagation and
reporting; \(R\) is chosen windowwise by the same load--reserve tradeoff, and
multiple certified windows compose by Proposition~\ref{prop:window-composition}.

\subsection{Why width matters beyond height}
\label{subsec:sharpness-main}

A height-only summary can miss the quantity that the certificate construction
spends.  The scarce resource is the derivative of the concave cost.  Crossing an
adverse radial region consumes this derivative in proportion to the
width-weighted load
\[
  A_R=\int_0^R[-\kappa(r)]_+r\,dr,
\]
rather than only to the maximum adverse height
\[
  B(\kappa):=\sup_{r>0}[-\kappa(r)]_+ .
\]
Thus equal adverse height and equal eventual reserve can still lead to
different affine-tail propagation certificates.

\begin{proposition}[Profile height does not determine the radial propagation scale]
\label{prop:main-height-separation}
Fix \(\sigma>0\), \(B>0\), and \(m>0\). For every \(\varepsilon>0\), there exist smooth profiles
\[
  \kappa_1,\ \kappa_2:(0,\infty)\to\mathbb R
\]
with the same one-sided height and the same eventual reserve,
\[
  \sup_{r>0}[-\kappa_1(r)]_+
  =
  \sup_{r>0}[-\kappa_2(r)]_+
  =
  B,
\]
and
\[
  \kappa_1(r)\ge m\quad(r\ge R_1),
  \qquad
  \kappa_2(r)\ge m\quad(r\ge R_2)
\]
for some finite \(R_1,R_2\), but whose Hardy scales satisfy
\[
  \lambda_{\mathrm H}(\kappa_1,\sigma)
  \ge c_0(B,m,\sigma)>0,
  \qquad
  \lambda_{\mathrm H}(\kappa_2,\sigma)
  \le \varepsilon .
\]
Thus one-sided profile height, even together with eventual reserve, need not determine the radial coercivity scale.
\end{proposition}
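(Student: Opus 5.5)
The construction uses two profiles with the same adverse height \(B\) and the same reserve \(m\), differing only in the width of the adverse region. For \(\kappa_1\), take a narrow well of depth \(B\) near \(r\in[1,2]\), smoothed at its edges, with \(\kappa_1\ge m\) outside \([1/2,3]\). For \(\kappa_2\), take a well of the same depth spread over \([1,W]\) with \(W\) large, and \(\kappa_2\ge m\) for \(r\ge W+1\). Both wells are smooth, bounded below by \(-B\), and attain \(-B\), so the one-sided heights coincide and \(R_1=3\), \(R_2=W+1\) are finite.

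The lower bound for \(\kappa_1\) follows directly from \Cref{thm:load-reserve} with \(R=3\). Then \(m_R\ge m\) and \(A_R\le \int_0^3 Br\,dr=9B/2\), so
\[
\lambda_{\mathrm H}(\kappa_1,\sigma)\ge c\,(m\wedge \sigma^2/9)\,e^{-9B/(4\sigma^2)}=:c_0(B,m,\sigma)>0.
\]
This constant is independent of \(\varepsilon\) and \(W\), as required.

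For the upper bound on \(\kappa_2\), use the upper half of \Cref{thm:hardy-completeness}: \(\lambda_{\mathrm H}\le 2\sigma^2/M_\sigma(\kappa_2)\). It therefore suffices to show \(M_\sigma(\kappa_2)\to\infty\) as \(W\to\infty\). To do this, evaluate the product in the definition of \(M_\sigma\) at the test radius \(r=W\).

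On \([2,W]\), where \(\kappa_2=-B\), the weight satisfies
\[
\Phi(s)=\exp\Bigl\{-\tfrac{1}{2\sigma^2}\textstyle\int_0^s\kappa_2 u\,du\Bigr\}\ge c\,e^{B(s^2-4)/(4\sigma^2)}.
\]
Hence \(\int_0^W\Phi^{-1}\ge \int_0^1\Phi^{-1}\ge c'>0\), using that \(\kappa_2\) is fixed and bounded on \([0,1]\). For the tail factor, on \([W,W+1]\) the profile is bounded below by \(-B\) and above by a constant, so \(\Phi\) stays comparable to \(\Phi(W)\gtrsim e^{BW^2/(4\sigma^2)}\). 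Therefore \(\int_W^\infty\Phi\ge \int_W^{W+1}\Phi\gtrsim e^{BW^2/(4\sigma^2)}\).

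Combining the two factors gives \(M_\sigma(\kappa_2)\gtrsim e^{BW^2/(4\sigma^2)}\). Choosing \(W\) large enough makes \(2\sigma^2/M_\sigma(\kappa_2)\le\varepsilon\).

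The main obstacle is bookkeeping with the smoothing transitions. We need \(\kappa_2\) smooth with \(\sup[-\kappa_2]_+\) exactly \(B\), and we need the transition region \([W,W+1]\) not to destroy the lower bound on \(\Phi\) near \(W\). The fix is to build both profiles from one fixed smooth step template: rescaling only the plateau length \(W\) keeps the transition contributions to \(\int u\kappa\,du\) bounded by \(O(W)\), which is negligible against the \(BW^2\) gain. Exact equality of the heights holds because both profiles are piecewise constant \(-B\) on their plateaus and lie above \(-B\) everywhere else.
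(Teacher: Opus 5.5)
Your proof is correct and is essentially the paper's argument: the narrow well is certified through \Cref{thm:load-reserve} (the paper places it at the origin on $(0,\sigma/\sqrt m]$, which only makes $c_0$ scale-natural), and the wide plateau is handled by $\lambda_{\mathrm H}\le 2\sigma^2/M_\sigma(\kappa_2)$ with the Muckenhoupt product evaluated at a single test radius. The one loose step is that on $[W,W+1]$ the weight is comparable to $\Phi(W)$ only up to a factor $e^{-O(W)}$, not a constant; your final paragraph effectively concedes this, and it is harmless against $e^{BW^2/(4\sigma^2)}$. The paper avoids this bookkeeping by testing at $r=L/2$ and integrating $\Phi$ over $[3L/4,L]$, which lies inside the plateau, and testing at $r=W-1$ would do the same for you.
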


The proposition identifies the information lost by height-only summaries.
Height records the depth of the adverse barrier; the propagation certificate
also needs the length scale over which the concave cost must spend slope.  If a
cutoff \(R\) were known, the crude estimate \(A_R\le B(\kappa)R^2/2\) would
recover a load bound.  The separation shows that height without such a length
scale cannot determine \((\varphi_R,\rho_R,a_R)\).  Appendix~\ref{app:sharpness}
proves the claim, realizes it by smooth exact-score drifts, and shows that the
exponential load factor is structural.

\section{Conclusion}
\label{sec:conclusion}
We developed a profile-to-certificate calculus for Wasserstein propagation in
scalar-isotropic diffusion samplers. Reflection reduces learned-flow stability to
a radial problem; Hardy capacity identifies the slope bottleneck; and finite load
before reserve constructs the affine-tail certificate \((\varphi_R,\rho_R,a_R)\).
The propagated metric is \(\mathsf W_{\varphi_R}\), while quadratic Wasserstein
error is recovered at terminal time from the retained tail slope and terminal
tail, moment, or support information.

The certificate is finer than a global derivative multiplier. Uniform
dissipativity is the zero-load case, while Gaussian-smoothed denoising geometry
supplies finite-load inverse-radius profiles on structured windows. The
fixed-height separation shows that the width-weighted load is not a presentation
artifact: it is the quantity a concave metric spends before reaching contractive
reserve. This identifies the affine-tail load--reserve certificate as the natural
propagation object for these reverse-SDE windows. 
\newpage
\bibliographystyle{plainnat}
\bibliography{refs}

\appendix
\section{Reflection coupling and radial calculus}
\label{app:reflection-coupling}

This appendix proves the geometric reduction used in \Cref{lem:reflection-radial-comparison}.
The only structural input used here is $d-$dimensional isotropic noise.
For increasing concave radial costs, reflection is the extremal co-adapted coupling in the learned-semigroup contraction step: it maximizes radial quadratic variation of the difference process and removes transverse quadratic variation.
Consequently the radial It\^o formula contains the favorable term \(2\sigma_t^2\varphi''\) and no transverse \((d-1)\varphi'/r\) contribution.
The diagonal is handled only for two copies of the same learned diffusion: after meeting, the copies are coalesced and run synchronously, so the diagonal is absorbing. The ideal--learned mismatch and solver residuals are inserted later by the perturbation argument in Appendix~\ref{app:propagation-reporting}, not by a non-coalescing reflection comparison across the diagonal.

\begin{remark}
    The scalar-isotropic assumption is on the diffusion matrix. Guidance terms that
preserve scalar isotropic noise can be absorbed into the learned drift and are
covered once the guided drift satisfies a certified radial profile. Anisotropic,
degenerate, or state-dependent diffusion matrices would instead call for a
matrix- or Riemannian-adapted contraction metric. Matrix-metric contraction results for constant-diffusion systems, including generalized Langevin diffusions, suggest a rigorous extension route: first prove contraction of the learned semigroup in a chosen quadratic or state-dependent metric, then insert score and solver residuals by the same perturbation/duality step used here. Developing this extension would require a different certificate, involving metric derivatives, diffusion-matrix commutators, and possibly guidance-dependent terms rather than the scalar profile \(\bar\kappa\).
\end{remark}

Reflection coupling is classical \citep{lindvall1986reflection,chen1989coupling,eberle2016reflection,eberle2019couplings}.
The regularization arguments below use standard martingale-transform and Tanaka--Meyer/It\^o approximation tools; see, for example, \citet{karatzas1991brownian,revuz1999continuous}.

\subsection{Extremality of reflection for concave radial costs}
\label{app:reflection-extremality}

We first record the infinitesimal calculation that selects reflection from the class of co-adapted couplings.
Fix a separation direction \(e\in\mathbb S^{d-1}\).
A general co-adapted coupling of two scalar Brownian noises with the same marginal variance can be represented infinitesimally as
\[
  d\widehat B_t
  =O_t\,dB_t+(I-O_tO_t^\top)^{1/2}\,dB'_t,
  \qquad \|O_t\|_{\mathrm{op}}\le1,
\]
where \(B'\) is independent of \(B\).
For a fixed matrix \(O\), the covariance of the difference noise \(d\widehat B_t-dB_t\) is
\[
  A_O:=2I-O-O^\top.
\]
Let \(F(z)=\varphi(\|z\|)\), where \(\varphi\) is increasing and concave.
At a point \(z\ne0\), set \(r=\|z\|\) and \(e=z/r\). Then
\[
  \nabla^2F(z)
  =
  \varphi''(r)ee^\top
  +\frac{\varphi'(r)}{r}(I-ee^\top).
\]
The second-order contribution of the coupling is
\[
  \mathcal D_O\varphi(r)
  :=
  \frac{\sigma^2}{2}\operatorname{tr}\{A_O\nabla^2F(z)\}.
\]

\begin{lemma}[Reflection is extremal for concave radial costs]
\label{lem:reflection-extremal-app}
For every contraction \(O\) with \(\|O\|_{\mathrm{op}}\le1\), every \(r>0\), and every increasing concave \(\varphi\),
\[
  \mathcal D_O\varphi(r)\ge 2\sigma^2\varphi''(r).
\]
Equality is attained by the reflection matrix
\[
  \mathcal R_e:=I-2ee^\top.
\]
Thus, among co-adapted scalar-noise couplings, reflection gives the smallest second-order generator contribution for increasing concave radial costs.
\end{lemma}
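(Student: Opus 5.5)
The computation is a direct evaluation of the trace followed by a two-term comparison. Since
\[
\nabla^2F(z)=\varphi''(r)ee^\top+\frac{\varphi'(r)}{r}(I-ee^\top)
\]
is a sum of two orthogonal-projection terms, linearity of the trace gives
\[
\mathcal D_O\varphi(r)=\frac{\sigma^2}{2}\Bigl\{\varphi''(r)\,e^\top A_Oe+\frac{\varphi'(r)}{r}\operatorname{tr}\bigl(A_O(I-ee^\top)\bigr)\Bigr\}.
\]
The plan is to bound each of the two pieces separately using only \(\|O\|_{\mathrm{op}}\le1\) and the sign information \(\varphi''\le0\), \(\varphi'\ge0\).

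\textbf{Radial piece.} We have \(e^\top A_Oe=2-2e^\top Oe\). Since \(|e^\top Oe|\le\|O\|_{\mathrm{op}}\le1\), this lies in \([0,4]\). Because \(\varphi''(r)\le0\), we get \(\varphi''(r)\,e^\top A_Oe\ge4\varphi''(r)\). After multiplying by \(\sigma^2/2\), this gives the contribution \(2\sigma^2\varphi''(r)\).

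\textbf{Transverse piece.} Here \(A_O\) is positive semidefinite: for any unit \(v\),
\[
v^\top A_Ov=2-2v^\top Ov\ge 2-2\|O\|_{\mathrm{op}}\ge0.
\]
Choose an orthonormal basis \(v_2,\dots,v_d\) of \(e^\perp\). Then \(\operatorname{tr}(A_O(I-ee^\top))=\sum_{i\ge2}v_i^\top A_Ov_i\ge0\). Multiplying by \(\varphi'(r)/r\ge0\) keeps the term nonnegative. Adding the two pieces gives \(\mathcal D_O\varphi(r)\ge2\sigma^2\varphi''(r)\).

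\textbf{Equality.} For \(O=\mathcal R_e=I-2ee^\top\), which is symmetric and orthogonal, we get
\[
A_O=2I-2\mathcal R_e=4ee^\top.
\]
So \(e^\top A_Oe=4\) and \(A_O(I-ee^\top)=0\), which gives exactly \(2\sigma^2\varphi''(r)\).

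\textbf{Main subtlety.} The only delicate point is the regularity of \(\varphi\). If \(\varphi\) is merely concave, then \(\varphi''\) is a nonpositive measure. I would state the inequality pointwise at points of twice-differentiability, or interpret it in the distributional sense. The argument goes through unchanged because it only uses the signs \(\varphi''\le0\) and \(\varphi'\ge0\).
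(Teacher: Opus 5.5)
Your proof is correct and follows the paper's argument essentially step for step: you split the trace into radial and transverse parts, use $e^\top A_Oe=2(1-e^\top Oe)\le 4$ together with $\varphi''\le 0$, use positive semidefiniteness of $A_O$ together with $\varphi'\ge 0$ for the transverse trace, and then check that $A_{\mathcal R_e}=4ee^\top$ gives equality. The only cosmetic difference is that you verify $A_O\succeq 0$ directly from $\|O\|_{\mathrm{op}}\le 1$, whereas the paper cites its interpretation as a covariance matrix.
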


\begin{proof}
Since \(A_O\) is a covariance matrix, it is positive semidefinite. Moreover
\[
  e^\top A_Oe
  =2(1-e^\top Oe)
  \le4,
\]
because \(e^\top Oe\ge-1\). Therefore
\[
  \mathcal D_O\varphi(r)
  =
  \frac{\sigma^2}{2}
  \left\{
    \varphi''(r)e^\top A_Oe
    +
    \frac{\varphi'(r)}{r}\operatorname{tr}\big[(I-ee^\top)A_O\big]
  \right\}.
\]
The transverse trace is nonnegative, \(\varphi'\ge0\), and \(\varphi''\le0\). Hence
\[
  \mathcal D_O\varphi(r)
  \ge
  \frac{\sigma^2}{2}\{4\varphi''(r)\}
  =2\sigma^2\varphi''(r).
\]
For \(O=\mathcal R_e=I-2ee^\top\), one has \(A_O=4ee^\top\). The radial variance is exactly \(4\sigma^2\), and the transverse trace is zero, so equality holds.
\end{proof}

The lemma explains why synchronous coupling is not useful for the present contraction mechanism: it has no difference noise and hence no radial smoothing term.
Other co-adapted couplings can generate transverse variation, which is unfavorable for increasing radial costs.
Reflection is precisely the coupling that sends the separation direction to its negative and synchronizes all transverse directions; in this restricted sense it is canonical for the scalar-isotropic, increasing-concave-radial comparison used in the paper.

\subsection{Reflected Brownian motion and the difference equation}
\label{app:reflection-difference-equation}

For \(z\ne0\), define
\[
  e(z)=\frac{z}{\|z\|},
  \qquad
  \mathcal R(z)=I-2e(z)e(z)^\top,
\]
and set \(\mathcal R(0)=I\).
The matrix \(\mathcal R(z)\) is orthogonal and symmetric.

\begin{lemma}[Reflected noise is Brownian]
\label{lem:reflected-noise-brownian}
Let \(B_t\) be a \(d\)-dimensional Brownian motion and let \(Z_t\) be continuous and adapted. Define
\begin{equation}
  \widehat B_t:=\int_0^t \mathcal R(Z_s)\,dB_s .
  \label{eq:Bhat-reflection-app}
\end{equation}
Then \(\widehat B_t\) is a Brownian motion in the same filtration.
\end{lemma}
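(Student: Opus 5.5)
The plan is to apply Lévy's characterization of Brownian motion. The process \(\widehat B\) is defined as a stochastic integral of a bounded, progressively measurable integrand against \(B\), so it is a continuous local martingale in the filtration of \(B\) with \(\widehat B_0=0\). The only remaining check is that its matrix-valued quadratic covariation is \(tI\).

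\emph{Step 1 (measurability and boundedness).} The map \(z\mapsto\mathcal R(z)\) is Borel. It is continuous on \(\R^d\setminus\{0\}\), and its value at \(0\) is fixed to be \(I\), so measurability holds there as well. Since \(Z\) is continuous and adapted, \(s\mapsto\mathcal R(Z_s)\) is progressively measurable. Each \(\mathcal R(z)\) is orthogonal, so \(\|\mathcal R(z)\|_{\mathrm{op}}\le1\) and every entry is bounded by \(1\). Hence the integral is well defined, and each component \(\widehat B^i_t=\sum_k\int_0^t\mathcal R_{ik}(Z_s)\,dB^k_s\) is a continuous square-integrable martingale.

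\emph{Step 2 (quadratic covariation).} Compute
\[
\langle \widehat B^i,\widehat B^j\rangle_t=\int_0^t\sum_k\mathcal R_{ik}(Z_s)\mathcal R_{jk}(Z_s)\,ds=\int_0^t\big(\mathcal R(Z_s)\mathcal R(Z_s)^\top\big)_{ij}\,ds .
\]
For \(z\ne0\), orthogonality gives \(\mathcal R(z)\mathcal R(z)^\top=(I-2ee^\top)^2=I-4ee^\top+4ee^\top=I\), using \(e^\top e=1\). For \(z=0\) the identity holds trivially because \(\mathcal R(0)=I\). Therefore \(\langle\widehat B^i,\widehat B^j\rangle_t=\delta_{ij}t\).

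\emph{Step 3 (conclusion).} By Lévy's characterization, \(\widehat B\) is a \(d\)-dimensional Brownian motion with respect to the same filtration.

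There is no serious obstacle. The only delicate point is the convention at \(z=0\), which is needed to keep \(\mathcal R(Z_s)\) defined and measurable when \(Z\) hits the origin. That convention is harmless because \(\mathcal R(0)=I\) is also orthogonal.
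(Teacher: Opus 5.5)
Your proposal is correct and follows the same route as the paper: a bounded predictable integrand gives a continuous local martingale, orthogonality of \(\mathcal R(z)\) (including the convention \(\mathcal R(0)=I\)) gives \(\langle \widehat B^i,\widehat B^j\rangle_t=\delta_{ij}t\), and L\'evy's characterization concludes, with your measurability check at \(z=0\) simply spelling out what the paper leaves implicit. One wording fix: the relevant filtration is the given one, in which \(B\) is a Brownian motion and \(Z\) is adapted, not necessarily the natural filtration of \(B\), in which \(\mathcal R(Z_s)\) need not be adapted.
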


\begin{proof}
The integrand is predictable and bounded, so \(\widehat B\) is a continuous local martingale. Since \(\mathcal R(z)\mathcal R(z)^\top=I\),
\[
  \langle \widehat B^i,\widehat B^j\rangle_t
  =
  \int_0^t
  (\mathcal R(Z_s)\mathcal R(Z_s)^\top)_{ij}\,ds
  =\delta_{ij}t.
\]
L\'evy's characterization gives the claim.
\end{proof}

Consider two scalar-noise diffusions
\[
  d\widehat Y_t=v_t(\widehat Y_t)\,dt+\sigma_t\,d\widehat B_t,
  \qquad
  dY_t=u_t(Y_t)\,dt+\sigma_t\,dB_t,
\]
with \(Z_t=\widehat Y_t-Y_t\) and \(\widehat B\) defined by \eqref{eq:Bhat-reflection-app}.
On \(\{Z_t\ne0\}\),
\[
  (\mathcal R(Z_t)-I)dB_t
  =-2e(Z_t)e(Z_t)^\top dB_t
  =-2e(Z_t)d\beta_t,
  \qquad
  d\beta_t=e(Z_t)^\top dB_t.
\]
Thus, away from the diagonal,
\begin{equation}
  dZ_t
  =\{v_t(\widehat Y_t)-u_t(Y_t)\}\,dt
  -2\sigma_t e(Z_t)\,d\beta_t .
  \label{eq:reflection-difference-equation-app}
\end{equation}
The choice of \(\beta\) on the zero set is immaterial for the pre-diagonal formula. The only diagonal passage used below is the coalescing same-drift construction in \Cref{app:concave-costs-diagonal}.

\subsection{Radial It\^o formula}
\label{app:radial-ito-formula}

\begin{lemma}[Radial It\^o formula under reflection]
\label{lem:smooth-radial-ito-app}
Assume \(u,v\) are smooth and globally Lipschitz and \(\varphi\in C^2((0,\infty))\). Let \(r_t=\|Z_t\|\). On intervals before the diagonal is hit,
\begin{equation}
  d\varphi(r_t)
  =
  \left[
    2\sigma_t^2\varphi''(r_t)
    +
    \varphi'(r_t)
    \frac{\langle v_t(\widehat Y_t)-u_t(Y_t),Z_t\rangle}{r_t}
  \right]dt+dM_t .
  \label{eq:smooth-radial-ito-app}
\end{equation}
\end{lemma}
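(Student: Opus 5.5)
The plan is to apply the multidimensional It\^o formula to \(F(Z_t)=\varphi(\|Z_t\|)\), where \(Z_t=\widehat Y_t-Y_t\) solves the reflected difference equation \eqref{eq:reflection-difference-equation-app}. Before the diagonal is hit we have \(Z_t\ne0\), so we localize with the stopping times \(\tau_n:=\inf\{t:\|Z_t\|\le 1/n\text{ or }\|Z_t\|\ge n\}\). On \([0,\tau_n]\) the radius stays in a compact subinterval of \((0,\infty)\), where \(F\) is \(C^2\) with bounded derivatives and It\^o's formula applies. Global Lipschitz continuity of \(u,v\) gives strong existence and uniqueness for the coupled system and makes the drift terms locally bounded. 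We then send \(n\to\infty\) to cover the whole pre-diagonal interval.

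On \(z\ne0\) we use \(\nabla F(z)=\varphi'(r)e\) and the Hessian \(\nabla^2F(z)=\varphi''(r)ee^\top+\frac{\varphi'(r)}{r}(I-ee^\top)\) recorded in Appendix~\ref{app:reflection-extremality}. The first-order term is
\[
\varphi'(r_t)\langle e(Z_t),v_t(\widehat Y_t)-u_t(Y_t)\rangle\,dt-2\sigma_t\varphi'(r_t)\langle e(Z_t),e(Z_t)\rangle\,d\beta_t .
\]
The drift part equals \(\varphi'(r_t)\langle v_t(\widehat Y_t)-u_t(Y_t),Z_t\rangle/r_t\,dt\). The martingale part is \(dM_t:=-2\sigma_t\varphi'(r_t)\,d\beta_t\).

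For the second-order term, note that \(\beta_t=\int e(Z_s)^\top dB_s\) has \(d\langle\beta\rangle_t=\|e\|^2dt=dt\). The noise of \(Z\) is therefore \(-2\sigma_te\,d\beta_t\), with covariance \(4\sigma_t^2ee^\top\,dt\). This is exactly \(A_O\) for \(O=\mathcal R_e\) in Lemma~\ref{lem:reflection-extremal-app}. Hence
\[
\tfrac12\operatorname{tr}\{4\sigma_t^2ee^\top\nabla^2F\}=2\sigma_t^2\varphi''(r_t).
\]
The transverse term vanishes because \((I-ee^\top)e=0\). Adding the three pieces gives \eqref{eq:smooth-radial-ito-app}.

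The only point needing care is that \(M\) is a genuine local martingale up to the hitting time of the diagonal. This holds because \(\varphi'\) is continuous, hence bounded on \([1/n,n]\), so each stopped integral is a true martingale. The identity holds on \([0,\tau_n]\) for every \(n\) and so extends to \([0,\tau_0)\), where \(\tau_0\) is the diagonal hitting time (possibly after explosion of \(r\), which linear growth rules out in finite time). I expect no real obstacle: the identity is pathwise, and the only subtlety is that the radius function fails to be smooth at \(0\), which the localization avoids.
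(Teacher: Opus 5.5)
Your proposal is correct and follows the paper's proof. Both apply It\^o's formula to $F(z)=\varphi(\|z\|)$ with the radial gradient and Hessian, read off the first-order drift term, and use that the difference noise $-2\sigma_t e\,d\beta_t$ has covariance $4\sigma_t^2ee^\top$, so only $2\sigma_t^2\varphi''(r_t)$ survives and the transverse term vanishes; your localization by $\tau_n$ simply makes explicit the ``before the diagonal is hit'' qualifier that the paper leaves implicit.
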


\begin{proof}
Let \(F(z)=\varphi(\|z\|)\) for \(z\ne0\). With \(r=\|z\|\) and \(e=z/r\),
\[
  \nabla F(z)=\varphi'(r)e,
  \qquad
  \nabla^2F(z)
  =
  \varphi''(r)ee^\top+\frac{\varphi'(r)}{r}(I-ee^\top).
\]
By \eqref{eq:reflection-difference-equation-app}, the covariance matrix of the martingale part of \(Z_t\) is
\[
  4\sigma_t^2 e(Z_t)e(Z_t)^\top.
\]
Therefore the second-order It\^o contribution is
\[
  \frac12\operatorname{tr}\{4\sigma_t^2ee^\top\nabla^2F(z)\}
  =2\sigma_t^2e^\top\nabla^2F(z)e
  =2\sigma_t^2\varphi''(r).
\]
The transverse Hessian term is absent because the difference noise is purely radial. The first-order term is
\[
  \nabla F(Z_t)\cdot\{v_t(\widehat Y_t)-u_t(Y_t)\}
  =
  \varphi'(r_t)
  \frac{\langle v_t(\widehat Y_t)-u_t(Y_t),Z_t\rangle}{r_t}.
\]
Combining these terms proves \eqref{eq:smooth-radial-ito-app}.
\end{proof}

\subsection{Nonsmooth costs and coalescence at the diagonal}
\label{app:concave-costs-diagonal}

The radial It\^o formula above was written for \(C^2\) costs and for positive
separations.  The cost constructed in \Cref{thm:affine-tail-metric} is only
concave and may have a kink at the cutoff \(R\), while the radius \(r=\|x-y\|\)
is singular at the diagonal.  Both issues are harmless for the same-drift
learned coupling.  Kinks of a concave cost contribute nonpositive local-time
terms, and after the two learned copies meet we coalesce them, so the diagonal
is absorbing.

\begin{lemma}[Concave radial costs under reflection]
\label{lem:concave-radial-cost-app}
Let \(X_t,Y_t\) be two copies of the same learned diffusion, coupled by
reflection until their meeting time
\[
  \tau_0:=\inf\{t:\|X_t-Y_t\|=0\},
\]
and synchronously coalesced after \(\tau_0\).  Set
\(r_t=\|X_t-Y_t\|\).  Let \(\varphi\) be increasing, concave, locally
Lipschitz on \([0,\infty)\), with \(\varphi(0)=0\) and
\(0\le \varphi'\le 1\) a.e.  Write the distributional second derivative on
\((0,\infty)\) as
\[
  D^2\varphi=\varphi''_{\rm ac}(r)\,dr+D^2_{\rm s}\varphi,
  \qquad D^2_{\rm s}\varphi\le 0 .
\]
Assume that, for a.e. \(r>0\),
\[
  2\sigma_-^2\varphi''_{\rm ac}(r)
  -\bar\kappa(r)r\varphi'(r)
  \le -\rho\varphi(r).
\]
Then, up to localization,
\[
  \E \varphi(r_t)
  \le e^{-\rho(t-s)}\E \varphi(r_s),
  \qquad 0\le s\le t .
\]
Consequently the same learned-semigroup contraction as in
\Cref{lem:reflection-radial-comparison} holds for concave affine-tail costs.
\end{lemma}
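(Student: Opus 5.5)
The plan is to apply a Tanaka--Meyer (generalized Itô) formula to the concave function \(\varphi\) evaluated at the semimartingale \(r_t\) on the pre-meeting interval, and to use coalescence to freeze the process once \(\tau_0\) is reached. First, fix \(\varepsilon>0\) and define \(\tau_\varepsilon:=\inf\{t:r_t\le\varepsilon\}\). On \([s,\tau_\varepsilon\wedge t]\) the radius is bounded away from zero. Because the two copies have the same drift, \eqref{eq:reflection-difference-equation-app} holds with \(u=v=\widehat b_t\), so \(r\mapsto\|z\|\) is smooth along the path. Itô's formula then gives
\[
 dr_t=\frac{\langle \widehat b_t(X_t)-\widehat b_t(Y_t),X_t-Y_t\rangle}{r_t}\,dt-2\sigma_t\,d\beta_t .
\]
The transverse term \((d-1)/r\) vanishes because the difference noise is purely radial, and this is exactly the case \varphi\) linear of \Cref{lem:smooth-radial-ito-app}. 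By the certified profile \eqref{eq:certified-profile}, the drift of \(r_t\) is bounded above by \(-\bar\kappa(r_t)r_t\).

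Second, I will apply the Itô--Tanaka formula for a concave function of the continuous semimartingale \(r_t\):
\[
 \varphi(r_t)=\varphi(r_s)+\int_s^t\varphi'_-(r_u)\,dr_u+\tfrac12\int_{\mathbb R}L^a_t\,D^2\varphi(da).
\]
By the occupation-times formula, the absolutely continuous part of \(D^2\varphi\) contributes \(\tfrac12\varphi''_{\rm ac}(r_u)\,d\langle r\rangle_u=2\sigma_u^2\varphi''_{\rm ac}(r_u)\,du\). Since \(\varphi''_{\rm ac}\le0\) and \(\sigma_u\ge\sigma_-\), this is at most \(2\sigma_-^2\varphi''_{\rm ac}(r_u)\,du\). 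The singular part contributes \(\tfrac12\int L^a\,D^2_s\varphi(da)\le0\), since local times are nonnegative, so it can be dropped. In the drift term, \(\varphi'_-\ge0\) together with the profile bound gives \(\varphi'_-(r)\,dr\le-\bar\kappa(r)r\varphi'(r)\,dt+dM_t\). The left derivative agrees with \(\varphi'\) except on a countable set, and that set carries zero \(du\)-measure along the path because \(\langle r\rangle\) has density \(4\sigma^2\ge4\sigma_-^2>0\). Hence the hypothesis yields
\[
 d\varphi(r_t)\le-\rho\,\varphi(r_t)\,dt+dM_t
\]
on \([s,\tau_\varepsilon)\). The martingale \(M\) has integrand bounded by \(2\sigma_+\), because \(0\le\varphi'\le1\), so it is a true martingale once I localize further with \(\tau_n:=\inf\{t:\|X_t\|+\|Y_t\|\ge n\}\). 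Applying Itô's product rule to \(e^{\rho u}\varphi(r_u)\) then gives
\[
 \E\bigl[e^{\rho(t\wedge\tau_\varepsilon\wedge\tau_n)}\varphi(r_{t\wedge\tau_\varepsilon\wedge\tau_n})\bigr]\le e^{\rho s}\E\varphi(r_s).
\]

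Third, I will remove the localizations; I expect this to be the main technical point. As \(n\to\infty\), Fatou's lemma applies because the Lipschitz and linear-growth conditions in \Cref{ass:standing-window} give nonexplosion. As \(\varepsilon\downarrow0\), \(\tau_\varepsilon\uparrow\tau_0\). On \(\{\tau_0\le t\}\), coalescence makes \(r_u=0\) for all \(u\ge\tau_0\), so \(\varphi(r_t)=0\) and the bound holds trivially there. On \(\{\tau_0>t\}\), the stopping times \(\tau_\varepsilon\) eventually exceed \(t\). Combining the two events gives \(\E[e^{\rho t}\varphi(r_t)]\le e^{\rho s}\E\varphi(r_s)\). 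To handle the lower limit \(r\to0\) of the integration in Tanaka's formula, I can instead use \(\varphi\) restricted to \([\varepsilon,\infty)\) before \(\tau_\varepsilon\), which makes that issue harmless.

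Finally, to obtain the Wasserstein statement, I will start the coupled pair from an optimal \(\mathsf W_\varphi\) coupling of \(\nu,\xi\) at time \(s\); one exists by lower semicontinuity of the cost. The marginals at time \(t\) are \(\nu\widehat P_{s,t}\) and \(\xi\widehat P_{s,t}\), because the reflected noise is Brownian by \Cref{lem:reflected-noise-brownian} and pathwise uniqueness holds under the Lipschitz drift. Therefore
\[
 \mathsf W_\varphi(\nu\widehat P_{s,t},\xi\widehat P_{s,t})\le\E\varphi(r_t)\le e^{-\rho(t-s)}\mathsf W_\varphi(\nu,\xi).
\]
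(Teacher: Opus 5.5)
Your proposal is correct and follows essentially the same route as the paper. You apply Tanaka--Meyer on the pre-meeting interval stopped at \(\tau_\varepsilon\), discard the nonpositive singular local-time term, bound the absolutely continuous part using the certified profile and \(\sigma_u\ge\sigma_-\), apply the integrating factor, and let \(\varepsilon\downarrow0\), with coalescence forcing \(\varphi(r_u)=0\) after \(\tau_0\). Your version is slightly more careful than the paper's sketch in two places: you justify the a.e.\ hypotheses along the path via the occupation-times formula (using \(d\langle r\rangle_u=4\sigma_u^2\,du\ge4\sigma_-^2\,du\)), and you keep the stopped factor \(e^{\rho(t\wedge\tau_\varepsilon)}\) inside the expectation before splitting on \(\{\tau_0\le t\}\), which avoids the paper's loose intermediate bound on \(\E\varphi(r_{t\wedge\tau_\varepsilon})\).
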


\begin{proof}
Work first on the stopped interval
\[
  [s,t\wedge \tau_\varepsilon],
  \qquad
  \tau_\varepsilon:=\inf\{u:r_u\le\varepsilon\},
\]
so the process stays away from the diagonal.  The one-dimensional
Tanaka--Meyer formula applied to the continuous semimartingale \(r_u\) gives
the usual absolutely continuous It\^o terms plus a local-time term involving
\(D^2_{\rm s}\varphi\).  Since \(\varphi\) is concave,
\(D^2_{\rm s}\varphi\le0\), and this singular contribution is nonpositive.
Thus it can only improve the drift upper bound.

Using the certified radial profile and \(\sigma_u\ge\sigma_-\), the
absolutely continuous part satisfies
\[
  d\varphi(r_u)
  \le
  \bigl(
    2\sigma_-^2\varphi''_{\rm ac}(r_u)
    -\bar\kappa(r_u)r_u\varphi'(r_u)
  \bigr)du
  +dM_u
  \le
  -\rho\varphi(r_u)\,du+dM_u
\]
on the stopped interval.  Applying the integrating factor and taking
expectations gives
\[
  \E\varphi(r_{t\wedge\tau_\varepsilon})
  \le
  e^{-\rho(t-s)}\E\varphi(r_s).
\]
Letting \(\varepsilon\downarrow0\) gives the estimate up to the meeting time.

After \(\tau_0\), the two copies are driven by the same Brownian motion and
kept equal.  Hence \(r_u=0\) and \(\varphi(r_u)=\varphi(0)=0\) for
\(u\ge\tau_0\).  Coalescence therefore creates no positive boundary term, and
the same estimate holds on the full interval.  Standard localization removes
boundedness restrictions.
\end{proof}

\subsection{Proof of the reflection contraction lemma}
\label{app:proof-reflection-radial-comparison}

\begin{proof}[Proof of \Cref{lem:reflection-radial-comparison}]
It suffices to prove the claim for smooth globally Lipschitz learned drift, a smooth increasing concave \(1\)-Lipschitz cost with \(\varphi(0)=0\), and stopping before the meeting time. \Cref{lem:concave-radial-cost-app} and the preceding coalescence discussion remove these auxiliary restrictions.

Let \(X_t\) and \(Y_t\) be two copies of the learned diffusion, coupled by reflection until \(\tau_0=\inf\{t:X_t=Y_t\}\) and synchronously afterwards. Before \(\tau_0\), apply \Cref{lem:smooth-radial-ito-app} with
\[
  v_t=u_t=\widehat b_t .
\]
For \(Z_t=X_t-Y_t\) and \(r_t=\|Z_t\|\),
\[
  d\varphi(r_t)
  =
  \left[
    2\sigma_t^2\varphi''(r_t)
    +
    \varphi'(r_t)
    \frac{\langle \widehat b_t(X_t)-\widehat b_t(Y_t),Z_t\rangle}{r_t}
  \right]dt+dM_t .
\]
The certified profile gives
\[
  \frac{\langle \widehat b_t(X_t)-\widehat b_t(Y_t),Z_t\rangle}{r_t}
  \le
  -\bar\kappa(r_t)r_t .
\]
Since \(\varphi''\le0\) and \(\sigma_t\ge\sigma_-\),
\[
  2\sigma_t^2\varphi''(r_t)
  \le
  2\sigma_-^2\varphi''(r_t).
\]
Combining the two estimates yields, before \(\tau_0\),
\[
  d\varphi(r_t)
  \le
  {2\sigma_-^2\varphi''(r_t)-\bar\kappa(r_t)r_t\varphi'(r_t)}\,dt+dM_t
  =
  L_{\bar\kappa,\sigma_-}\varphi(r_t)\,dt+dM_t .
\]
After \(\tau_0\), coalescence gives \(r_t=0\) and \(\varphi(r_t)=0\), so the stopped-and-coalesced inequality remains valid.

If \(L_{\bar\kappa,\sigma_-}\varphi\le-\rho\varphi\), then for any initial time \(s\) and terminal time \(t\), localization and the integrating factor give
\[
  \mathbb E\varphi(r_t)
  \le
  e^{-\rho(t-s)}\mathbb E\varphi(r_s).
\]
For every initial coupling of laws \((\nu,\xi)\), the terminal pair is a coupling of \((\nu\widehat P_{s,t},\xi\widehat P_{s,t})\). Taking the infimum over the initial coupling proves
\[
  \mathsf W_\varphi(\nu\widehat P_{s,t},\xi\widehat P_{s,t})
  \le
  e^{-\rho(t-s)}\mathsf W_\varphi(\nu,\xi).
\]
\end{proof}

\section{Hardy capacity}
\label{app:hardy-capacity}

This appendix proves \Cref{thm:hardy-completeness}. The analytic statement is classical: it is the one-sided weighted Hardy inequality of Muckenhoupt type \citep{muckenhoupt1972hardy}; see also standard treatments of Hardy-type inequalities \citep{opic1990hardy,kufner2003weighted,mazya2011sobolev}. We include the proof because this is where the radial comparison from Appendix~\ref{app:reflection-coupling} becomes a coercivity statement.

The point is the following. After reflection contraction of the learned semigroup, the multidimensional estimate is governed by the one-dimensional operator
\[
  L_{\kappa,\sigma}\varphi
  =
  2\sigma^2\varphi''-\kappa(r)r\varphi' .
\]
The boundary \(r=0\) is the coupling boundary, where the transportation cost vanishes. Thus the relevant coercivity problem is not an unconstrained spectral gap on the line, but a one-sided Dirichlet Hardy inequality on \((0,\infty)\). The capacity \(M_\sigma(\kappa)\) is exactly the Muckenhoupt bottleneck for this half-line problem.

Throughout this appendix, assume \(\kappa(r)r\in L^1_{\mathrm{loc}}([0,\infty))\), and write
\[
  \Phi(r)=\Phi_{\kappa,\sigma}(r)
  =
  \exp\left\{
  -\frac{1}{2\sigma^2}\int_0^r\kappa(u)u\,du
  \right\}.
\]

\subsection{From the radial generator to a Hardy inequality}

The radial generator has the divergence form
\begin{equation}
  L_{\kappa,\sigma}\varphi
  =
  \frac{2\sigma^2}{\Phi}
  \bigl(\Phi\varphi'\bigr)' .
  \label{eq:hardy-divergence-form-app}
\end{equation}
Thus the natural energy associated with the radial problem is
\[
  2\sigma^2\int_0^\infty |f'(r)|^2\Phi(r)\,dr,
\]
and the natural mass is
\[
  \int_0^\infty f(r)^2\Phi(r)\,dr.
\]
The condition \(f(0)=0\) encodes the fact that the distance cost vanishes when the two coupled particles meet.

\begin{lemma}[Radial coercivity and the Hardy constant]
\label{lem:hardy-coercivity-equivalence}
Let \(C_H\) be the best constant in
\begin{equation}
  \int_0^\infty f(r)^2\Phi(r)\,dr
  \le
  C_H
  \int_0^\infty |f'(r)|^2\Phi(r)\,dr,
  \label{eq:one-sided-hardy}
\end{equation}
over nonzero functions \(f\in AC_{\mathrm{loc}}([0,\infty))\) satisfying \(f(0)=0\) and \(f\equiv0\) on \([L,\infty)\) for some finite \(L\). Then
\[
  \lambda_{\mathrm H}(\kappa,\sigma)=\frac{2\sigma^2}{C_H},
\]
with the convention \(2\sigma^2/\infty=0\).
\end{lemma}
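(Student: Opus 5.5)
This identity is a direct comparison of two variational quantities over the same admissible class. I would work with the class \(\mathcal F\) of nonzero \(f\in AC_{\mathrm{loc}}([0,\infty))\) with \(f(0)=0\) and \(f\equiv0\) on \([L,\infty)\) for some finite \(L\). This is exactly the class in \eqref{eq:lambda-h-def} and in \eqref{eq:one-sided-hardy}. By definition,
\[
  \lambda_{\mathrm H}(\kappa,\sigma)=\inf_{f\in\mathcal F}\frac{2\sigma^2\,\mathcal E(f)}{\mathcal N(f)},
  \qquad
  \mathcal E(f):=\int_0^\infty|f'|^2\Phi,\quad \mathcal N(f):=\int_0^\infty f^2\Phi .
\]
The best Hardy constant is
\[
  C_H=\sup_{f\in\mathcal F}\frac{\mathcal N(f)}{\mathcal E(f)} .
\]
The claim then reduces to the elementary fact that \(\inf(2\sigma^2/x)=2\sigma^2/\sup x\) over a common set of values. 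The proof is mostly bookkeeping to make this precise.

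The first step is to check that \(\mathcal N(f)\) and \(\mathcal E(f)\) are well defined and lie in the right ranges for \(f\in\mathcal F\). Since \(\kappa(r)r\in L^1_{\mathrm{loc}}\), the weight \(\Phi\) is continuous and strictly positive on \([0,\infty)\), and hence bounded above and below on \([0,L]\). Because \(f\) is continuous, vanishes off \([0,L]\), and is not identically zero, we get \(0<\mathcal N(f)<\infty\).

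Next I would show \(\mathcal E(f)>0\). If \(\mathcal E(f)=0\), then \(f'=0\) a.e., and \(f(0)=0\) forces \(f\equiv0\), a contradiction. The energy may equal \(+\infty\) when \(f'\notin L^2_{\mathrm{loc}}\); in that case the Rayleigh quotient is \(+\infty\) for \(\lambda_{\mathrm H}\) and the ratio is \(0\) for \(C_H\). These are consistent under the convention \(2\sigma^2/\infty=0\) and \(1/0=\infty\), and such functions do not affect either extremum unless every admissible \(f\) has infinite energy. That cannot happen, since Lipschitz tents lie in \(\mathcal F\) and have finite energy.

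The second step is the algebra itself. For each \(f\in\mathcal F\), set \(q(f):=\mathcal N(f)/\mathcal E(f)\in[0,\infty)\), which is positive whenever \(\mathcal E(f)<\infty\). The Rayleigh quotient in \eqref{eq:lambda-h-def} equals \(2\sigma^2/q(f)\). The map \(x\mapsto 2\sigma^2/x\) is decreasing and continuous from \((0,\infty]\) onto \([0,\infty)\). Therefore
\[
  \inf_f 2\sigma^2/q(f)=2\sigma^2/\sup_f q(f)=2\sigma^2/C_H .
\]
The convention covers the case \(C_H=\infty\), where \(\lambda_{\mathrm H}=0\). It also covers the case where the supremum is not attained: I would take a maximizing sequence for \(q\) and note that it is a minimizing sequence for the Rayleigh quotient, and conversely.

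I do not see a real obstacle. The only point needing care is that both extrema run over exactly the same admissible class with the same weight \(\Phi\), and that the degenerate values (infinite energy, \(C_H=\infty\)) are matched by the stated conventions. The identity is then what lets \Cref{thm:hardy-completeness} follow from the Muckenhoupt bounds \(M_\sigma(\kappa)\le C_H\le 4M_\sigma(\kappa)\).
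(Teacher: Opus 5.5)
Your proposal is correct and follows the same route as the paper: both note that \(\lambda_{\mathrm H}(\kappa,\sigma)\) is the infimum of \(2\sigma^2\) times the reciprocal of the Hardy ratio over the identical admissible class, so it equals \(2\sigma^2/C_H\). Your extra bookkeeping makes explicit what the paper's one-line proof leaves implicit: the positivity and finiteness of the mass, the positivity of the energy, the handling of infinite-energy test functions, and the case \(C_H=\infty\).
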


\begin{proof}
This is just the variational definition \eqref{eq:lambda-h-def}. The numerator in \(\lambda_{\mathrm H}\) is \(2\sigma^2\) times the Dirichlet energy in \eqref{eq:one-sided-hardy}, so the reciprocal of the best Hardy constant gives the Rayleigh quotient.
\end{proof}

This lemma explains why Hardy, rather than an arbitrary Poincaré inequality, appears here: the same-drift radial coupling has an absorbing boundary at the origin and a one-sided tail. The next lemma identifies the best constant in \eqref{eq:one-sided-hardy}, up to the sharp universal factor \(4\), by the Muckenhoupt capacity.

\subsection{The one-sided Muckenhoupt inequality}

\begin{lemma}[One-sided Muckenhoupt Hardy inequality]
\label{lem:muckenhoupt-hardy}
Let \(w:(0,\infty)\to(0,\infty)\) be measurable with \(w,w^{-1}\in L^1_{\mathrm{loc}}([0,\infty))\). Define
\[
  U(r)=\int_0^r w(s)^{-1}\,ds,
  \qquad
  V(r)=\int_r^\infty w(s)\,ds,
  \qquad
  M=\sup_{r>0}U(r)V(r).
\]
Let \(C_H\) be the best constant in
\begin{equation}
  \int_0^\infty f(r)^2w(r)\,dr
  \le
  C_H\int_0^\infty |f'(r)|^2w(r)\,dr,
  \tag{\ref{eq:one-sided-hardy}}
\end{equation}
over nonzero functions \(f\in AC_{\mathrm{loc}}([0,\infty))\) satisfying \(f(0)=0\) and \(f\equiv0\) on \([L,\infty)\) for some finite \(L\). Then
\begin{equation}
  M\le C_H\le 4M,
  \label{eq:muckenhoupt-bound}
\end{equation}
with the convention \(C_H=\infty\) if \(M=\infty\).
\end{lemma}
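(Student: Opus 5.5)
The plan has two independent halves: the upper bound $C_H\le 4M$, proved by a weighted Cauchy--Schwarz argument in Muckenhoupt's style, and the lower bound $M\le C_H$, proved with an explicit test function. Throughout, $M<\infty$ may be assumed for the upper bound, since otherwise it is vacuous. Because $w,w^{-1}\in L^1_{\mathrm{loc}}$, $U$ is finite, continuous and increasing with $U(0)=0$. The function $V$ may be infinite, but if $V(r)=\infty$ for some $r$ then $M=\infty$; so when $M<\infty$, $V$ is finite, decreasing and absolutely continuous.

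For the upper bound, take an admissible $f$ with $f(0)=0$ and support in $[0,L]$, so that $f(r)=\int_0^r f'(s)\,ds$. Introduce the auxiliary weight $h(s)=U(s)^{1/2}$ and apply Cauchy--Schwarz:
\[
f(r)^2\le\Bigl(\int_0^r |f'|^2 w\,h\,ds\Bigr)\Bigl(\int_0^r \frac{w^{-1}}{h}\,ds\Bigr).
\]
Since $U'=w^{-1}$ a.e., the second factor equals $\int_0^r U'U^{-1/2}\,ds=2U(r)^{1/2}$. Multiply by $w(r)$, integrate over $r\in(0,L)$, and use Fubini:
\[
\int f^2w\le 2\int_0^L |f'(s)|^2w(s)U(s)^{1/2}\Bigl(\int_s^\infty w(r)U(r)^{1/2}\,dr\Bigr)ds .
\]
The key step is to bound the inner integral by $2M\,U(s)^{-1/2}$. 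Using $U(r)\le M/V(r)$ it is at most $M^{1/2}\int_s^\infty wV^{-1/2}\,dr=M^{1/2}\int_s^\infty(-V')V^{-1/2}\,dr=2M^{1/2}V(s)^{1/2}$. Then $V(s)^{1/2}\le M^{1/2}U(s)^{-1/2}$ gives the claim, and altogether $\int f^2w\le 4M\int|f'|^2w$. The delicate points are the absolute continuity of $V$ and the chain rule for $V^{1/2}$ where $V$ may tend to $0$; both are routine because $V$ is monotone and absolutely continuous. The case $M=\infty$ is trivial.

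For the lower bound, fix $r>0$ and $L>r$, and define the test function
\[
f=U \text{ on } [0,r],\qquad f=U(r) \text{ on } [r,L],\qquad f=0 \text{ beyond } L,
\]
with a linear ramp on $[L,L+1]$ so that $f$ is absolutely continuous. On $[0,r]$ we have $f'=w^{-1}$, so $\int_0^r|f'|^2w=U(r)$. The energy of the ramp is $U(r)^2\int_L^{L+1}w$. For the mass, $\int f^2w\ge U(r)^2\int_r^L w$. Hence
\[
C_H\ge \frac{U(r)^2\int_r^L w}{U(r)+U(r)^2\int_L^{L+1}w}.
\]
If $V(r)<\infty$, the ramp term tends to $0$ as $L\to\infty$, and we get $C_H\ge U(r)V(r)$. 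If $V(r)=\infty$, choose a smoother cutoff instead, e.g. a ramp whose length grows with $L$ so that its energy stays bounded while $\int_r^L w\to\infty$; this forces $C_H=\infty$, which is consistent with $M=\infty$. Taking the supremum over $r$ gives $M\le C_H$.

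The main obstacle is the $V(r)=\infty$ case of the lower bound. There I would take the ramp from $U(r)$ to $0$ over an interval $[L,L']$ built to minimize energy, namely the $w$-harmonic profile with energy $U(r)^2/\int_L^{L'}w^{-1}$. Since $\int_L^{L'}w^{-1}\to\infty$ as $L'\to\infty$ whenever $w^{-1}\notin L^1$ near infinity, this energy can be made small. If instead $w^{-1}$ is integrable at infinity, then $\int w=\infty$ on the tail, and Cauchy--Schwarz forces $\int_L^{L'} w^{-1}\cdot\int_L^{L'} w\ge (L'-L)^2$; in that subcase I would combine the two factors more carefully, possibly choosing $L$ and $L'$ adaptively.
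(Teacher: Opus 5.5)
\textbf{The upper bound and most of the lower bound.} Your upper bound is the paper's argument with $\beta=1/2$: bounding $\int_s^\infty wU^{1/2}$ through $U\le M/V$ is exactly the paper's integration-by-parts step. Your unit linear ramp correctly gives $C_H\ge U(r)V(r)$ whenever $V(r)<\infty$, and this includes the case $M=\infty$ with $V$ finite. It is slightly simpler than the paper's $w$-harmonic ramp, which must first show $U(\infty)=\infty$. Your harmonic ramp also correctly handles the case $V\equiv\infty$ with $U(\infty)=\infty$.

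\textbf{The gap.} You leave the subcase $V\equiv\infty$, $U(\infty)<\infty$ open, and your claim that a longer ramp keeps bounded energy fails there. Any descent from height $U(r)$ at $L$ to $0$ costs at least $U(r)^2/(U(\infty)-U(L))$, which tends to $\infty$.

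No adaptive choice of $L,L'$ can repair this, because the asserted $C_H=\infty$ is false in this subcase. Take $w(s)=e^{s}$. Then $V\equiv\infty$, so $M=\infty$. Yet for every admissible $f$ (with $f(0)=0$ and $f\equiv0$ on $[L,\infty)$), integrating $(f^2e^{s})'$ over $[0,L]$ gives
\[
\int_0^\infty f^2e^{s}\,ds=-2\int_0^\infty ff'e^{s}\,ds\le 2\Bigl(\int_0^\infty f^2e^{s}\,ds\Bigr)^{1/2}\Bigl(\int_0^\infty |f'|^2e^{s}\,ds\Bigr)^{1/2},
\]
so $C_H\le 4$.

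The paper's proof has the same hole. It dispatches $M=\infty$ by invoking its construction, but that construction needs $U(\infty)=\infty$, which the paper derives only from $M<\infty$ via tail integrability of $w$.

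The lemma therefore needs an extra hypothesis, such as $\int_1^\infty w<\infty$; under that hypothesis your proof is complete. In the paper's applications the tail reserve $m_R>0$ makes $\Phi_{\kappa,\sigma}$ integrable at infinity, so this hypothesis holds wherever the lemma is actually used.
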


\begin{proof}
If \(M=\infty\), the lower-bound construction below, with \(r\) chosen along a sequence for which \(U(r)V(r)\to\infty\), gives \(C_H=\infty\). Hence assume \(M<\infty\).

For the lower bound, fix \(r>0\) and \(\varepsilon\in(0,1)\). Choose \(b>r\) such that
\[
  \int_r^b w\ge (1-\varepsilon)V(r).
\]
Since \(M<\infty\) and \(V(r)>0\), the tail integrability of \(w\) implies \(U(\infty)=\infty\). Hence choose \(c>b\) such that
\[
  U(c)-U(b)\ge U(r)/\varepsilon.
\]
Define
\[
f(s)=U(s\wedge r)\quad(s\le b),
\]
\[
f(s)=U(r)\frac{U(c)-U(s)}{U(c)-U(b)}
\quad(b\le s\le c),
\]
and \(f(s)=0\) for \(s\ge c\). Then
\[
  \int |f'|^2w
  \le
  U(r)+\frac{U(r)^2}{U(c)-U(b)}
  \le
  (1+\varepsilon)U(r),
\]
while
\[
  \int f^2w
  \ge
  U(r)^2\int_r^b w
  \ge
  (1-\varepsilon)U(r)^2V(r).
\]
Letting \(\varepsilon\downarrow0\) yields \(C_H\ge U(r)V(r)\). Taking the supremum over \(r\) gives \(C_H\ge M\).

For the upper bound, fix \(\beta\in(0,1)\). Since \(f(0)=0\), Cauchy's inequality gives
\begin{align*}
|f(x)|^2
&\le
\left(\int_0^x |f'(s)|^2w(s)U(s)^\beta\,ds\right)
\left(\int_0^x w(s)^{-1}U(s)^{-\beta}\,ds\right)\\
&=
\frac{U(x)^{1-\beta}}{1-\beta}
\int_0^x |f'(s)|^2w(s)U(s)^\beta\,ds .
\end{align*}
Integrating against \(w(x)dx\) and applying Tonelli,
\[
\int f^2w
\le
\frac{1}{1-\beta}
\int_0^\infty |f'(s)|^2w(s)U(s)^\beta
\left(\int_s^\infty U(x)^{1-\beta}w(x)\,dx\right)ds.
\]
Because \(U(x)V(x)\le M\) and \(w(x)dx=-dV(x)\), integration by parts gives
\[
\int_s^\infty U(x)^{1-\beta}w(x)\,dx
\le
\frac{M}{\beta}U(s)^{-\beta}.
\]
Therefore
\[
  \int f^2w
  \le
  \frac{M}{\beta(1-\beta)}
  \int |f'|^2w.
\]
Taking \(\beta=1/2\) gives \(C_H\le4M\).
\end{proof}

\subsection{Proof of the Hardy capacity theorem}

\begin{proof}[Proof of \Cref{thm:hardy-completeness}]
Apply \Cref{lem:muckenhoupt-hardy} with \(w=\Phi_{\kappa,\sigma}\). Then
\[
  M=M_\sigma(\kappa),
\]
and by \Cref{lem:hardy-coercivity-equivalence},
\[
  \lambda_{\mathrm H}(\kappa,\sigma)=\frac{2\sigma^2}{C_H}.
\]
Since
\[
  M_\sigma(\kappa)\le C_H\le4M_\sigma(\kappa),
\]
inversion gives
\[
  \frac{\sigma^2}{2M_\sigma(\kappa)}
  \le
  \lambda_{\mathrm H}(\kappa,\sigma)
  \le
  \frac{2\sigma^2}{M_\sigma(\kappa)}.
\]
If \(M_\sigma(\kappa)=\infty\), then \(C_H=\infty\) and \(\lambda_{\mathrm H}=0\), which is the same statement under the convention \(1/\infty=0\).
\end{proof}

Thus the capacity \(M_\sigma(\kappa)\) is not an auxiliary artifact. It is the Muckenhoupt scale of the one-dimensional obstruction left after reflection radialization: if \(M_\sigma(\kappa)\) is large, a concave cost must spend too much slope before it can exploit tail contraction; if \(M_\sigma(\kappa)\) is finite, the radial problem has a positive Hardy scale.

\subsection{Structural properties used by load--reserve compression}

The main text does not use the full profile \(\kappa\) directly. It compresses \(M_\sigma(\kappa)\) into load, reserve, and cutoff. The next elementary facts explain why that compression has the right monotonicity and scaling.

\begin{lemma}[Structure and scaling]
\label{lem:hardy-structure-scaling}
The Hardy capacity and Hardy scale have the following properties.
\begin{enumerate}[leftmargin=1.35em,itemsep=1pt,topsep=2pt]
\item If \(\kappa_1\le \kappa_2\) a.e., then
\[
  M_\sigma(\kappa_1)\ge M_\sigma(\kappa_2).
\]
\item For every \(c>0\),
\begin{equation}
  M_\sigma(c\kappa)=M_{\sigma/\sqrt c}(\kappa).
  \label{eq:M-scaling-correct}
\end{equation}
\item For every \(c>0\),
\begin{equation}
  \lambda_{\mathrm H}(c\kappa,\sigma)
  =
  c\,\lambda_{\mathrm H}(\kappa,\sigma/\sqrt c).
  \label{eq:lambda-scaling-correct}
\end{equation}
\end{enumerate}
\end{lemma}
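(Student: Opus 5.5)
The three claims follow directly from the explicit form of the weight \(\Phi_{\kappa,\sigma}\) and a change of variables in the Rayleigh quotient \eqref{eq:lambda-h-def}; no analytic input beyond \Cref{thm:hardy-completeness} is needed. The plan is to prove (1) and (2) at the level of \(M_\sigma\), and to prove (3) directly from the variational definition rather than through the two-sided Hardy bounds, since those lose a factor \(4\).

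\emph{Monotonicity.} If \(\kappa_1\le\kappa_2\) a.e., then for every \(r\)
\[
\int_0^r\kappa_1(u)u\,du\le\int_0^r\kappa_2(u)u\,du .
\]
Hence \(\Phi_{\kappa_1,\sigma}\ge\Phi_{\kappa_2,\sigma}\) pointwise, and so \(\Phi_{\kappa_1,\sigma}^{-1}\le\Phi_{\kappa_2,\sigma}^{-1}\). Consequently the factor \(\int_r^\infty\Phi\) increases from \(\kappa_2\) to \(\kappa_1\), while the factor \(\int_0^r\Phi^{-1}\) decreases. The two factors move in opposite directions, so \(M\) is \emph{not} monotone factor-by-factor. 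This is the one genuinely non-routine point.

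\emph{Handling the monotonicity obstacle.} The estimate that works uses the ratio \(\Phi_{\kappa_1}/\Phi_{\kappa_2}\), whose logarithm is
\[
\frac{1}{2\sigma^2}\int_0^r(\kappa_2-\kappa_1)u\,du ,
\]
which is nondecreasing in \(r\). For \(s\ge r\) this gives \(\Phi_1(s)\ge\Phi_2(s)\,\Phi_1(r)/\Phi_2(r)\). For \(s\le r\) it gives \(\Phi_1(s)^{-1}\ge\Phi_2(s)^{-1}\,\Phi_2(r)/\Phi_1(r)\). Multiplying the two integrated inequalities, the factors \(\Phi_1(r)/\Phi_2(r)\) cancel, so the product \(U_1(r)V_1(r)\) dominates \(U_2(r)V_2(r)\) for each \(r\). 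Taking the supremum over \(r\) yields \(M_\sigma(\kappa_1)\ge M_\sigma(\kappa_2)\).

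\emph{Scaling of \(M\).} From the definition,
\[
\Phi_{c\kappa,\sigma}=\exp\Bigl\{-\tfrac{c}{2\sigma^2}\int_0^r\kappa(u)u\,du\Bigr\}=\Phi_{\kappa,\sigma/\sqrt c},
\]
since replacing \(\sigma\) by \(\sigma/\sqrt c\) turns \(1/(2\sigma^2)\) into \(c/(2\sigma^2)\). The weights coincide identically, so the capacities agree and \eqref{eq:M-scaling-correct} follows.

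\emph{Scaling of \(\lambda_{\mathrm H}\).} The same weight identity applies inside \eqref{eq:lambda-h-def}. The Rayleigh quotient for \((c\kappa,\sigma)\) has numerator \(2\sigma^2\int|f'|^2\Phi\), whereas the quotient for \((\kappa,\sigma/\sqrt c)\) has numerator \(2(\sigma^2/c)\int|f'|^2\Phi\) with the same \(\Phi\). The admissible class of \(f\) is identical for both. The two quotients therefore differ exactly by the factor \(c\) for every \(f\), and taking the infimum gives \eqref{eq:lambda-scaling-correct}.
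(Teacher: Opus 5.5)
Your proof is correct and follows the paper's argument. Parts (2)--(3) use the same weight identity $\Phi_{c\kappa,\sigma}=\Phi_{\kappa,\sigma/\sqrt c}$, applied to the capacity and to the Rayleigh quotient. For monotonicity, the paper writes the product as $\int_0^r\int_r^\infty \Phi(s)^{-1}\Phi(t)\,dt\,ds$ and observes that the integrand $\exp\{-\frac{1}{2\sigma^2}\int_s^t\kappa(u)u\,du\}$ is pointwise larger for $\kappa_1$; this is the same increment comparison as your ratio-at-$r$ cancellation, done in a single step.
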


\begin{proof}
Let
\[
  V_i(r)=\int_0^r\kappa_i(u)u\,du.
\]
For \(0\le s\le r\le t\),
\begin{equation}
  \left(\int_0^r\Phi^{-1}\right)
  \left(\int_r^\infty\Phi\right)
  =
  \int_0^r\int_r^\infty
  \exp\left\{\frac{V(s)-V(t)}{2\sigma^2}\right\}\,dt\,ds.
  \label{eq:double-integral-M}
\end{equation}
If \(\kappa_1\le\kappa_2\), then
\[
  V_1(s)-V_1(t)
  =
  -\int_s^t\kappa_1(u)u\,du
  \ge
  -\int_s^t\kappa_2(u)u\,du
  =
  V_2(s)-V_2(t).
\]
The integrand in \eqref{eq:double-integral-M} is therefore larger for \(\kappa_1\), and monotonicity follows after taking the supremum over \(r\).

For scaling,
\[
\Phi_{c\kappa,\sigma}(r)
=
\exp\left\{-\frac{1}{2\sigma^2}
\int_0^r c\kappa(u)u\,du\right\}
=
\Phi_{\kappa,\sigma/\sqrt c}(r).
\]
This proves \eqref{eq:M-scaling-correct}. The same identity of weights gives
\begin{align*}
\lambda_{\mathrm H}(c\kappa,\sigma)
&=
\inf_f
\frac{2\sigma^2\int |f'|^2\Phi_{\kappa,\sigma/\sqrt c}}
     {\int f^2\Phi_{\kappa,\sigma/\sqrt c}}\\
&=
c\inf_f
\frac{2(\sigma^2/c)\int |f'|^2\Phi_{\kappa,\sigma/\sqrt c}}
     {\int f^2\Phi_{\kappa,\sigma/\sqrt c}}
=
c\,\lambda_{\mathrm H}(\kappa,\sigma/\sqrt c).
\end{align*}
\end{proof}

The monotonicity states that making the profile more contractive can only improve the Hardy scale. The scaling identities show that increasing profile strength is equivalent to decreasing the diffusion scale. These are the structural facts used implicitly when the load--reserve theorem balances tail reserve \(m_R\), core-crossing scale \(\sigma^2/R^2\), and adverse load \(A_R\).
\section{Load--reserve compression and the affine-tail metric}
\label{app:load-reserve-metric}

This appendix proves \Cref{thm:load-reserve,thm:affine-tail-metric}. The two proofs should be read as one mechanism. The Hardy capacity from \Cref{app:hardy-capacity} is the exact one-dimensional bottleneck left by reflection contraction of the learned semigroup, but it depends on the whole profile. The load--reserve estimate compresses this bottleneck into the finite data \((A_R,m_R,R)\). The concavity budget then explains why \(A_R\) appears exponentially: an increasing concave cost must spend slope to cross the adverse core. Finally, the affine-tail construction realizes this budget while keeping a positive tail slope for terminal \(\Wtwo^2\) reporting.

\subsection{Compressing the Hardy capacity}

\begin{proof}[Proof of \Cref{thm:load-reserve}]
Write
\[
  V(r)=\int_0^r\kappa(u)u\,du,
  \qquad
  \Phi(r)=e^{-V(r)/(2\sigma^2)},
  \qquad
  A=A_R,
  \qquad
  m=m_R,
  \qquad
  L_m=\frac{\sigma}{\sqrt m}.
\]
For fixed \(r>0\), the Muckenhoupt product can be written as
\begin{equation}
  \left(\int_0^r\Phi^{-1}\right)
  \left(\int_r^\infty\Phi\right)
  =
  \int_0^r\int_r^\infty
  \exp\left\{\frac{V(s)-V(t)}{2\sigma^2}\right\}\,dt\,ds.
  \label{eq:load-reserve-double-int}
\end{equation}
We bound the right-hand side uniformly in \(r\). The proof separates the same two regions that appear in the theorem: the adverse core \([0,R]\) and the reserve tail \([R,\infty)\).

First suppose \(r\le R\). On \(0\le s\le r\le t\le R\), the negative part of \(\kappa(u)u\) in the core has total mass at most \(A\), hence
\[
  V(s)-V(t)
  =
  -\int_s^t\kappa(u)u\,du
  \le
  \int_0^R[-\kappa(u)]_+u\,du
  =
  A.
\]
This core--core region contributes at most
\[
  e^{A/(2\sigma^2)}R^2.
\]
On \(0\le s\le r\le R\le t\), the core again costs at most \(A\), while the tail reserve gives
\[
  V(s)-V(t)
  \le
  A-\int_R^t m u\,du
  =
  A-\frac{m}{2}(t^2-R^2).
\]
Therefore
\[
\int_0^r\int_R^\infty
  \exp\left\{\frac{V(s)-V(t)}{2\sigma^2}\right\}\,dt\,ds
  \le
  e^{A/(2\sigma^2)}R
  \int_R^\infty e^{-m(t^2-R^2)/(4\sigma^2)}\,dt.
\]
Since \(t^2-R^2\ge (t-R)^2\) for \(t\ge R\), the last integral is bounded by \(C L_m\). Thus, when \(r\le R\), the Muckenhoupt product is bounded by
\[
  C e^{A/(2\sigma^2)}R(R+L_m).
\]

Now suppose \(r\ge R\). Split the \(s\)-integral in \eqref{eq:load-reserve-double-int} into \(s\in[0,R]\) and \(s\in[R,r]\). For \(s\in[0,R]\) and \(t\ge r\), the same core-tail estimate gives
\[
  V(s)-V(t)
  \le
  A-\frac{m}{2}(t^2-R^2),
\]
and hence this region contributes at most
\[
  e^{A/(2\sigma^2)}R
  \int_r^\infty e^{-m(t^2-R^2)/(4\sigma^2)}\,dt
  \le
  C e^{A/(2\sigma^2)}R L_m.
\]
For \(R\le s\le r\le t\), the tail reserve gives directly
\[
  V(s)-V(t)
  \le
  -\int_s^t m u\,du
  =
  -\frac{m}{2}(t^2-s^2).
\]
Since \(t^2-s^2\ge (t-s)^2\), and with \(u=r-s\), \(v=t-r\), we get
\begin{align*}
\int_R^r\int_r^\infty
  e^{-m(t^2-s^2)/(4\sigma^2)}\,dt\,ds
  &\le
  \int_0^\infty\int_0^\infty
  e^{-m(u+v)^2/(4\sigma^2)}\,dv\,du  \\
  &=
  \int_0^\infty w e^{-mw^2/(4\sigma^2)}\,dw
  \le
  C L_m^2 .
\end{align*}
Taking the supremum over \(r\) in \eqref{eq:load-reserve-double-int} gives
\[
  M_\sigma(\kappa)
  \le
  C\left[L_m^2+e^{A/(2\sigma^2)}R(R+L_m)\right],
\]
which proves \eqref{eq:load-reserve-capacity-main}.

It remains to convert this capacity bound into the displayed Hardy-scale lower bound. Let
\[
  E_A:=e^{A/(2\sigma^2)}.
\]
Since \(E_A\ge1\),
\[
  L_m^2+E_A R(R+L_m)
  \le
  C E_A(R+L_m)^2.
\]
By \Cref{thm:hardy-completeness},
\[
  \lambda_{\mathrm H}(\kappa,\sigma)
  \ge
  \frac{\sigma^2}{2M_\sigma(\kappa)}
  \ge
  c E_A^{-1}\frac{\sigma^2}{(R+L_m)^2}.
\]
Finally,
\[
  \frac{\sigma^2}{(R+L_m)^2}
  \ge
  c\left(\frac{\sigma^2}{R^2}\wedge\frac{\sigma^2}{L_m^2}\right)
  =
  c\left(\frac{\sigma^2}{R^2}\wedge m_R\right).
\]
Combining the last two displays proves \eqref{eq:load-reserve-lambda-main}.
\end{proof}

\subsection{The slope budget of an adverse core}

The previous proof shows that \(A_R\) controls the Hardy capacity. The next lemma explains the same quantity from the metric side. If the profile is adverse on the core, a concave cost must spend derivative there; the total logarithmic loss of slope is exactly the integrated load.

\begin{lemma}[Concavity budget]
\label{lem:concavity-budget}
Let \(q\ge0\) and \(\int_0^R q(r)r\,dr<\infty\). Suppose \(\varphi\) is increasing and concave, \(\varphi'\) is absolutely continuous on \((0,R)\), and
\begin{equation}
  2\sigma^2\varphi''(r)+q(r)r\varphi'(r)\le0
  \qquad\text{for a.e. }0<r<R.
  \label{eq:concavity-budget-ineq}
\end{equation}
Then
\begin{equation}
  \log\frac{\varphi'(0+)}{\varphi'(R-)}
  \ge
  \frac{1}{2\sigma^2}\int_0^R q(r)r\,dr.
  \label{eq:exact-budget}
\end{equation}
The bound is sharp: equality is attained by
\[
  \varphi'(r)
  =
  \varphi'(0+)
  \exp\left\{-\frac{1}{2\sigma^2}\int_0^r q(u)u\,du\right\}.
\]
\end{lemma}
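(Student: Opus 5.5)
The plan is a Grönwall argument for the logarithm of the slope. Since \(\varphi\) is increasing and concave, \(\varphi'\ge0\) is nonincreasing on \((0,R)\), so \(\varphi'(0+)\) and \(\varphi'(R-)\) exist in \([0,\infty]\).

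First I dispose of the degenerate case. If \(\varphi'(R-)=0\), the left side of \eqref{eq:exact-budget} is \(+\infty\) (with \(\varphi'(0+)>0\) for nonconstant \(\varphi\)), and the inequality holds trivially. So I assume \(\varphi'(R-)>0\). Monotonicity then gives \(\varphi'(r)\ge\varphi'(R-)>0\) on \((0,R)\).

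On each compact \([\delta,R-\delta]\), \(\varphi'\) is absolutely continuous and bounded below by a positive constant. Hence \(\log\varphi'\) is absolutely continuous there, with \((\log\varphi')'=\varphi''/\varphi'\) a.e. Dividing \eqref{eq:concavity-budget-ineq} by \(2\sigma^2\varphi'>0\) gives
\[
(\log\varphi')'(r)\le-\frac{q(r)r}{2\sigma^2}\qquad\text{for a.e. }r\in(0,R).
\]
Integrating from \(\delta\) to \(R-\delta\) yields
\[
\log\frac{\varphi'(\delta)}{\varphi'(R-\delta)}\ge\frac{1}{2\sigma^2}\int_\delta^{R-\delta}q(u)u\,du.
\]
Now let \(\delta\downarrow0\). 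The left side converges to the log-ratio of one-sided limits, by monotonicity of \(\varphi'\). The right side converges by monotone convergence, since \(q\ge0\) and \(qr\) is integrable. If \(\varphi'(0+)=\infty\), the bound is again trivial.

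For sharpness, I take \(\varphi'\) equal to the displayed exponential, and \(\varphi(r)=\int_0^r\varphi'\). This function is absolutely continuous on \([0,R]\) because \(qr\in L^1(0,R)\). It is positive and nonincreasing, since \(q\ge0\), so \(\varphi\) is increasing and concave. Its derivative satisfies \(2\sigma^2\varphi''=-q r\varphi'\) a.e., which is equality in \eqref{eq:concavity-budget-ineq}. Evaluating the log-ratio then gives equality in \eqref{eq:exact-budget}.

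The main obstacle is modest. One must justify the chain rule for \(\log\varphi'\) only where \(\varphi'\) stays bounded away from zero and on compact subintervals, so that local absolute continuity suffices. The endpoint limits must then be passed carefully, including the possibly infinite values at \(0\) and \(R\).
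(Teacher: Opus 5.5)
Your proposal is correct and follows the paper's proof essentially step for step. You dispose of the case where $\varphi'$ vanishes before $R$, divide the differential inequality by $\varphi'>0$, integrate $(\log\varphi')'=\varphi''/\varphi'$, and check the displayed exponential as the equality case. Your extra care---integrating on $[\delta,R-\delta]$, where $\log\varphi'$ is genuinely absolutely continuous, and then passing to the one-sided limits by monotonicity of $\varphi'$ and monotone convergence---fills in a detail the paper leaves implicit.
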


\begin{proof}
If \(\varphi'\) vanishes before reaching \(R\), then the left-hand side of \eqref{eq:exact-budget} is infinite and there is nothing to prove. Otherwise \(\varphi'(r)>0\) on \((0,R)\), so we can divide \eqref{eq:concavity-budget-ineq} by \(\varphi'(r)\) and integrate:
\[
  2\sigma^2\int_0^R \frac{\varphi''(r)}{\varphi'(r)}\,dr
  +
  \int_0^R q(r)r\,dr
  \le0.
\]
Since \((\log\varphi')'=\varphi''/\varphi'\) a.e.,
\[
  \log\varphi'(R-)-\log\varphi'(0+)
  \le
  -\frac{1}{2\sigma^2}\int_0^R q(r)r\,dr.
\]
This is \eqref{eq:exact-budget}. The displayed extremizer is positive, nonincreasing, absolutely continuous, and satisfies \eqref{eq:concavity-budget-ineq} with equality.
\end{proof}

\subsection{Constructing the affine-tail metric}

We now turn the load--reserve data into the cost used in the propagation theorem. The construction has three requirements. It must spend concavity inside the adverse core, contract at the core-crossing scale, and keep a positive affine tail for terminal \(\Wtwo^2\) reporting.

\begin{proof}[Proof of \Cref{thm:affine-tail-metric}]
Let
\[
  q=[-\kappa]_+,\qquad A=A_R,\qquad m=m_R,\qquad b=\frac{\sigma^2}{R}.
\]
Define
\[
  H(r)=\int_0^{r\wedge R}\frac{q(u)u+b}{2\sigma^2}\,du,
  \qquad
  h(r)=e^{-H(r)},
\]
and
\[
  \varphi_R(r)
  =
  \int_0^{r\wedge R}h(s)\,ds+h(R)(r-R)_+.
\]
Then \(\varphi_R\) is increasing and concave, \(\varphi_R(0)=0\), \(\varphi_R'(0+)=1\), and \(\varphi_R\) is affine on \([R,\infty)\). Moreover
\[
  H(R)=\frac{A}{2\sigma^2}+\frac12,
\]
so its affine tail slope is
\[
  a_R=h(R)=\exp\left\{-\frac{A}{2\sigma^2}-\frac12\right\},
\]
which proves \eqref{eq:a-R-main}.

We prove the generator inequality separately on the core and the tail.

\paragraph{Core: \(0<r<R\).}
For a.e. \(0<r<R\),
\[
  \varphi_R'(r)=h(r),
  \qquad
  \varphi_R''(r)=-\frac{q(r)r+b}{2\sigma^2}\,\varphi_R'(r).
\]
Thus
\[
  2\sigma^2\varphi_R''(r)+q(r)r\varphi_R'(r)
  =
  -b\varphi_R'(r).
\]
Since \(-\kappa(r)r\le q(r)r\), we obtain
\[
  L_{\kappa,\sigma}\varphi_R(r)
  =
  2\sigma^2\varphi_R''(r)-\kappa(r)r\varphi_R'(r)
  \le
  -b\varphi_R'(r).
\]
On the core, \(H(r)\le H(R)\), hence \(\varphi_R'(r)\ge a_R\). Also \(\varphi_R(r)\le r\le R\). Therefore
\[
  b\varphi_R'(r)
  \ge
  \frac{\sigma^2}{R}a_R
  \ge
  \frac{\sigma^2}{R^2}a_R\,\varphi_R(r).
\]
Consequently,
\[
  L_{\kappa,\sigma}\varphi_R(r)
  \le
  -\frac{\sigma^2}{R^2}a_R\,\varphi_R(r),
  \qquad 0<r<R.
\]

\paragraph{Tail: \(r>R\).}
On the tail, \(\varphi_R'(r)=a_R\) and \(\varphi_R''(r)=0\). Since \(m_R=m\), we have \(\kappa(r)\ge m\) for a.e. \(r\ge R\). Thus
\[
  L_{\kappa,\sigma}\varphi_R(r)
  =
  -\kappa(r)r a_R
  \le
  -m r a_R.
\]
Furthermore,
\[
  \varphi_R(r)
  =
  \varphi_R(R)+a_R(r-R)
  \le
  R+a_R(r-R),
\]
and because \(0<a_R\le1\),
\[
  a_R r
  =
  a_R R+a_R(r-R)
  \ge
  a_R R+a_R^2(r-R)
  \ge
  a_R\varphi_R(r).
\]
Therefore
\[
  L_{\kappa,\sigma}\varphi_R(r)
  \le
  -m a_R\,\varphi_R(r),
  \qquad r>R.
\]

Combining the core and tail estimates gives
\[
  L_{\kappa,\sigma}\varphi_R(r)
  \le
  -a_R\left(m\wedge\frac{\sigma^2}{R^2}\right)\varphi_R(r)
  \qquad\text{for a.e. }r>0.
\]
Since
\[
  a_R=\exp\left\{-\frac{A}{2\sigma^2}-\frac12\right\}
  \ge
  e^{-1/2}\exp\left\{-\frac{A}{2\sigma^2}\right\},
\]
the claimed rate \eqref{eq:rho-R-main} follows after decreasing the universal constant \(c\). The possible kink at \(R\) is irrelevant for the a.e. generator inequality and is handled by the concave-cost approximation in Appendix~\ref{app:reflection-coupling}.
\end{proof}

\paragraph{Tunable core-bending variant.}
The same construction gives a one-parameter family useful for optimizing the rate--reporting tradeoff. For \(\gamma>0\), replace \(b=\sigma^2/R\) by
\[
  b_\gamma=\frac{\gamma\sigma^2}{R}
\]
and define
\[
  H_\gamma(r)=\int_0^{r\wedge R}\frac{q(u)u+b_\gamma}{2\sigma^2}\,du,
  \qquad
  \varphi_{\gamma,R}(r)
  =
  \int_0^{r\wedge R}e^{-H_\gamma(s)}\,ds
  +
  e^{-H_\gamma(R)}(r-R)_+ .
\]
Then
\[
  a_{\gamma,R}:=\lim_{r\to\infty}\varphi_{\gamma,R}'(r)
  =
  \exp\left\{-\frac{A_R}{2\sigma^2}-\frac{\gamma}{2}\right\}.
\]
Repeating the core estimate above with \(b_\gamma\) gives
\[
  L_{\kappa,\sigma}\varphi_{\gamma,R}(r)
  \le
  -a_{\gamma,R}
  \left(m_R\wedge\frac{\gamma\sigma^2}{R^2}\right)
  \varphi_{\gamma,R}(r)
  \qquad\text{for a.e. }r>0.
\]
Thus \(\gamma\) increases the core-crossing contraction scale but decreases the affine tail slope. The main theorem uses \(\gamma=1\) for notational simplicity.
\paragraph{Rate--reporting tradeoff.}
The parameter \(\gamma\) makes explicit that propagation rate and terminal
quadratic reporting cannot be optimized independently. Increasing \(\gamma\)
improves the core-crossing factor
\(\gamma\sigma^2/R^2\), but decreases the surviving affine-tail slope
\(a_{\gamma,R}\). Since the terminal \(W_2^2\) reports in Corollary 1 scale
through \(a_{\gamma,R}^{-1}\), over-bending the cost can make the learned-flow
contraction easier to prove while making quadratic reporting more expensive.
Thus the adverse load affects the theorem twice: it slows propagation through
the contraction certificate and reduces the amount of large-distance
transportation cost visible at terminal time. This is why the theorem
propagates an adapted affine-tail cost and reports \(W_2^2\) only after
separate terminal tail, moment, or support information is supplied.
\section{Propagation and reporting}
\label{app:propagation-reporting}

This appendix proves \Cref{thm:main-certificate,cor:terminal-w2-reporting} and the window-composition rule used in the main text. At this point the geometry has already been identified: Appendix~\ref{app:reflection-coupling} gives contraction of the learned semigroup in a radial cost, and Appendix~\ref{app:load-reserve-metric} builds a cost \(\varphi_R\) satisfying
\[
  L_{\bar\kappa,\sigma_-}\varphi_R\le -\rho_R\varphi_R .
\]
The remaining work is conceptually different. We no longer search for geometry; we perturb the contracted learned flow by the score and solver residuals, and then perform a deterministic terminal conversion from the affine-tail cost to \(\Wtwo^2\). 

\subsection{Profile-resolved propagation}
\label{subsec:profile-resolved-propagation}

The propagation theorem is a stability statement in the metric already chosen by the load--reserve construction. The certified profile determines \(\varphi_R\) and \(\rho_R\), hence contraction of the learned flow; the score-modeling and solver residuals then enter only as additive perturbations of this contracted flow.

For an increasing concave cost \(\varphi\) with \(\varphi(0)=0\), write
\[
  d_\varphi(x,y):=\varphi(\|x-y\|).
\]
By concavity and subadditivity of \(\varphi\), this is a metric whenever \(\varphi(r)>0\) for \(r>0\), as in the affine-tail costs used below. When \(0\le\varphi'\le1\), every \(d_\varphi\)-Lipschitz test function is Euclidean Lipschitz with the same constant, because \(\varphi(r)\le r\).

\begin{lemma}[Perturbations of a contracted learned flow]
\label{lem:contracted-flow-perturbation-app}
Let \(\widehat P_{s,t}\) be the learned evolution family. Assume that, for some increasing concave \(\varphi\) with \(\varphi(0)=0\), \(\varphi(r)>0\) for \(r>0\), and \(0\le\varphi'\le1\),
\[
  \mathsf W_\varphi(\nu\widehat P_{s,t},\xi\widehat P_{s,t})
  \le
  e^{-\rho(t-s)}\mathsf W_\varphi(\nu,\xi)
  \qquad(0\le s\le t)
\]
for all laws with finite \(\mathsf W_\varphi\)-cost. Let \(\alpha_t\) and \(\beta_t\) solve the Fokker--Planck equations corresponding to the same scalar diffusion coefficient \(\sigma_t\) and drifts \(\widehat b_t+v_t\) and \(\widehat b_t+w_t\), respectively. If
\[
  a_t:=\int \|v_t(x)\|\,\alpha_t(dx),
  \qquad
  b_t:=\int \|w_t(x)\|\,\beta_t(dx)
\]
are integrable on \([0,T]\), then
\[
  \mathsf W_\varphi(\alpha_T,\beta_T)
  \le
  e^{-\rho T}\mathsf W_\varphi(\alpha_0,\beta_0)
  +
  \int_0^T e^{-\rho(T-t)}(a_t+b_t)\,dt .
\]
\end{lemma}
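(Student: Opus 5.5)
We prove the lemma by a Duhamel argument along the learned evolution family, using duality for the metric \(d_\varphi\). Set \(D(t):=\mathsf W_\varphi(\alpha_t,\beta_t)\). We will show that \(e^{\rho t}D(t)\) grows at most by \(e^{\rho t}(a_t+b_t)\,dt\) per unit time. The first step is an interpolation. For \(0\le t\le T\), we compare \(\alpha_T\) with \(\alpha_t\widehat P_{t,T}\), which is the law obtained by running the unperturbed learned flow from time \(t\). The triangle inequality for \(\mathsf W_\varphi\) (valid since \(d_\varphi\) is a metric and \(\mathsf W_\varphi\) is the Kantorovich distance for it) gives
\[
D(T)\le \mathsf W_\varphi(\alpha_T,\alpha_0\widehat P_{0,T})+\mathsf W_\varphi(\alpha_0\widehat P_{0,T},\beta_0\widehat P_{0,T})+\mathsf W_\varphi(\beta_0\widehat P_{0,T},\beta_T).
\]
By the assumed contraction, the middle term is at most \(e^{-\rho T}D(0)\). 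It therefore suffices to prove the one-sided estimate
\[
\mathsf W_\varphi(\alpha_T,\alpha_0\widehat P_{0,T})\le\int_0^Te^{-\rho(T-t)}a_t\,dt,
\]
and the symmetric one for \(\beta\).

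For this estimate we use Kantorovich--Rubinstein duality. We have \(\mathsf W_\varphi(\nu,\xi)=\sup\{\int f\,d(\nu-\xi):\ \mathrm{Lip}_{d_\varphi}(f)\le1\}\). By the contraction hypothesis applied to Dirac masses, the function \(\widehat P_{t,T}f\) is \(d_\varphi\)-Lipschitz with constant \(e^{-\rho(T-t)}\). Since \(\varphi(r)\le r\), it is also Euclidean Lipschitz with the same constant, so \(\|\nabla\widehat P_{t,T}f\|\le e^{-\rho(T-t)}\) a.e.

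Next we apply the Duhamel formula. Fix such an \(f\), first assumed bounded and smooth, and set \(g_t:=\widehat P_{t,T}f\), which solves the backward Kolmogorov equation \(\partial_tg_t+\widehat{\mathcal L}_tg_t=0\). By the Fokker--Planck equation for \(\alpha\), we get
\[
\frac{d}{dt}\int g_t\,d\alpha_t=\int \langle v_t,\nabla g_t\rangle\,d\alpha_t\le e^{-\rho(T-t)}a_t .
\]
Integrating from \(0\) to \(T\) gives \(\int f\,d\alpha_T-\int f\,d(\alpha_0\widehat P_{0,T})\le\int_0^Te^{-\rho(T-t)}a_t\,dt\). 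Taking the supremum over \(f\) yields the one-sided estimate. An equivalent probabilistic route avoids PDE regularity: run the perturbed process on \([0,t]\) and then switch to the learned flow, obtaining the path \(t\mapsto\alpha_t\widehat P_{t,T}\). Its \(\mathsf W_\varphi\)-variation on \([t,t+h]\) is bounded by \(e^{-\rho(T-t-h)}\) times the distance between \(\alpha_{t+h}\) and \(\alpha_t\widehat P_{t,t+h}\). A synchronous coupling of the two short-time processes shows that this distance is at most \(\int_t^{t+h}a_u\,du+o(h)\), using the Lipschitz continuity of \(\widehat b\) from \Cref{ass:standing-window} together with Gronwall's inequality.

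The main obstacle is justifying the differentiation step with only the integrability \(a\in L^1\) and minimal regularity. One issue is that \(g_t\) is only Lipschitz, not \(C^2\). Another is that \(\alpha_t\) is merely a weak solution. We would handle this by mollifying \(f\), using the global Lipschitz and linear-growth assumptions on \(\widehat b_t\) (which give \(C^{1,2}\) regularity of \(g_t\) for smooth \(f\) with bounded gradient), and truncating \(f\) to bounded Lipschitz functions. We would then pass to the limit by monotone convergence in the dual formula, using finiteness of the first moments, which is implied by \(\varphi(r)\le r\) and the moment assumptions. Some care is needed that the Fokker--Planck weak formulation admits time-dependent test functions of linear growth; we would localize with cutoffs and use the moment bounds to pass to the limit.
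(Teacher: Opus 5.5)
Your proposal is correct and follows essentially the paper's argument. The shared steps are Kantorovich duality for \(d_\varphi\), the fact that \(\widehat P_{t,T}f\) is \(e^{-\rho(T-t)}\)-Lipschitz in \(d_\varphi\) and hence in the Euclidean metric, and the Duhamel identity \(\frac{d}{dt}\alpha_t(\widehat P_{t,T}f)=\int v_t\cdot\nabla\widehat P_{t,T}f\,d\alpha_t\). The only difference is organizational: you first split by the triangle inequality through \(\alpha_0\widehat P_{0,T}\) and \(\beta_0\widehat P_{0,T}\), whereas the paper applies one test function to \(\alpha_T-\beta_T\) and bounds \(\alpha_0(\psi_0)-\beta_0(\psi_0)\) directly by the \(d_\varphi\)-Lipschitz constant of \(\psi_0\).
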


\begin{proof}
We give the argument for smooth coefficients and bounded perturbations; standard localization, mollification, and truncation give the stated form under the regularity assumptions of \Cref{ass:standing-window}. By Kantorovich duality for the metric \(d_\varphi\), it is enough to fix a test function \(f\) with
\[
  |f(x)-f(y)|\le d_\varphi(x,y)
\]
and bound \(\alpha_T(f)-\beta_T(f)\). Let
\[
  \psi_t:=\widehat P_{t,T}f .
\]
The assumed contraction implies that \(\psi_t\) is \(e^{-\rho(T-t)}\)-Lipschitz with respect to \(d_\varphi\). Since \(d_\varphi(x,y)\le\|x-y\|\), it is Euclidean \(e^{-\rho(T-t)}\)-Lipschitz, and hence \(\|\nabla\psi_t\|\le e^{-\rho(T-t)}\) a.e. The backward equation for the learned generator gives
\[
  \frac{d}{dt}\alpha_t(\psi_t)
  =
  \int v_t(x)\cdot\nabla\psi_t(x)\,\alpha_t(dx),
  \qquad
  \frac{d}{dt}\beta_t(\psi_t)
  =
  \int w_t(x)\cdot\nabla\psi_t(x)\,\beta_t(dx).
\]
Therefore
\[
\begin{aligned}
\alpha_T(f)-\beta_T(f)
&=
\alpha_0(\psi_0)-\beta_0(\psi_0)  \\
&\quad+
\int_0^T
\left\{
  \int v_t\cdot\nabla\psi_t\,d\alpha_t
  -
  \int w_t\cdot\nabla\psi_t\,d\beta_t
\right\}dt .
\end{aligned}
\]
The first term is bounded by \(e^{-\rho T}\mathsf W_\varphi(\alpha_0,\beta_0)\), because \(\psi_0\) is \(e^{-\rho T}\)-Lipschitz in \(d_\varphi\). The integral is bounded above by
\[
  \int_0^T e^{-\rho(T-t)}(a_t+b_t)\,dt .
\]
Taking the supremum over admissible \(f\) proves the claim.
\end{proof}

\begin{proof}[Proof of \Cref{thm:main-certificate}]
Let \(\widehat P_{s,t}\) denote the learned evolution family. By \Cref{lem:reflection-radial-comparison} and the operator inequality from \Cref{thm:affine-tail-metric},
\[
  \mathsf W_{\varphi_R}(\nu\widehat P_{s,t},\xi\widehat P_{s,t})
  \le
  e^{-\rho_R(t-s)}\mathsf W_{\varphi_R}(\nu,\xi).
\]
We now apply \Cref{lem:contracted-flow-perturbation-app} to the implemented and ideal laws.

The ideal law \(\mu_t\) solves the learned equation with perturbation
\[
  w_t(x)=b_t(x)-\widehat b_t(x)=-e_t(x).
\]
Thus, by Cauchy--Schwarz inequality,
\[
  \int \|w_t(x)\|\,\mu_t(dx)
  =
  \int \|e_t(x)\|\,\mu_t(dx)
  \le
  \eta_t .
\]
The implemented interpolation may have a path-dependent residual \(u_t^{\rm sol}\). Let
\[
  \bar u_t(x):=\mathbb E[u_t^{\rm sol}\mid X_t=x]
\]
be a measurable conditional mean under the implemented process. Its one-time marginals solve the Fokker--Planck equation with drift \(\widehat b_t+\bar u_t\), and Jensen's inequality gives
\[
  \int \|\bar u_t(x)\|\,\widetilde\mu_t(dx)
  \le
  \mathbb E\|u_t^{\rm sol}\|
  \le
  \zeta_t .
\]
Applying \Cref{lem:contracted-flow-perturbation-app} with
\[
  \alpha_t=\widetilde\mu_t,
  \qquad
  \beta_t=\mu_t,
  \qquad
  v_t=\bar u_t,
  \qquad
  w_t=-e_t,
\]
yields
\[
  \mathsf W_{\varphi_R}(\widetilde\mu_T,\mu_T)
  \le
  e^{-\rho_RT}\mathsf W_{\varphi_R}(\widetilde\mu_0,\mu_0)
  +
  \int_0^T e^{-\rho_R(T-t)}(\eta_t+\zeta_t)\,dt .
\]
Finally, \(\varphi_R(r)\le r\) implies
\[
  \mathsf W_{\varphi_R}(\widetilde\mu_0,\mu_0)
  \le
  \Wone(\widetilde\mu_0,\mu_0)
  \le
  \Wtwo(\widetilde\mu_0,\mu_0),
\]
which gives the bound by \(\Gamma_T\).
\end{proof}

\subsection{A concrete solver-residual envelope}
\label{subsec:euler-solver-envelope}

The propagation theorem treats the solver residual as a modular input. This modularity is a law-level perturbation statement: once an adapted continuous-time embedding has been chosen and an envelope \(\zeta_t\) has been certified, the contracted learned flow propagates it additively. It does not assert that the envelope itself is geometry-free. In practical discretizations, especially with large steps or higher-order updates, the construction of \(\zeta_t\) may contain local Jacobian, Hessian, commutator, moment, or profile-dependent factors. The following elementary interpolation shows how such an envelope arises for a standard drift-frozen Euler implementation. Let \(0=t_0<t_1<\cdots<t_N=T\), \(h_k=t_{k+1}-t_k\), and on \(t\in[t_k,t_{k+1})\) define
\[
  dX_t=\widehat b_{t_k}(X_{t_k})\,dt+\sigma_t\,dB_t .
\]
Relative to the learned continuous SDE \(dX_t=\widehat b_t(X_t)\,dt+\sigma_t\,dB_t\), this interpolation has
\[
  u_t^{\rm sol}
  =
  \widehat b_{t_k}(X_{t_k})-\widehat b_t(X_t).
\]
If \(\widehat b_t\) is \(L_x\)-Lipschitz in space and \(L_t\)-Lipschitz in time on the relevant window or stopped region, then
\[
  \zeta_t
  \le
  L_x\left(\mathbb E\|X_t-X_{t_k}\|^2\right)^{1/2}
  +
  L_t|t-t_k|.
\]
If additionally
\[
  \left(\mathbb E\|\widehat b_{t_k}(X_{t_k})\|^2\right)^{1/2}
  \le B_k ,
\]
then, for \(t\in[t_k,t_{k+1})\),
\[
  \left(\mathbb E\|X_t-X_{t_k}\|^2\right)^{1/2}
  \le
  B_k h_k+\sigma_+\sqrt{d h_k}.
\]
Consequently one admissible envelope is
\[
  \zeta_t
  \le
  L_x\{B_k h_k+\sigma_+\sqrt{d h_k}\}
  +
  L_t h_k,
  \qquad t\in[t_k,t_{k+1}).
\]
This is not intended as a sharp discretization theorem. It only shows how a standard solver analysis can be plugged into the propagation interface. If a sharper analysis yields a residual \(\zeta_t^{R,K}\) depending on the certified window, cutoff, local region, or step-size regime, the same proof applies with that envelope. This is the intended way to account for multiplicative interactions between discretization error and local drift geometry before the residual is propagated additively by the contracted learned flow. Higher-order stochastic reverse-SDE solvers can replace this display with sharper residual envelopes; PF-ODE/DDIM-type samplers require a separate comparison or extension.

\subsection{Terminal reporting from affine-tail costs to \(\Wtwo^2\)}
\label{subsec:terminal-reporting-proof}

The propagation theorem controls an affine-tail transportation cost, not \(\Wtwo^2\) directly. This is intentional. The affine tail gives a stable propagation metric even when the profile is only eventually contractive; terminal information is used only afterward to recover a quadratic cost.

\begin{proof}[Proof of \Cref{cor:terminal-w2-reporting}]
Set
\[
  \mu=\widetilde\mu_T,
  \qquad
  \nu=\mu_T,
  \qquad
  \varphi=\varphi_R,
  \qquad
  a=a_R,
  \qquad
  \Delta=\Gamma_T .
\]
By \eqref{eq:Gamma-main},
\[
  \mathsf W_\varphi(\mu,\nu)\le \Delta .
\]
If \(\Delta=0\), then \(\mathsf W_\varphi(\mu,\nu)=0\). Since the affine tail slope is positive, \(\varphi(r)>0\) for \(r>0\), hence \(\mu=\nu\) and all three bounds are trivial. Assume \(\Delta>0\).

Because \(\varphi\) is increasing and concave and has affine tail slope at least \(a>0\), its derivative is at least \(a\) a.e. Therefore
\begin{equation}
  \varphi(r)\ge ar,
  \qquad r\ge0 .
  \label{eq:phi-controls-linear}
\end{equation}
Let \(\pi\) be any coupling with
\[
  \int\varphi(\|x-y\|)\,d\pi\le \Delta+\varepsilon .
\]
Write \(R(x,y)=\|x-y\|\). For a threshold \(L>0\), split
\[
  \int R^2\,d\pi
  =
  \int_{R\le L}R^2\,d\pi
  +
  \int_{R>L}R^2\,d\pi .
\]
On the first region, \(R^2\le LR\le (L/a)\varphi(R)\), so
\begin{equation}
  \int_{R\le L}R^2\,d\pi
  \le
  \frac{L}{a}(\Delta+\varepsilon).
  \label{eq:reporting-core-term}
\end{equation}

\paragraph{Sub-Gaussian reporting.}
Assume \(S_\lambda<\infty\). Since \(R^2\le2\|x\|^2+2\|y\|^2\), Cauchy--Schwarz inequality gives
\[
  \int e^{\lambda R^2/4}\,d\pi
  \le
  \left(\int e^{\lambda\|x\|^2}\,\mu(dx)\right)^{1/2}
  \left(\int e^{\lambda\|y\|^2}\,\nu(dy)\right)^{1/2}
  \le
  S_\lambda .
\]
For \(u\ge L^2\), \(u\le(8/\lambda)e^{\lambda u/8}\), and hence
\[
  R^2\mathbf 1_{\{R>L\}}
  \le
  \frac{8}{\lambda}
  e^{-\lambda L^2/8}
  e^{\lambda R^2/4}.
\]
Thus
\[
  \int_{R>L}R^2\,d\pi
  \le
  \frac{8S_\lambda}{\lambda}e^{-\lambda L^2/8}.
\]
Choose
\[
  L
  =
  \sqrt{\frac{8}{\lambda}
  \log\left(e+\frac{aS_\lambda}{\Delta\sqrt\lambda}\right)} .
\]
Then \eqref{eq:reporting-core-term} is bounded by
\[
  C\frac{\Delta+\varepsilon}{a\sqrt\lambda}
  \sqrt{\log\left(e+\frac{aS_\lambda}{\Delta\sqrt\lambda}\right)},
\]
and the tail term is bounded by \(C\Delta/(a\sqrt\lambda)\), which is absorbed because the logarithm is at least one. Letting \(\varepsilon\downarrow0\) and taking the infimum over couplings proves \eqref{eq:w2-subgaussian-main}.

\paragraph{Finite-moment reporting.}
Assume \(M_q<\infty\) for some \(q>2\). By \eqref{eq:reporting-core-term}, the core contribution is at most \((L/a)(\Delta+\varepsilon)\). Moreover,
\[
  R^q\le 2^{q-1}(\|x\|^q+\|y\|^q),
\]
so
\[
  \int_{R>L}R^2\,d\pi
  \le
  L^{-(q-2)}\int R^q\,d\pi
  \le
  2^{q-1}M_qL^{-(q-2)} .
\]
Optimizing \(L\) gives
\[
  \int R^2\,d\pi
  \le
  C_q
  \left(\frac{\Delta+\varepsilon}{a}\right)^{\frac{q-2}{q-1}}
  M_q^{\frac{1}{q-1}} .
\]
Letting \(\varepsilon\downarrow0\) and taking the infimum over couplings proves \eqref{eq:w2-polynomial-main}.

\paragraph{Bounded-support reporting.}
If both measures are supported in a common set of diameter \(D\), then \(R\le D\) under every coupling. By \eqref{eq:phi-controls-linear},
\[
  R^2\le DR\le\frac{D}{a}\varphi(R).
\]
Taking the infimum over couplings gives
\[
  \Wtwo^2(\mu,\nu)
  \le
  \frac{D}{a}\mathsf W_\varphi(\mu,\nu)
  \le
  \frac{D}{a}\Delta,
\]
which is \eqref{eq:w2-compact-main}.
\end{proof}

The proof also explains the role of the affine-tail slope. The slope \(a_R\) is not a contraction rate; it is the amount of linear transportation cost still visible at large distances. A smaller \(a_R\) can make propagation easier to build through the adverse core, but makes terminal \(\Wtwo^2\) reporting more expensive. This is why terminal assumptions appear only in the reporting corollary, not in the propagation estimate.

\begin{remark}
    In applications, these inputs may come from target assumptions, path-moment
or Lyapunov estimates for the implemented interpolation, clipping, or localization.
\end{remark}

\begin{proposition}[Sharp order of sub-Gaussian terminal reporting]
\label{prop:reporting-sharpness-app}
The sub-Gaussian reporting implication in \Cref{cor:terminal-w2-reporting} has the correct order under only the affine-tail lower bound \(\varphi(r)\ge ar\) and a terminal sub-Gaussian envelope. In general, the factor
\[
  \frac{\Delta}{a\sqrt\lambda}
  \sqrt{\log\left(e+\frac{aS_\lambda}{\Delta\sqrt\lambda}\right)}
\]
cannot be improved beyond universal constants.
\end{proposition}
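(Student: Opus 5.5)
To show sharpness I will build an explicit family of pairs \((\mu,\nu)\) and an affine-tail cost for which all hypotheses of the sub-Gaussian branch of \Cref{cor:terminal-w2-reporting} hold. On this family \(\Wtwo^2(\mu,\nu)\) will be bounded below by a universal multiple of
\[
\frac{\Delta}{a\sqrt\lambda}\sqrt{\log\!\left(e+\frac{aS_\lambda}{\Delta\sqrt\lambda}\right)}.
\]
The hypotheses to be checked are \(\mathsf W_\varphi(\mu,\nu)\le\Delta\), \(\varphi(r)\ge ar\), and sub-Gaussian envelope \(S_\lambda\).

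The natural extremizer is the linear cost \(\varphi(r)=ar\) itself, so that \(\mathsf W_\varphi=a\Wone\). Concavity, the normalization \(\varphi'\le1\) and the affine tail are all trivial for this choice, or a short initial segment can be added if one insists on \(\varphi'(0+)=1\). Against this cost I take a one-dimensional two-point perturbation:
\[
\nu=\delta_0,\qquad \mu=(1-p)\delta_0+p\,\delta_L .
\]
This pair can be embedded in \(\R^d\) along a fixed axis. The only coupling moves mass \(p\) a distance \(L\), so
\[
\mathsf W_\varphi(\mu,\nu)=apL,\qquad \Wtwo^2(\mu,\nu)=pL^2 .
\]
I set \(\Delta=apL\), which gives \(\Wtwo^2=\Delta L/a\). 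The sub-Gaussian quantity is
\[
S_\lambda=2-p+pe^{\lambda L^2}.
\]

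Next I scale the parameters. I take \(L^2=\ell/\lambda\) with \(\ell\ge1\) and choose \(p=e^{-\ell}\), so that \(S_\lambda\le 2+1=3\) stays bounded. Then
\[
\frac{aS_\lambda}{\Delta\sqrt\lambda}\asymp\frac{1}{pL\sqrt\lambda}=\frac{e^{\ell}}{\sqrt\ell},
\]
and therefore \(\log(e+aS_\lambda/(\Delta\sqrt\lambda))\asymp\ell\) for large \(\ell\). Hence
\[
\Wtwo^2=\frac{\Delta L}{a}=\frac{\Delta}{a\sqrt\lambda}\sqrt\ell\asymp\frac{\Delta}{a\sqrt\lambda}\sqrt{\log\!\left(e+\frac{aS_\lambda}{\Delta\sqrt\lambda}\right)} .
\]
This matches the upper bound up to universal constants. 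Letting \(\ell\) range over \([1,\infty)\) and \(a,\lambda\) be arbitrary covers all regimes of the ratio \(aS_\lambda/(\Delta\sqrt\lambda)\) beyond a constant. In the remaining bounded-ratio regime, taking \(p\asymp1\) and \(L\asymp\lambda^{-1/2}\) gives \(\Wtwo^2\asymp\Delta/(a\sqrt\lambda)\), which again matches.

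The main obstacle is interpretive rather than technical: making precise what "cannot be improved" means. I would phrase the claim as follows. No function \(G(\Delta,a,\lambda,S)\) valid for all admissible \((\varphi,\mu,\nu)\) can satisfy
\[
G=o\!\left(\frac{\Delta}{a\sqrt\lambda}\sqrt{\log\!\left(e+\frac{aS}{\Delta\sqrt\lambda}\right)}\right)
\]
along any sequence of parameters. To prove this, I need to realize any prescribed \((\Delta,a,\lambda,S)\) with \(S\ge2\) up to constants. I will do so by solving for \(p\) and \(L\) from \(apL=\Delta\) and \(pe^{\lambda L^2}\approx S\). A monotonicity argument in \(L\) shows a solution exists with \(\lambda L^2\asymp\log(aS/(\Delta\sqrt\lambda))\); checking this is the step I expect to need care. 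Finally, I would remark that the Gaussian scaling \(L\propto\lambda^{-1/2}\) and the \(a^{-1}\) factor are forced by dimensional analysis, which confirms that only the logarithmic factor is genuinely at stake.
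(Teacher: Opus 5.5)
Your two-point construction is the paper's: the linear cost $\varphi(r)=ar$, tested on $\nu=\delta_0$ and $\mu=(1-p)\delta_0+p\,\delta_L$, gives $\Wtwo^2=pL^2=\Delta L/a$. The only difference is parametrization. The paper fixes $S>3$ and takes $L$ to be the largest root of $\frac{\Delta}{aL}e^{\lambda L^2}=S-2$ as $\Delta\downarrow0$. Your explicit choice $L^2=\ell/\lambda$, $p=e^{-\ell}$ (so $S_\lambda=3-e^{-\ell}$) gives the matching bound uniformly in $\ell\ge1$ without an implicit equation. That part is correct.

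The problem is the formalization in your last paragraph. You claim no valid $G$ can be $o(F)$ along any parameter sequence, and plan to prove it by realizing every $(\Delta,a,\lambda,S)$ with $S\ge2$. That claim is false, so the step you flagged cannot be carried out.

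Both integrals in $S_\lambda$ are at least $1$. Jensen then gives $\lambda\int\|x\|^2\,d\mu\le\log(S-1)$, and the same for $\nu$. Hence every admissible pair satisfies $\Wtwo^2(\mu,\nu)\le 4\log(S-1)/\lambda$, whatever $\Delta$ is.

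Write $\epsilon:=\Delta\sqrt\lambda/a$. The corollary's factor is $F=\frac{\epsilon}{\lambda}\sqrt{\log(e+S/\epsilon)}\ge\epsilon/\lambda$. So $G:=\min\{CF,\,4\log(S-1)/\lambda\}$ is a valid bound with $G/F\le 4\log(S-1)/\epsilon$. This ratio tends to $0$ when $\epsilon/\log S\to\infty$. Since $4\log(S-1)/\lambda\to0$ while $F$ stays bounded below, $G/F\to0$ also when $S\downarrow2$ with $\epsilon$ fixed.

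Your own construction shows the same limits:
\begin{itemize}
\item The system $apL=\Delta$, $S_\lambda=S$ has a solution with $p\le1$ only if $\epsilon^2\le\log(S-1)$. This is because $x\mapsto(e^{x^2}-1)/x$ is increasing.
\item Your bounded-ratio example $p\asymp1$, $L\asymp\lambda^{-1/2}$ only reaches $\epsilon\asymp1$. It does not cover the whole regime where $aS/(\Delta\sqrt\lambda)$ is bounded, for example $S\asymp\epsilon\to\infty$.
\end{itemize}

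Sharpness therefore has to be stated in the regime the paper effectively uses: $\Delta\sqrt\lambda/a$ small (at most of order $\sqrt{\log S}$), with $S$ bounded away from $2$. Your $\ell$-family already proves exactly this. To prescribe a fixed $S\ge3$, use the variant $p=(S-2)e^{-\ell}$ with $\ell\ge\max\{1,\log(S-2)\}$.
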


\begin{proof}
It suffices to test the affine cost \(\varphi(r)=ar\) in one dimension. Let
\[
  \nu=\delta_0,
  \qquad
  \mu=(1-\theta)\delta_0+\theta\delta_L.
\]
Then
\[
  \mathsf W_\varphi(\mu,\nu)=\theta aL,
  \qquad
  \Wtwo^2(\mu,\nu)=\theta L^2,
\]
and
\[
  S_\lambda(\mu,\nu)=2-\theta+\theta e^{\lambda L^2}.
\]
Fix \(S>3\) and a small target affine-tail cost \(\Delta>0\). Set
\[
  \theta=\frac{\Delta}{aL}
\]
and choose \(L\) as the largest solution of
\begin{equation}
    \frac{\Delta}{aL}e^{\lambda L^2}=S-2.
  \label{eq:reporting-sharpness-implicit}
\end{equation}
Then \(\mathsf W_\varphi(\mu,\nu)=\Delta\) and \(S_\lambda(\mu,\nu)\le S\) for sufficiently small \(\Delta\). Equation \eqref{eq:reporting-sharpness-implicit} implies
\[
  \lambda L^2
  =
  \log\left(\frac{a(S-2)L}{\Delta}\right).
\]
Solving this implicit relation as \(\Delta\downarrow0\) gives
\[
  L
  \asymp
  \lambda^{-1/2}
  \sqrt{\log\left(e+\frac{aS}{\Delta\sqrt\lambda}\right)}.
\]
Therefore
\[
  \Wtwo^2(\mu,\nu)
  =
  \theta L^2
  =
  \frac{\Delta L}{a}
  \asymp
  \frac{\Delta}{a\sqrt\lambda}
  \sqrt{\log\left(e+\frac{aS}{\Delta\sqrt\lambda}\right)}.
\]
This matches the sub-Gaussian bound in \Cref{cor:terminal-w2-reporting} up to universal constants.
\end{proof}
\subsection{Window composition}

The main theorem is stated on one certified window. If the schedule is certified on several windows, the adapted metric may change from one window to the next. The following proposition records the only cost of such a change: since each affine-tail metric controls a positive multiple of distance, switching from one metric to the next costs the inverse tail slope of the previous metric.

\begin{proposition}[Window composition]
\label{prop:window-composition}
Let
\[
  0=T_0<T_1<\cdots<T_J=T
\]
be certified windows. On window \(j\), let \(\varphi_j\) be an increasing concave cost with \(\varphi_j(0)=0\), \(\varphi_j'(0+)\le1\), affine-tail slope \(a_j>0\), and propagation rate \(\rho_j>0\). Let
\[
  h_t:=\eta_t+\zeta_t .
\]
Define recursively
\[
  \Delta_0:=\Wtwo(\widetilde\mu_{T_0},\mu_{T_0}),
\]
and, for \(j=1,\dots,J\),
\[
  \Delta_j
  :=
  e^{-\rho_j(T_j-T_{j-1})}\alpha_j\Delta_{j-1}
  +
  \int_{T_{j-1}}^{T_j}
  e^{-\rho_j(T_j-t)}h_t\,dt,
\]
where
\[
  \alpha_1:=1,
  \qquad
  \alpha_j:=a_{j-1}^{-1}\quad(j\ge2).
\]
Then
\[
  \mathsf W_{\varphi_j}(\widetilde\mu_{T_j},\mu_{T_j})
  \le
  \Delta_j
  \qquad
  \text{for every }j=1,\dots,J .
\]
Consequently, the terminal reporting bounds of \Cref{cor:terminal-w2-reporting} apply on the last window with
\[
  \Gamma_T=\Delta_J,
  \qquad
  a_R=a_J,
  \qquad
  \varphi_R=\varphi_J .
\]
\end{proposition}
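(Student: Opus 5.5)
The plan is an induction over the windows. On each window we apply the perturbation estimate of Theorem~\ref{thm:main-certificate}, in the form of Lemma~\ref{lem:contracted-flow-perturbation-app} restricted to \([T_{j-1},T_j]\), with the window's own cost \(\varphi_j\). Between windows we switch metrics using only the affine-tail lower bound. Throughout, I interpret the hypotheses as saying that each \(\varphi_j\) is an affine-tail cost certified on window \(j\). Concretely, the learned evolution family restricted to that window contracts in \(\mathsf W_{\varphi_j}\) at rate \(\rho_j\), as delivered by Lemma~\ref{lem:reflection-radial-comparison} and Theorem~\ref{thm:affine-tail-metric}. The forcing inputs \(\eta_t,\zeta_t\) are the same as in Theorem~\ref{thm:main-certificate}.

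\textbf{Step 1 (one window).} Restart time at \(T_{j-1}\) and apply Lemma~\ref{lem:contracted-flow-perturbation-app} with \(\alpha_t=\widetilde\mu_t\), \(\beta_t=\mu_t\), \(v_t=\bar u_t\) and \(w_t=-e_t\), exactly as in the proof of Theorem~\ref{thm:main-certificate}. This gives
\[
\mathsf W_{\varphi_j}(\widetilde\mu_{T_j},\mu_{T_j})\le e^{-\rho_j(T_j-T_{j-1})}\mathsf W_{\varphi_j}(\widetilde\mu_{T_{j-1}},\mu_{T_{j-1}})+\int_{T_{j-1}}^{T_j}e^{-\rho_j(T_j-t)}h_t\,dt .
\]
The lemma requires \(\varphi_j'\le1\). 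This follows from concavity together with \(\varphi_j'(0+)\le1\), and it also gives \(\varphi_j(r)\le r\).

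\textbf{Step 2 (metric switch).} We must bound \(\mathsf W_{\varphi_j}(\widetilde\mu_{T_{j-1}},\mu_{T_{j-1}})\) by \(\alpha_j\Delta_{j-1}\).
For \(j=1\) the bound \(\varphi_1(r)\le r\) gives \(\mathsf W_{\varphi_1}\le\Wone\le\Wtwo=\Delta_0\), so \(\alpha_1=1\).
For \(j\ge2\), argue pointwise. The previous cost has derivative at least its tail slope everywhere, since it is concave with affine tail slope \(a_{j-1}\). Hence \(\varphi_{j-1}(r)\ge a_{j-1}r\), and therefore \(\varphi_j(r)\le r\le a_{j-1}^{-1}\varphi_{j-1}(r)\). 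Integrating this inequality against an arbitrary coupling and taking the infimum gives
\[
\mathsf W_{\varphi_j}\le a_{j-1}^{-1}\mathsf W_{\varphi_{j-1}}(\widetilde\mu_{T_{j-1}},\mu_{T_{j-1}})\le \alpha_j\Delta_{j-1},
\]
where the last inequality is the induction hypothesis.

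\textbf{Step 3 (conclusion and reporting).} Substituting Step 2 into Step 1 yields exactly the recursion defining \(\Delta_j\), which closes the induction. For the final claim, observe that the proof of Corollary~\ref{cor:terminal-w2-reporting} uses only two facts: \(\mathsf W_\varphi(\widetilde\mu_T,\mu_T)\le\Delta\), and \(\varphi(r)\ge ar\) with \(\varphi\) positive off zero. Both hold with \(\varphi=\varphi_J\), \(a=a_J\) and \(\Delta=\Delta_J\), so the corollary's argument applies verbatim.

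\textbf{Main obstacle.} The only delicate point is finiteness and restart in Step 1. Lemma~\ref{lem:contracted-flow-perturbation-app} needs the initial \(\mathsf W_{\varphi_j}\)-cost to be finite and the drift regularity to hold on each subwindow. Finiteness follows inductively, since \(\Delta_{j-1}<\infty\) and the metric switch multiplies it only by the finite factor \(\alpha_j\). The restart needs no new argument, because the learned flow is a time-inhomogeneous evolution family and the contraction hypothesis is stated for all \(s\le t\) within the window.
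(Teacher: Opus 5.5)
Your proposal is correct and matches the paper's proof: induction over the windows, the pointwise switch $\varphi_j(r)\le r\le a_{j-1}^{-1}\varphi_{j-1}(r)$ between windows, the per-window perturbation bound, and the terminal corollary applied to the final cost $\varphi_J$. The only difference is cosmetic. You invoke Lemma~\ref{lem:contracted-flow-perturbation-app} directly on each restarted window, whereas the paper cites Theorem~\ref{thm:main-certificate}; your version is slightly cleaner, since that theorem is phrased with a $\Wtwo$ initial discrepancy and a profile-built cost.
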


\begin{proof}
For \(j=1\), the initial metric is bounded by \(\Wtwo\) because \(\varphi_1(r)\le r\). Applying \Cref{thm:main-certificate} on the first window gives the claim for \(\Delta_1\).

Assume the claim has been proved up to window \(j-1\). Since \(\varphi_j(r)\le r\) and \(\varphi_{j-1}(r)\ge a_{j-1}r\), we have
\[
  \varphi_j(r)\le a_{j-1}^{-1}\varphi_{j-1}(r).
\]
Therefore
\[
  \mathsf W_{\varphi_j}(\widetilde\mu_{T_{j-1}},\mu_{T_{j-1}})
  \le
  a_{j-1}^{-1}
  \mathsf W_{\varphi_{j-1}}(\widetilde\mu_{T_{j-1}},\mu_{T_{j-1}})
  \le
  a_{j-1}^{-1}\Delta_{j-1}.
\]
Applying \Cref{thm:main-certificate} on window \(j\) gives the displayed recursion. The terminal statement follows by applying \Cref{cor:terminal-w2-reporting} to the final adapted metric.
\end{proof}

This composition formula is deliberately metric-level. It does not require the same profile, cutoff, or rate on every window. Each window contributes its own contraction and forcing integral; switching windows only pays for the affine-tail slope needed to compare the old cost with the new one.

\section{Sharpness and separation}
\label{app:sharpness}

This appendix supports the comparison in \Cref{subsec:sharpness-main}. The main result is the fixed-height separation: scalar height, even together with eventual reserve, does not determine the radial Hardy scale. We first prove this cleanly at the profile level, where the information loss is most transparent, and then realize the same phenomenon by smooth globally Lipschitz one-dimensional drifts. Thus the profile formulation is a useful abstraction, but the phenomenon is not an artifact of abstract profiles. Finally, we record supporting sharpness statements for the exponential load factor and the terminal reporting loss.

\subsection{Main separation: profile height is not the statistic}
\label{subsec:fixed-height-separation}

The point of this result is not to construct a difficult drift field. It is to show an information loss. The one-sided height records only how deep the adverse part of a profile is; it forgets how wide the adverse barrier is. The Hardy scale depends on both.

\begin{proof}[Proof of Proposition~\ref{prop:main-height-separation}]
We build two profiles with the same adverse depth \(B\) and the same tail reserve \(m\), but with adverse regions of different width.

For the first profile, fix
\[
  \delta:=\frac{\sigma}{\sqrt m}.
\]
Let \(\kappa_1\) be a smooth profile satisfying
\[
  -B\le \kappa_1(r)\le m,
  \qquad
  \kappa_1(r)=-B\quad(0<r\le\delta),
  \qquad
  \kappa_1(r)\ge m\quad(r\ge2\delta).
\]
Then \(\kappa_1\) has one-sided height \(B\) and reserve at least \(m\) beyond \(R_1=2\delta\). Its load at \(R_1\) satisfies
\[
  A_{R_1}(\kappa_1)
  \le
  \int_0^{2\delta}Br\,dr
  =
  2B\delta^2 .
\]
By \Cref{thm:load-reserve},
\[
  \lambda_{\mathrm H}(\kappa_1,\sigma)
  \ge
  c\left(m\wedge\frac{\sigma^2}{4\delta^2}\right)
  \exp\left\{-\frac{B\delta^2}{\sigma^2}\right\}.
\]
Since \(\delta=\sigma/\sqrt m\), the right side is a strictly positive constant depending only on \(B,m,\sigma\). Set this constant to be \(c_0(B,m,\sigma)\).

For the second profile, let \(L\) be large. Choose a smooth profile \(\kappa_{2,L}\) satisfying
\[
  -B\le \kappa_{2,L}(r)\le m,
  \qquad
  \kappa_{2,L}(r)=-B\quad(0<r\le L),
  \qquad
  \kappa_{2,L}(r)\ge m\quad(r\ge2L).
\]
It again has one-sided height \(B\) and eventual reserve \(m\). On \([0,L]\),
\[
  \Phi_{\kappa_{2,L},\sigma}(r)
  =
  \exp\left\{\frac{B r^2}{4\sigma^2}\right\}.
\]
Evaluate the Hardy capacity at \(r=L/2\). For \(L\ge2\sigma/\sqrt B\),
\[
  \int_0^{L/2}\Phi_{\kappa_{2,L},\sigma}(s)^{-1}\,ds
  \ge
  c\frac{\sigma}{\sqrt B},
\]
because the integral over \(0\le s\le\sigma/\sqrt B\) is bounded below by a universal multiple of \(\sigma/\sqrt B\). Also,
\[
  \int_{L/2}^{\infty}\Phi_{\kappa_{2,L},\sigma}(s)\,ds
  \ge
  \int_{3L/4}^{L}
  \exp\left\{\frac{B s^2}{4\sigma^2}\right\}\,ds
  \ge
  \frac{L}{4}
  \exp\left\{\frac{9BL^2}{64\sigma^2}\right\}.
\]
Therefore
\[
  M_\sigma(\kappa_{2,L})
  \ge
  c\frac{\sigma L}{\sqrt B}
  \exp\left\{\frac{9BL^2}{64\sigma^2}\right\}.
\]
By \Cref{thm:hardy-completeness},
\[
  \lambda_{\mathrm H}(\kappa_{2,L},\sigma)
  \le
  \frac{2\sigma^2}{M_\sigma(\kappa_{2,L})}
  \le
  C\frac{\sigma\sqrt B}{L}
  \exp\left\{-\frac{9BL^2}{64\sigma^2}\right\}.
\]
Choosing \(L\) sufficiently large makes this upper bound at most \(\varepsilon\). Set
\[
  \kappa_2:=\kappa_{2,L},
  \qquad
  R_2:=2L .
\]
\end{proof}

\subsection{A drift-level realization}
\label{subsec:drift-level-height-separation}

The main-text separation is stated at the level of radial profiles.
We next show that the same information loss is not an artifact of abstract profiles:
it already occurs for smooth globally Lipschitz one-dimensional drifts.
The construction is simple.
The one-sided height sees the depth of an expansive plateau, while the Hardy scale also sees its width.

For a one-dimensional \(C^1\) drift \(u:\mathbb R\to\mathbb R\),
\[
  \kappa_u(r)
  =
  -\sup_{x\in\mathbb R}
  \frac{u(x+r)-u(x)}{r}
  =
  -\sup_{x\in\mathbb R}
  \frac1r\int_x^{x+r}u'(s)\,ds ,
  \qquad r>0.
\]

\begin{proposition}[Drift-level fixed-height separation]
\label{prop:drift-level-height-separation}
Fix \(\sigma>0\), \(B>0\), and \(m>0\).
For every \(\varepsilon>0\), there exist smooth globally Lipschitz one-dimensional drifts
\(u_1,u_2:\mathbb R\to\mathbb R\) such that, with \(\kappa_i:=\kappa_{u_i}\),
\[
  \sup_{r>0}[-\kappa_i(r)]_+=B,
  \qquad i=1,2,
\]
and
\[
  \kappa_i(r)\ge m
  \qquad (r\ge R_i)
\]
for some finite \(R_i\), but
\[
  \lambda_{\mathrm H}(\kappa_1,\sigma)\ge c_0(B,m,\sigma)>0,
  \qquad
  \lambda_{\mathrm H}(\kappa_2,\sigma)\le\varepsilon .
\]
Moreover, the drifts may be chosen with a common global Lipschitz bound depending only on
\(B\) and \(m\), and they may be realized as exact one-dimensional score fields with Gaussian tails.
\end{proposition}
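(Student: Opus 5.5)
The plan is to realize the two profiles of Proposition~\ref{prop:main-height-separation} by drifts whose derivative is a \emph{plateau}. I will take \(u'\) equal to \(+B\) on a central interval of half-width \(w\) (the expansive plateau) and equal to \(-M\) far outside it, with \(M:=2m\vee B\). The connection is smooth and monotone: \(u'\) decreases from \(B\) to \(-M\) on \([w,w+1]\), and symmetrically on the left. Set \(u(x):=\int_0^x u'\). Then \(u\) is smooth and globally Lipschitz with constant \(\max(B,M)\), which depends only on \(B,m\).

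Writing \(V:=-\int_0^x u\), we have \(u=-V'\) and \(V''=-u'=M\) outside a compact set. Hence \(e^{-V}\) is integrable with Gaussian tails, and \(u\) is the exact score \(\nabla\log\) of the normalized density \(\propto e^{-V}\). The two drifts are \(u_1\) with \(w=\delta:=\sigma/\sqrt m\) and \(u_2\) with \(w=L\), where \(L\) is large.

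\emph{Step 1 (height and reserve).} I will use the formula \(\kappa_u(r)=-\sup_x r^{-1}\int_x^{x+r}u'\) recorded just before the statement.
\begin{itemize}
\item Since \(u'\le B\) everywhere, \(\kappa_u\ge -B\).
\item For \(r\le 2w\) a window inside the plateau attains average \(B\), so \(\kappa_u(r)=-B\). Thus the one-sided height is exactly \(B\).
\item For large \(r\), any window of length \(r\) meets the set \(\{u'>-M\}\) only in a set of length at most \(2w+2\). There \(u'\le B\), and elsewhere \(u'=-M\). Hence
\[
\kappa_u(r)\ge \frac{M(r-2w-2)-(2w+2)B}{r}.
\]
This is at least \(m\) as soon as \(r\ge R:=(2w+2)(M+B)/(M-m)\), which is finite because \(M\ge 2m>m\).
\end{itemize}

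\emph{Step 2 (lower bound for \(u_1\)).} Here \(w=\delta\), so \(R_1\) is bounded by a constant depending only on \(B,m,\sigma\). Since \(\kappa_1\ge -B\), we get \(A_{R_1}\le BR_1^2/2\). Also \(m_{R_1}\ge m\) by Step 1. Theorem~\ref{thm:load-reserve} then gives \(\lambda_{\mathrm H}(\kappa_1,\sigma)\ge c\,(m\wedge\sigma^2/R_1^2)\,e^{-BR_1^2/(4\sigma^2)}=:c_0(B,m,\sigma)>0\).

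\emph{Step 3 (upper bound for \(u_2\)).} On \((0,2L]\) we have \(\kappa_2=-B\) exactly, so \(\Phi_{\kappa_2,\sigma}(r)=e^{Br^2/(4\sigma^2)}\) there. I will bound \(M_\sigma(\kappa_2)\) from below by its value at the single point \(r=L\):
\[
\int_0^{L}\Phi^{-1}\ge c\,\sigma/\sqrt B,
\qquad
\int_L^\infty\Phi\ge \int_{3L/2}^{2L}\Phi\ge \tfrac L2\,e^{9BL^2/(16\sigma^2)}.
\]
This uses only the profile on \((0,2L]\), so the behaviour of \(\kappa_2\) beyond \(2L\) is irrelevant. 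Theorem~\ref{thm:hardy-completeness} then gives \(\lambda_{\mathrm H}(\kappa_2,\sigma)\le C\,\sigma\sqrt B\,L^{-1}e^{-9BL^2/(16\sigma^2)}\), which is at most \(\varepsilon\) once \(L\) is large.

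\emph{Main obstacle.} The one point needing care is keeping the height \emph{exactly} \(B\) after smoothing and controlling \(\kappa_u\) at intermediate radii. I handle this by constructing \(u'\) directly as a smooth function bounded above by \(B\) and equal to \(B\) on the plateau, rather than mollifying a step function, which could lower the maximum of \(u'\). The crude window-counting estimate in Step 1 then suffices for every intermediate radius, and the choice \(M=2m\vee B\) keeps the Lipschitz bound uniform in \(\varepsilon\).
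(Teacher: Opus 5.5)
Your proposal is correct and takes essentially the same route as the paper. The paper also uses a smooth drift whose derivative equals \(B\) on a plateau and a fixed negative constant outside a compact set. It proves height and eventual reserve by window counting, applies \Cref{thm:load-reserve} to a plateau of width of order \(\sigma/\sqrt m\), and evaluates the Muckenhoupt product at a single point for a wide plateau. The differences are cosmetic: you use a symmetric plateau, unit-width transitions and \(M=2m\vee B\), where the paper uses \(K=2(B+m)\) with transitions of width \(L\) and a reserve cutoff of \(6L\); you also evaluate at \(r=L\) rather than \(r=L/2\).
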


\begin{proof}
Let
\[
  K:=2(B+m).
\]
For each length \(L>0\), choose a smooth function \(w_L:\mathbb R\to\mathbb R\) such that
\[
  -K\le w_L\le B,\qquad
  w_L=B \ \text{on } [0,L],
  \qquad
  w_L=-K \ \text{on } (-\infty,-L]\cup[2L,\infty).
\]
Define
\[
  u_L(x):=\int_0^x w_L(s)\,ds .
\]
Then \(u_L\in C^\infty(\mathbb R)\) and \(\|u_L'\|_\infty\le K\), so \(u_L\) is globally Lipschitz with a bound independent of \(L\).

We first record the radial profile facts.
Since \(u_L'\le B\) everywhere, every secant slope is at most \(B\), hence
\[
  \kappa_{u_L}(r)\ge -B
  \qquad (r>0).
\]
On the other hand, for every \(0<r\le L\), an interval of length \(r\) can be placed inside the plateau
\([0,L]\), where \(u_L'=B\). Therefore the maximal average slope at radius \(r\) is exactly \(B\), and
\[
  \kappa_{u_L}(r)=-B,
  \qquad 0<r\le L.
\]
Consequently,
\[
  \sup_{r>0}[-\kappa_{u_L}(r)]_+=B.
\]

Next we prove eventual reserve. Let \(I_L=[-L,2L]\), so \(|I_L|=3L\). For any interval
\(J\subset\mathbb R\) of length \(r\),
\[
  \int_J w_L(s)\,ds
  \le
  B\,|J\cap I_L|-K\,|J\setminus I_L|
  \le
  (B+K)3L-Kr .
\]
If \(r\ge 6L\), then
\[
  \frac1r\int_J w_L(s)\,ds
  \le
  \frac{(B+K)3L}{6L}-K
  =
  \frac{B-K}{2}
  =
  -\frac B2-m
  \le -m .
\]
Taking the supremum over \(J\) gives
\[
  \kappa_{u_L}(r)\ge m,
  \qquad r\ge 6L.
\]
Thus \(u_L\) has one-sided height \(B\) and eventual reserve \(m\), with reserve cutoff \(R_L=6L\).

We now choose two plateau widths.
For the first drift, take
\[
  L_0:=\frac{\sigma}{\sqrt m},
  \qquad
  u_1:=u_{L_0}.
\]
Since \(\kappa_{u_1}(r)\ge -B\) for all \(r\) and \(\kappa_{u_1}(r)\ge m\) for \(r\ge R_{L_0}=6L_0\),
the load at this cutoff is bounded by
\[
  A_{R_{L_0}}(\kappa_{u_1})
  \le
  \int_0^{R_{L_0}} Br\,dr
  =
  \frac{B R_{L_0}^2}{2}.
\]
Applying \Cref{thm:load-reserve} gives
\[
  \lambda_{\mathrm H}(\kappa_{u_1},\sigma)
  \ge
  c
  \left(
    m\wedge \frac{\sigma^2}{R_{L_0}^2}
  \right)
  \exp\left\{-\frac{B R_{L_0}^2}{4\sigma^2}\right\}
  =:c_0(B,m,\sigma)>0.
\]

For the second drift, take \(u_2:=u_L\) with \(L\) large.
Because \(\kappa_{u_L}(r)=-B\) for \(0<r\le L\),
\[
  \Phi_{\kappa_{u_L},\sigma}(r)
  =
  \exp\left\{\frac{Br^2}{4\sigma^2}\right\},
  \qquad 0<r\le L.
\]
Evaluate the Hardy capacity at \(r=L/2\). For \(L\ge 2\sigma/\sqrt B\),
\[
  \int_0^{L/2}\Phi_{\kappa_{u_L},\sigma}(s)^{-1}\,ds
  \ge
  c\frac{\sigma}{\sqrt B},
\]
while
\[
  \int_{L/2}^{\infty}\Phi_{\kappa_{u_L},\sigma}(t)\,dt
  \ge
  \int_{3L/4}^{L}
  \exp\left\{\frac{Bt^2}{4\sigma^2}\right\}\,dt
  \ge
  \frac L4
  \exp\left\{\frac{9BL^2}{64\sigma^2}\right\}.
\]
Hence
\[
  M_\sigma(\kappa_{u_L})
  \ge
  c\frac{\sigma L}{\sqrt B}
  \exp\left\{\frac{9BL^2}{64\sigma^2}\right\}.
\]
By \Cref{thm:hardy-completeness},
\[
  \lambda_{\mathrm H}(\kappa_{u_L},\sigma)
  \le
  \frac{2\sigma^2}{M_\sigma(\kappa_{u_L})}
  \le
  C\frac{\sigma\sqrt B}{L}
  \exp\left\{-\frac{9BL^2}{64\sigma^2}\right\}.
\]
Choosing \(L\) sufficiently large makes the last expression at most \(\varepsilon\).
Set \(u_2=u_L\). This proves the claimed separation.

Finally, since \(u_L'(x)=-K\) outside a compact interval, \(u_L(x)=-Kx+O(1)\) at both tails.
Therefore
\[
  p_L(x)
  :=
  Z_L^{-1}
  \exp\left\{\int_0^x u_L(s)\,ds\right\}
\]
is a smooth probability density with Gaussian tails, and \(u_L=\partial_x\log p_L\).
Thus the examples may also be viewed as exact one-dimensional score fields.
\end{proof}

\subsection{Structural sharpness of the loss factors}
\label{subsec:structural-sharpness-losses}

The preceding subsections explain why scalar height is not the propagation statistic. The next statements have a different role: they show that two losses in the theorem are not artifacts of the proof. The barrier result is central to the load--reserve mechanism.

\begin{proposition}[Barrier profiles force exponential capacity]
\label{prop:barrier-sharpness-app}
Fix \(R,m,\sigma>0\) and set \(L_m=\sigma/\sqrt m\). For arbitrarily large \(A\), there exist smooth profiles \(\kappa_A\) such that
\[
  A_R(\kappa_A)\asymp A,
  \qquad
  m_R(\kappa_A)\ge m,
\]
and
\[
  M_\sigma(\kappa_A)
  \ge
  c e^{A/(2\sigma^2)}R(R+L_m).
  \label{eq:barrier-capacity-lower-app}
\]
Consequently,
\[
  \lambda_{\mathrm H}(\kappa_A,\sigma)
  \le
  C\sigma^2 e^{-A/(2\sigma^2)}[R(R+L_m)]^{-1}.
\]
\end{proposition}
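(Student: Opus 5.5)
The plan is an explicit barrier construction: the profile is neutral for a while, then drops the potential \(V(r)=\int_0^r\kappa(u)u\,du\) by exactly \(A\) across a thin adverse layer, then stays neutral up to \(R\), and is contractive with reserve \(m\) beyond \(R\). The lower bound on \(M_\sigma(\kappa_A)\) comes from evaluating the Muckenhoupt product at a single radius before the barrier. The bound on \(\lambda_{\mathrm H}\) then follows from the upper half of \Cref{thm:hardy-completeness}.

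\emph{Construction.} Take a smooth nonnegative bump \(q_A\) supported in \([R/4,R/2]\) with \(\int q_A(u)u\,du=A\). Fix a small transition width \(w\le R/4\) with \(mRw\le\sigma^2\). Let \(\chi\) be a smooth function with \(0\le\chi\le m\), \(\chi=0\) on \([0,R-w]\) and \(\chi=m\) on \([R,\infty)\). Set \(\kappa_A:=-q_A+\chi\). Then \(\kappa_A\) is smooth, \(\kappa_A(r)r\in L^1_{\mathrm{loc}}\), and \(m_R(\kappa_A)\ge m\). Since \([-\kappa_A]_+=q_A\) on \([0,R]\), we get \(A_R(\kappa_A)=A\) exactly. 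The potential satisfies \(V=0\) on \([0,R/4]\) and \(V=-A\) on \([R/2,R-w]\). It satisfies \(V\le -A+\sigma^2\) on \([R-w,R]\), and \(V(t)\le -A+\sigma^2+\tfrac m2(t^2-R^2)\) for \(t\ge R\). Hence \(\Phi=e^{-V/(2\sigma^2)}\) is at least \(e^{-1/2}e^{A/(2\sigma^2)}\) on \([R/2,R]\). Beyond \(R\) it is at least \(e^{-1/2}e^{A/(2\sigma^2)}e^{-m(t^2-R^2)/(4\sigma^2)}\).

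\emph{Lower bound on the capacity.} Evaluate the product in \eqref{eq:radial-hardy-objects} at \(r=R/4\). The first factor is \(\int_0^{R/4}\Phi^{-1}=R/4\). The second factor is bounded below by two contributions. The first is \(\int_{R/2}^{R}\Phi\ge c\,e^{A/(2\sigma^2)}R\). The second is
\[
\int_R^\infty\Phi\ge c\,e^{A/(2\sigma^2)}\int_0^\infty e^{-m(2Rv+v^2)/(4\sigma^2)}\,dv\ge c\,e^{A/(2\sigma^2)}\min\{L_m,L_m^2/R\}.
\]
The core contribution alone gives \(M_\sigma(\kappa_A)\ge c\,e^{A/(2\sigma^2)}R^2\). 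This already suffices when \(L_m\le R\), because then \(R+L_m\le 2R\). When \(L_m>R\) the minimum equals \(L_m\), so the tail contribution gives \(c\,e^{A/(2\sigma^2)}RL_m\). Combining the two cases yields \(M_\sigma(\kappa_A)\ge c\,e^{A/(2\sigma^2)}R(R+L_m)\) for every \(A\). Finally, \(\lambda_{\mathrm H}\le 2\sigma^2/M_\sigma\) by \Cref{thm:hardy-completeness}, which gives the stated bound on \(\lambda_{\mathrm H}(\kappa_A,\sigma)\).

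\emph{Main obstacle.} The delicate point is the bookkeeping that keeps the constants universal. Two things must hold at once. First, smoothing the transition to reserve must not erase the height \(e^{A/(2\sigma^2)}\) of \(\Phi\) near \(R\); choosing \(mRw\le\sigma^2\) confines this loss to a factor \(e^{-1/2}\). Second, the tail integral must capture \(L_m\) rather than \(L_m^2/R\) in the regime \(L_m>R\); this is what the case split handles. The bump \(q_A\) can be realized for every \(A\), for example as \(q_A=\frac{A}{\int\psi u}\psi\) for a fixed bump \(\psi\). So \(A\) can be taken arbitrarily large, and the load matches the upper bound in \Cref{thm:load-reserve} up to constants.
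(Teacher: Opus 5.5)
Your proposal is correct and follows essentially the paper's argument. Both use a load-\(A\) bump in the core, a neutral plateau where \(\Phi\asymp e^{A/(2\sigma^2)}\), a smooth switch-on of the reserve \(m\) near \(R\), a single-radius evaluation of the Muckenhoupt product, and then the upper half of \Cref{thm:hardy-completeness}. The differences are cosmetic. You evaluate the product at \(r=R/4\), before the barrier, whereas the paper evaluates it on the plateau after the barrier. You also choose the transition width adaptively via \(mRw\le\sigma^2\), while the paper uses a fixed transition on \([5R/6,R]\) and controls it only in the regime \(L_m\ge R\), through \(mR^2\le\sigma^2\).
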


\begin{proof}
Choose a smooth nonnegative shell \(\psi\) supported in \([R/3,2R/3]\) and normalized by
\[
  \int_0^R\psi(r)r\,dr=1.
\]
Choose a smooth cutoff \(\chi\) with \(\chi=0\) on \([0,5R/6]\), \(\chi=1\) on \([R,\infty)\), and \(0\le\chi\le1\). Define
\[
  \kappa_A(r)=-A\psi(r)+m\chi(r).
\]
Then \(A_R(\kappa_A)=A\) and \(m_R(\kappa_A)\ge m\). Let
\[
  V_A(r)=\int_0^r\kappa_A(u)u\,du,
  \qquad
  \Phi_A(r)=e^{-V_A(r)/(2\sigma^2)}.
\]
Before the shell, \(V_A=0\). Across the shell, \(V_A\) decreases by \(A\). On a subinterval of \([2R/3,5R/6]\), \(V_A=-A\). Choose \(r_0\in(2R/3,5R/6)\) in this plateau. Since \(\Phi_A^{-1}=e^{V_A/(2\sigma^2)}\) is bounded below by a universal constant on a subinterval of \([0,R/3]\),
\[
  \int_0^{r_0}\Phi_A(s)^{-1}\,ds\ge cR.
\]
The second factor has the plateau contribution
\[
  \int_{r_0}^{5R/6}\Phi_A(t)\,dt
  \ge
  c e^{A/(2\sigma^2)}R .
\]
This gives
\[
  M_\sigma(\kappa_A)\ge c e^{A/(2\sigma^2)}R^2,
\]
which is enough when \(L_m\le R\).

It remains to obtain the \(RL_m\) contribution when \(L_m\ge R\). In this case \(mR^2\le\sigma^2\). On the transition interval \([5R/6,R]\), the positive increase of \(V_A\) is at most \(CmR^2\le C\sigma^2\). For \(t\in[R,R+c_0L_m]\), with \(c_0>0\) small,
\[
  V_A(t)-V_A(R)
  \le
  \frac{m}{2}\{(R+c_0L_m)^2-R^2\}
  \le
  Cc_0\sigma^2+C c_0^2\sigma^2.
\]
Choosing \(c_0\) sufficiently small gives
\[
  \int_R^{R+c_0L_m}\Phi_A(t)\,dt
  \ge
  c e^{A/(2\sigma^2)}L_m.
\]
Combining this with the lower bound on \(\int_0^{r_0}\Phi_A^{-1}\) gives the stated
lower bound on \(M_\sigma(\kappa_A)\). The Hardy-scale upper bound follows from \Cref{thm:hardy-completeness}.
\end{proof}

\section{Supplying and localizing radial profiles}
\label{app:auxiliary-diffusion}

This appendix supports the profile input used in the main theorem.  The main
route is analytic: Gaussian-smoothed denoising geometry gives an inverse-radius
profile when Gaussian pull dominates pairwise denoiser expansion.  For fixed
learned drifts, deterministic compact checks can supply local lower bounds, which
may then be used with an exit penalty.  These tools supply or localize the
radial profile, and are used alongside the residual envelopes
\(\eta_t,\zeta_t\) and the terminal reporting assumptions in
\Cref{thm:main-certificate,cor:terminal-w2-reporting}.

The exact infimum profile \(\kappa_u\) is only a reference object.  The theorem
uses a certified Borel minorant \(\bar\kappa_I\) satisfying
\eqref{eq:certified-profile}, and \(A_R,m_R\) are computed from this minorant.
\subsection{Analytic data-to-profile mechanisms}
\label{subsec:analytic-profile-certificates}

We first record the inverse-radius load calculation, then instantiate it by
bounded-amplitude transfer, Gaussian-smoothed denoising geometry, and
common-covariance mixtures.

\begin{lemma}[Inverse-radius profile verification]
\label{lem:inverse-radius-verification-app}
If a certified profile satisfies
\[
  \bar\kappa_I(r)\ge \alpha-\frac{\beta}{r},
  \qquad r>0,
\]
with \(\alpha>0\) and \(\beta\ge0\), then for any \(R>\beta/\alpha\),
\[
  m_R(I)\ge \alpha-\frac{\beta}{R}>0,
  \qquad
  A_R(I)\le \frac{\beta^2}{2\alpha}.
\]
In particular, if \(\beta>0\) and \(R=4\beta/\alpha\), then
\[
  m_R(I)\ge \frac{3\alpha}{4},
  \qquad
  A_R(I)\le \frac{\beta^2}{2\alpha}.
\]
If \(\beta=0\), then \(A_R(I)=0\) and \(m_R(I)\ge\alpha\) for every \(R>0\).
\end{lemma}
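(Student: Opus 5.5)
The proof is a direct computation from the definitions in \eqref{eq:load-reserve-data}. Since the only information about \(\bar\kappa_I\) is the pointwise lower bound \(\bar\kappa_I(r)\ge\alpha-\beta/r\), we bound the reserve and the load separately by monotone comparison with the explicit profile \(\kappa_0(r):=\alpha-\beta/r\).

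\emph{Reserve.} For \(r\ge R\), the map \(r\mapsto\alpha-\beta/r\) is nondecreasing because \(\beta\ge0\). Hence \(\bar\kappa_I(r)\ge\alpha-\beta/R\) for every \(r\ge R\). Taking the essential infimum over \(r\ge R\) gives
\[
  m_R(I)\ge\alpha-\frac{\beta}{R},
\]
and this is strictly positive precisely when \(R>\beta/\alpha\).

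\emph{Load.} Since \(t\mapsto[t]_+\) is nondecreasing and \(-\bar\kappa_I\le \beta/r-\alpha\), we have
\[
  [-\bar\kappa_I(r)]_+\le[\beta/r-\alpha]_+ .
\]
The right side vanishes for \(r\ge\beta/\alpha\), so
\[
  A_R(I)\le\int_0^{\min(R,\beta/\alpha)}\Bigl(\frac{\beta}{r}-\alpha\Bigr)r\,dr
  \le\int_0^{\beta/\alpha}(\beta-\alpha r)\,dr .
\]
Evaluating the last integral gives
\[
  \frac{\beta^2}{\alpha}-\frac{\beta^2}{2\alpha}=\frac{\beta^2}{2\alpha}.
\]
This bound does not depend on \(R\), which is why the load stays finite and uniform in the cutoff. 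The multiplication by \(r\) in the definition of \(A_R\) cancels the \(1/r\) singularity of the profile, so the integral near the origin causes no difficulty.

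\emph{Specializations.} For \(\beta>0\) and \(R=4\beta/\alpha\), we have \(R>\beta/\alpha\), and the reserve bound gives \(\alpha-\alpha/4=3\alpha/4\), while the load bound is unchanged. For \(\beta=0\), the profile satisfies \(\bar\kappa_I\ge\alpha>0\) everywhere, so the negative part vanishes identically. Thus \(A_R(I)=0\) and \(m_R(I)\ge\alpha\) for every \(R>0\).

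No step is a genuine obstacle. The only point needing care is that the inequalities pass correctly through the positive part and the essential infimum, and both operations are monotone.
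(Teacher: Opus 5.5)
Your proof is correct and follows essentially the same route as the paper: you get the reserve from monotonicity of \(r\mapsto\alpha-\beta/r\), and you bound the load by noting that the adverse part vanishes beyond \(\beta/\alpha\), so that \(A_R(I)\le\int_0^{\beta/\alpha}(\beta-\alpha r)\,dr=\beta^2/(2\alpha)\). You also make explicit the monotonicity of the positive part and of the essential infimum, which the paper's short proof leaves implicit.
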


\begin{proof}
For \(r\ge R\), the profile lower bound gives \(\bar\kappa_I(r)\ge \alpha-\beta/R\). The adverse part can be nonzero only when \(r\le\beta/\alpha\). Hence
\[
  A_R(I)
  \le
  \int_0^{\beta/\alpha}\left(\frac{\beta}{r}-\alpha\right)r\,dr
  =
  \frac{\beta^2}{2\alpha}.
\]
The remaining claims are immediate.
\end{proof}

\begin{proposition}[Bounded-amplitude transfer]
\label{prop:amplitude-perturbation-app}
\label{prop:amplitude-certificate-app}
Let \(u_0:\R^d\to\R^d\) have radial profile at least \(\alpha>0\), meaning
\[
  -\frac{\langle u_0(x)-u_0(y),x-y\rangle}{\|x-y\|^2}
  \ge
  \alpha
  \qquad (x\ne y).
\]
Let \(\widehat u=u_0+\delta\) with \(\|\delta\|_\infty\le\varepsilon\). Then
\[
  \kappa_{\widehat u}(r)\ge \alpha-\frac{2\varepsilon}{r},
  \qquad r>0.
\]
Consequently, for \(R=4\varepsilon/\alpha\) when \(\varepsilon>0\),
\[
  m_R\ge \alpha/2,
  \qquad
  A_R\le \frac{2\varepsilon^2}{\alpha}.
\]
If \(\varepsilon=0\), then \(A_R=0\) and \(m_R\ge\alpha\) for every \(R>0\).
\end{proposition}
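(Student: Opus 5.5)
The plan has two parts: a pointwise lower bound on the pairwise radial profile of \(\widehat u\), followed by an application of \Cref{lem:inverse-radius-verification-app} with \(\beta=2\varepsilon\).

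\emph{Profile bound.} Fix \(x\ne y\) with \(\|x-y\|=r\). Split the pairing by linearity:
\[
  \langle \widehat u(x)-\widehat u(y),x-y\rangle
  =\langle u_0(x)-u_0(y),x-y\rangle+\langle \delta(x)-\delta(y),x-y\rangle .
\]
The first term is at most \(-\alpha r^2\) by the hypothesis on \(u_0\). For the second, use Cauchy--Schwarz and \(\|\delta(x)-\delta(y)\|\le 2\varepsilon\), which bounds it by \(2\varepsilon r\). Dividing by \(r^2\) gives
\[
  -\frac{\langle \widehat u(x)-\widehat u(y),x-y\rangle}{r^2}\ge \alpha-\frac{2\varepsilon}{r}.
\]
Taking the infimum over pairs at radius \(r\), as in \eqref{eq:kappa-u-def}, yields \(\kappa_{\widehat u}(r)\ge\alpha-2\varepsilon/r\). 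This bound holds for every pair \(x\ne y\), not just almost every radius. So the Borel function \(r\mapsto\alpha-2\varepsilon/r\) is itself a certified lower profile in the sense of \eqref{eq:certified-profile}, and \(A_R,m_R\) may be computed from it.

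\emph{Load and reserve.} Apply \Cref{lem:inverse-radius-verification-app} with \(\beta=2\varepsilon\). Its hypothesis \(R>\beta/\alpha=2\varepsilon/\alpha\) holds for \(R=4\varepsilon/\alpha\). The lemma then gives \(A_R\le\beta^2/(2\alpha)=2\varepsilon^2/\alpha\). For the reserve, note \(\alpha-\beta/R=\alpha-2\varepsilon\alpha/(4\varepsilon)=\alpha/2\), so \(m_R\ge\alpha/2\). This is why the stated constant is \(\alpha/2\) rather than the lemma's generic \(3\alpha/4\): the lemma's "\(R=4\beta/\alpha\)" would be \(8\varepsilon/\alpha\) here, but the statement takes \(R=2\beta/\alpha\).

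\emph{Case \(\varepsilon=0\).} Here \(\widehat u=u_0\), so \(\kappa\ge\alpha\) everywhere. Hence the load vanishes and \(m_R\ge\alpha\) for every \(R>0\).

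There is no real obstacle. The only points needing care are that the perturbation enters the pairing with \(2\varepsilon\), since it is the oscillation of \(\delta\) across the pair that matters, and that the resulting minorant is pairwise valid, which the argument gives directly.
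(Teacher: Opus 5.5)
Your proof is correct and follows the paper's argument: you bound the pairwise quotient below by $\alpha-2\varepsilon/r$ using the hypothesis on $u_0$ and the oscillation of $\delta$, then apply \Cref{lem:inverse-radius-verification-app} with $\beta=2\varepsilon$ at $R=4\varepsilon/\alpha=2\beta/\alpha$, which gives $m_R\ge\alpha-\beta/R=\alpha/2$ and $A_R\le 2\varepsilon^2/\alpha$. You also note that the Borel function $\alpha-2\varepsilon/r$ is itself a valid certified profile and explain why the reserve constant is $\alpha/2$ rather than $3\alpha/4$; the paper leaves both points implicit.
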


\begin{proof}
For \(x\ne y\), set \(r=\|x-y\|\). Since \(\widehat u=u_0+\delta\),
\[
-\frac{\langle \widehat u(x)-\widehat u(y),x-y\rangle}{r^2}
\ge
\alpha-\frac{\|\delta(x)\|+\|\delta(y)\|}{r}
\ge
\alpha-\frac{2\varepsilon}{r}.
\]
Taking the infimum over all pairs at distance \(r\) gives the profile bound. The load--reserve estimates follow from \Cref{lem:inverse-radius-verification-app} with \(\beta=2\varepsilon\).
\end{proof}

\begin{proposition}[Denoising-expansion profile template]
\label{prop:denoising-expansion-profile-app}
Fix a window \(I\) and write \(s=T-t\). Suppose a Gaussian-smoothed law admits
\[
  \nabla\log p_s(x)=-\tau_s\{x-m_s(x)\},
  \qquad
  m_s(x)=\E[C_s\mid X_s=x],
\]
where \(C_s\) is the center of the isotropic Gaussian smoothing kernel.  In the
additive VE normalization \(C_s=X_0\), while in an affine VP normalization one may
take \(C_s=a_sX_0\).  Assume that, for every \(x,y\),
\[
  \langle m_s(x)-m_s(y),x-y\rangle
  \le
  \ell_s\|x-y\|^2+D_s\|x-y\|.
\]
Assume also that
\[
  \|s_\theta(\cdot,s)-\nabla\log p_s\|_\infty\le E_s.
\]
For
\[
  \widehat b_t(x)=f(s)x+g(s)^2s_\theta(x,s),
\]
define
\[
  \alpha_I:=\essinf_{s\in T-I}\{g(s)^2\tau_s(1-\ell_s)-f(s)\},
  \qquad
  \beta_I:=\esssup_{s\in T-I}g(s)^2(\tau_sD_s+2E_s).
\]
If \(\alpha_I>0\), then
\[
  \bar\kappa_I(r)\ge\alpha_I-\frac{\beta_I}{r},
  \qquad r>0.
\]
Consequently, if \(\beta_I>0\) and \(R=4\beta_I/\alpha_I\), then
\[
  A_R(I)\le \frac{\beta_I^2}{2\alpha_I},
  \qquad
  m_R(I)\ge \frac{3\alpha_I}{4}.
\]
If \(\beta_I=0\), then \(A_R(I)=0\) and \(m_R(I)\ge\alpha_I\) for every \(R>0\).
\end{proposition}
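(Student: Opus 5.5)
The plan is a direct pairwise computation followed by an appeal to \Cref{lem:inverse-radius-verification-app}. Fix \(t\in I\) with \(s=T-t\), and fix \(x\ne y\) with \(r=\|x-y\|\). Split the learned score as
\[
  s_\theta(\cdot,s)=\nabla\log p_s+\delta_s,
  \qquad
  \|\delta_s\|_\infty\le E_s .
\]
Inserting the Gaussian-smoothing identity \(\nabla\log p_s(x)=-\tau_s\{x-m_s(x)\}\) gives
\[
  \widehat b_t(x)=\{f(s)-g(s)^2\tau_s\}x+g(s)^2\tau_s m_s(x)+g(s)^2\delta_s(x).
\]
We then take the inner product of \(\widehat b_t(x)-\widehat b_t(y)\) with \(x-y\) and bound each of the three pieces separately.

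\begin{enumerate}
  \item The linear part contributes exactly \(\{f(s)-g(s)^2\tau_s\}r^2\).
  \item For the denoiser part, the assumed one-sided pairwise bound, multiplied by \(g(s)^2\tau_s\ge0\), gives at most \(g(s)^2\tau_s(\ell_s r^2+D_s r)\). This step uses \(\tau_s\ge0\), which holds because \(\tau_s\) is a kernel precision.
  \item For the residual part, Cauchy--Schwarz gives at most \(g(s)^2(\|\delta_s(x)\|+\|\delta_s(y)\|)r\le 2g(s)^2E_s r\), exactly as in \Cref{prop:amplitude-perturbation-app}.
\end{enumerate}

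Summing the three bounds and dividing by \(r^2\) yields
\[
  -\frac{\langle \widehat b_t(x)-\widehat b_t(y),x-y\rangle}{r^2}
  \ge
  \{g(s)^2\tau_s(1-\ell_s)-f(s)\}-\frac{g(s)^2(\tau_sD_s+2E_s)}{r}.
\]
For a.e.\ \(s\in T-I\), the first bracket is at least \(\alpha_I\) and the numerator of the second term is at most \(\beta_I\), by the definitions of the essential infimum and supremum. Hence the Borel function \(r\mapsto\alpha_I-\beta_I/r\) satisfies \eqref{eq:certified-profile} for a.e.\ \(t\in I\) and every pair. This is precisely the pairwise certificate \(\bar\kappa_I(r)\ge\alpha_I-\beta_I/r\), with a Borel minorant valid uniformly over pairs, as \Cref{def:certified-profile} requires.

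The consequences follow from \Cref{lem:inverse-radius-verification-app} with \(\alpha=\alpha_I\) and \(\beta=\beta_I\). Since \(\alpha_I>0\), taking \(R=4\beta_I/\alpha_I\) gives \(A_R(I)\le\beta_I^2/(2\alpha_I)\) and \(m_R(I)\ge 3\alpha_I/4\). If \(\beta_I=0\), the same lemma gives zero load and \(m_R(I)\ge\alpha_I\) for every \(R>0\).

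No step is a genuine obstacle, since the argument is algebraic. The only points needing care are the sign conventions and the null sets. We need \(g(s)^2\tau_s\ge0\) so that the one-sided denoiser bound transfers with the correct direction. We also need the exceptional set of times to be handled by the ``a.e.\ \(t\in I\)'' clause of \Cref{def:certified-profile}, while the pairwise inequality holds for every \(x\ne y\) at each good time.
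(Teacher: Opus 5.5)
Your proposal is correct and follows essentially the same route as the paper. You substitute the Gaussian-smoothing identity into the learned drift, bound the linear, denoiser, and score-error contributions to the pairwise radial quotient separately, pass to the essential infimum and supremum over the window, and then invoke \Cref{lem:inverse-radius-verification-app}; your explicit checks on the sign of \(g(s)^2\tau_s\) and on the a.e.-in-time null set just make the paper's brief argument more careful.
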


\begin{proof}
Set \(r=\|x-y\|\). The exact score identity gives
\[
-\frac{\langle \nabla\log p_s(x)-\nabla\log p_s(y),x-y\rangle}{r^2}
=
\tau_s\left(1-
\frac{\langle m_s(x)-m_s(y),x-y\rangle}{r^2}\right).
\]
The denoising-expansion assumption therefore implies
\[
-\frac{\langle \nabla\log p_s(x)-\nabla\log p_s(y),x-y\rangle}{r^2}
\ge
\tau_s(1-\ell_s)-\frac{\tau_sD_s}{r}.
\]
The learned-score error contributes at worst \(-2E_s/r\) to the radial profile.
Multiplication by \(g(s)^2\) and addition of the linear term \(f(s)x\), which
contributes \(-f(s)\), gives
\[
  \kappa_{\widehat b_t}(r)
  \ge
  g(s)^2\tau_s(1-\ell_s)-f(s)
  -
  \frac{g(s)^2(\tau_sD_s+2E_s)}{r}.
\]
Taking the essential infimum over the window gives the displayed certified
minorant, and the load--reserve estimates follow from
\Cref{lem:inverse-radius-verification-app}.
\end{proof}

\paragraph{Common-covariance Gaussian mixtures.}
The mixture certificate is a special case of \Cref{prop:denoising-expansion-profile-app}. If
\[
  p_s=\sum_{k=1}^K w_k\mathcal N(m_k(s),\tau_s^{-1}I_d)
\]
has common covariance and component means of diameter \(D_s\), then
\[
  \nabla\log p_s(x)=-\tau_s\{x-\bar m_s(x)\},
\]
where \(\bar m_s(x)\) is the posterior mean of the component centers. Since
\(\bar m_s(x)\) lies in the convex hull of the centers,
\[
  \|\bar m_s(x)-\bar m_s(y)\|\le D_s,
\]
so the denoising template applies with \(\ell_s=0\). Thus
\[
  \alpha_I=\essinf_{s\in T-I}\{g(s)^2\tau_s-f(s)\},
  \qquad
  \beta_I=\esssup_{s\in T-I}g(s)^2(\tau_sD_s+2E_s).
\]
This recovers the common-covariance Gaussian-mixture case described in \Cref{subsec:data-profile}.

\paragraph{Schedule calibration for the mixture certificate.}
The condition \(\alpha_I>0\) is a window-level dominance condition after writing
\(\widehat b_t(x)=f(s)x+g(s)^2s_\theta(x,s)\), \(s=T-t\).  For VE smoothing,
\[
  X_s=X_0+\sqrt{\upsilon(s)}Z,\qquad
  \upsilon(s)=\int_0^s g(u)^2du,
\]
an initial common-covariance mixture with component variance \(v_0\) and center
diameter \(D_0\) has
\[
  \tau_s=(v_0+\upsilon(s))^{-1},\quad D_s=D_0,
  \quad f(s)=0 .
\]
For VP smoothing,
\[
  v_s=\bar a_sv_0+(1-\bar a_s),\qquad
  \tau_s=v_s^{-1},\qquad D_s=\sqrt{\bar a_s}D_0,
\]
and in sampler time \(f(s)=\lambda(s)/2\), \(g(s)^2=\lambda(s)\).  Substituting
these quantities into \(\alpha_I,\beta_I\) gives the certified window constants.
If \(\alpha_I\le0\) on a subwindow, the inverse-radius template simply makes no
claim there.

\paragraph{Optional posterior-covariance refinement.}
For mixtures, the crude diameter bound can be sharpened through
\[
  D\bar m_s(x)=\tau_s\operatorname{Cov}(m\mid x),
  \qquad
  \|D\bar m_s(x)\|_{\rm op}\le \tau_sD_s^2/4 .
\]
Thus
\(\|\bar m_s(x)-\bar m_s(y)\|\le
\min\{D_s,\tau_sD_s^2\|x-y\|/4\}\).  Substituting this into the denoising
expansion bound can remove the inverse-radius singularity near the origin; with
exact scores this yields a zero-load certificate on windows where
\(g(s)^2\tau_s(1-\tau_sD_s^2/4)>f(s)\).

\subsection{Compact verification and diagnostics}
\label{subsec:fixed-drift-certification}

The analytic mechanisms above are structural.  For a fixed learned drift on a
compact region \(K\), one can also verify conservative local lower bounds on the
radial projection.  This is a profile-verification device for a fixed learned drift, complementary
to statistical training analyses.

The finite-cover certificate is stated away from the diagonal because
\[
  q(t,x,e,r)
  :=
  -\frac{\langle \widehat b_t(x+re)-\widehat b_t(x),e\rangle}{r}
\]
contains a factor \(1/r\). We cover radii \(r\in[r_{\min},D_K]\), where \(D_K\) is the diameter of \(K\), and handle \(0<r<r_{\min}\) separately by a local Jacobian or verified directional-difference lower bound.

Fix a compact set \(K\subset\R^d\), a time window \(I\), and bins \(J_\ell\) covering \([r_{\min},D_K]\). For \(t\in I\), \(x,x+re\in K\), \(e\in\mathbb S^{d-1}\), and \(r\in J_\ell\), set
\[
  \mathcal Z_\ell
  :=
  \{(t,x,e,r):t\in I,\ x,x+re\in K,\ e\in\mathbb S^{d-1},\ r\in J_\ell\}.
\]
For the small-radius region define the line-segment hull
\[
  K_\star(r_{\min})
  :=
  \{x+\theta(y-x):x,y\in K,\ \|x-y\|\le r_{\min},\ \theta\in[0,1]\}.
\]
A certified number \(\kappa_0\) is a near-diagonal certificate if
\begin{equation}
  -\langle D\widehat b_t(z)e,e\rangle\ge \kappa_0
  \qquad
  (t\in I,\ z\in K_\star(r_{\min}),\ e\in\mathbb S^{d-1}).
  \label{eq:small-radius-jacobian-certificate}
\end{equation}
For nonsmooth networks, \eqref{eq:small-radius-jacobian-certificate} can be replaced by any verified directional difference bound on \(0<r\le r_{\min}\).

\begin{lemma}[Deterministic finite-cover certificate]
\label{lem:finite-cover-profile-certificate}
Assume that \(q\) is \(L_\ell\)-Lipschitz on \(\mathcal Z_\ell\), with respect to a chosen product metric, and let \(\mathcal G_\ell\) be an \(\varepsilon_\ell\)-net of \(\mathcal Z_\ell\). If a certified evaluator proves
\[
  q(z)\ge \kappa_\ell+L_\ell\varepsilon_\ell
  \qquad\text{for every }z\in\mathcal G_\ell,
\]
then for every \(t\in I\) and every pair \(x,y\in K\) with \(\|x-y\|\in J_\ell\),
\[
  -\frac{\langle \widehat b_t(x)-\widehat b_t(y),x-y\rangle}{\|x-y\|^2}
  \ge
  \kappa_\ell .
\]
If, in addition, \(\kappa_0\) satisfies \eqref{eq:small-radius-jacobian-certificate}, then
\[
  \bar\kappa_K(r)
  :=
  \begin{cases}
    \kappa_0, & 0<r<r_{\min},\\
    \min_{\ell:r\in J_\ell}\kappa_\ell, & r\in[r_{\min},D_K]
  \end{cases}
\]
is a conservative lower radial profile on \(K\).
\end{lemma}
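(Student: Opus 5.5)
The plan is to derive the pairwise lower bound from the grid by a Lipschitz transfer, and then glue it to the near-diagonal Jacobian bound. For the first claim, fix \(t\in I\) and \(x,y\in K\) with \(r:=\|x-y\|\in J_\ell\). Write the pair as \(y=x+re\) with \(e=(y-x)/r\). The quantity to bound is
\[
-\frac{\langle \widehat b_t(x)-\widehat b_t(y),x-y\rangle}{r^2}
=-\frac{\langle \widehat b_t(x+re)-\widehat b_t(x),e\rangle}{r}
=q(t,x,e,r).
\]
This uses the substitution \(x-y=-re\), with two sign flips that cancel. So \(z:=(t,x,e,r)\in\mathcal Z_\ell\), and the claim reduces to showing \(q(z)\ge\kappa_\ell\) on all of \(\mathcal Z_\ell\).

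Since \(\mathcal G_\ell\) is an \(\varepsilon_\ell\)-net of \(\mathcal Z_\ell\), choose \(g\in\mathcal G_\ell\) with distance at most \(\varepsilon_\ell\) from \(z\) in the product metric. The \(L_\ell\)-Lipschitz property of \(q\) on \(\mathcal Z_\ell\) then gives
\[
q(z)\ge q(g)-L_\ell\varepsilon_\ell\ge \kappa_\ell .
\]
One point needs care: the net points must lie in \(\mathcal Z_\ell\) itself, since \(q\) is only assumed Lipschitz there. I will read "\(\varepsilon_\ell\)-net of \(\mathcal Z_\ell\)" in that sense, and otherwise assume Lipschitz continuity on the enlarged set.

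For the profile statement, it suffices to verify \eqref{eq:certified-profile} restricted to pairs in \(K\), which I take to be the meaning of "conservative lower radial profile on \(K\)". There are two ranges of \(r=\|x-y\|\).

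If \(r\in[r_{\min},D_K]\), then \(r\) lies in some bin \(J_\ell\), because the bins cover this interval. The first part applies to every bin containing \(r\), so the pairwise ratio is at least \(\max_\ell\kappa_\ell\), and in particular at least \(\min_{\ell:r\in J_\ell}\kappa_\ell\). Pairs in \(K\) have \(r\le D_K\) automatically, so no other radii arise.

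If \(0<r<r_{\min}\), use the fundamental theorem of calculus along the segment from \(y\) to \(x\):
\[
\widehat b_t(x)-\widehat b_t(y)=\int_0^1 D\widehat b_t\bigl(y+\theta(x-y)\bigr)(x-y)\,d\theta .
\]
Every point \(y+\theta(x-y)\) lies in \(K_\star(r_{\min})\), since \(x,y\in K\) and \(\|x-y\|\le r_{\min}\). Pairing with \(x-y=re\) and applying \eqref{eq:small-radius-jacobian-certificate} at each \(\theta\) gives
\[
-\langle \widehat b_t(x)-\widehat b_t(y),x-y\rangle\ge \kappa_0r^2 .
\]
This needs \(\widehat b_t\) to be \(C^1\) near the segment, or at least absolutely continuous along it. For nonsmooth networks I instead invoke the verified directional-difference bound allowed in the text, which gives the same inequality directly.

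Borel measurability of \(\bar\kappa_K\) is immediate when the bins are finitely many intervals, since \(\bar\kappa_K\) is then piecewise constant. I expect no real obstacle in this argument. The only delicate points are the sign bookkeeping in the identification with \(q\) and the requirement that the Lipschitz bound be valid across the whole product set, including the direction variable on the sphere.
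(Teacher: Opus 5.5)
Your proposal is correct and matches the paper's proof step for step. The $\varepsilon_\ell$-net plus Lipschitz transfer gives $q\ge\kappa_\ell$ on $\mathcal Z_\ell$, the substitution $y=x+re$ identifies the pairwise ratio with $q$, and the fundamental theorem of calculus along segments in $K_\star(r_{\min})$ handles $0<r<r_{\min}$. (In your last step you switch from $e=(y-x)/r$ to writing $x-y=re$; this is harmless, because $\langle D\widehat b_t(z)v,v\rangle$ is even in $v$.)
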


\begin{proof}
For any \((t,x,e,r)\in\mathcal Z_\ell\), choose \(z'\in\mathcal G_\ell\) within distance \(\varepsilon_\ell\). The Lipschitz bound gives \(q(t,x,e,r)\ge q(z')-L_\ell\varepsilon_\ell\ge\kappa_\ell\). For a pair \(x,y\in K\), take \(r=\|x-y\|\) and \(e=(y-x)/r\). Then
\[
q(t,x,e,r)
=
-\frac{\langle \widehat b_t(y)-\widehat b_t(x),y-x\rangle}{r^2}
=
-\frac{\langle \widehat b_t(x)-\widehat b_t(y),x-y\rangle}{r^2}.
\]
This proves the binwise pairwise lower bound.

For \(0<r\le r_{\min}\), write \(y=x+re\). By the fundamental theorem of calculus and the definition of \(K_\star(r_{\min})\),
\[
-\frac{\langle \widehat b_t(y)-\widehat b_t(x),y-x\rangle}{r^2}
=
\int_0^1
-\langle D\widehat b_t(x+\theta re)e,e\rangle\,d\theta
\ge \kappa_0 .
\]
This proves the near-diagonal part and hence the conservative profile.
\end{proof}

\paragraph{Randomized covers.}
One can replace a deterministic net by randomized certification queries under
additional small-ball coverage and certified-evaluator assumptions. Such a
statement would be a randomized verification result for a fixed drift on a fixed
compact region. We keep the deterministic finite-cover certificate as the main
verification template.

\paragraph{Diagnostic binning.}
\label{subsec:empirical-profile-binning}
Empirical radial bins can estimate the observed load--reserve profile along
sampler trajectories.  For pairs with
\(r_{ij,t}=\|x_t^{(i)}-x_t^{(j)}\|>0\), compute
\[
  q_{ij,t}:=-\frac{\langle \widehat b_t(x_t^{(i)})-\widehat b_t(x_t^{(j)}),
  x_t^{(i)}-x_t^{(j)}\rangle}{\|x_t^{(i)}-x_t^{(j)}\|^2} .
\]
Binning these values by radius gives empirical summaries and diagnostic
estimates of \(A_R\) and \(m_R\).  They become certificates when the binwise
lower bounds include conservative sampling, approximation, and evaluator margins
covering all relevant pairs; \Cref{lem:finite-cover-profile-certificate} records
one deterministic way to add such margins on a fixed compact set.

\subsection{Localized use with an exit penalty}
\label{subsec:localized-profile-certificate}

A compact certificate applies on the region \(K\). We
use it to construct a localized learned reference flow, then insert
score and solver residuals by the same perturbation lemma as in the main proof.
The extension of the local profile outside \(K\) is an analytic device for
building \(\varphi_R\); localization is accounted for by a terminal exit penalty.

Let \(K\subset\R^d\) have diameter \(D_K\), and suppose the learned drift has a
certified lower profile \(\bar\kappa_K\) for all pairs in \(K\).  For
\(0<R<D_K\), set
\[
  A_R^K=\int_0^R[-\bar\kappa_K(r)]_+r\,dr,
  \qquad
  m_R^K=\operatorname*{ess\,inf}_{r\in[R,D_K]}\bar\kappa_K(r).
\]
If \(A_R^K<\infty\) and \(m_R^K>0\), extend \(\bar\kappa_K\) beyond \(D_K\) by
\(m_R^K\) and construct \((\varphi_R,\rho_R)\) from
\Cref{thm:affine-tail-metric}.  Same-drift reflection supplies the contraction
estimate below before exit; if the reflected learned pair exits \(K\) before
\(T\) with probability at most \(\delta_K\) and has terminal second-moment scale
\(M_T\), the lost part is bounded by \(M_T\delta_K^{1/2}\) because
\(\varphi_R(r)\le r\).

\begin{proposition}[Localized profile propagation]
\label{prop:localized-profile-propagation}
Assume a localized learned reference evolution \(\widehat P^K_{s,t}\) satisfies
\[
  \mathsf W_{\varphi_R}(\nu\widehat P^K_{s,t},\xi\widehat P^K_{s,t})
  \le
  e^{-\rho_R(t-s)}\mathsf W_{\varphi_R}(\nu,\xi)
  \qquad(0\le s\le t\le T)
\]
for the laws under consideration.  Let \(\widetilde\mu_t^K\) and \(\mu_t^K\) be
localized implemented and ideal laws which are perturbations of this reference
flow, with envelopes
\[
  \int\|v_t^K(x)\|\,\widetilde\mu_t^K(dx)\le \zeta_t,
  \qquad
  \int\|w_t^K(x)\|\,\mu_t^K(dx)\le \eta_t .
\]
Assume their terminal discrepancy from the actual laws is bounded by
\begin{equation}
  \mathsf W_{\varphi_R}(\widetilde\mu_T,\widetilde\mu_T^K)
  +
  \mathsf W_{\varphi_R}(\mu_T^K,\mu_T)
  \le
  M_T\delta_K^{1/2} .
  \label{eq:localized-terminal-discrepancy}
\end{equation}
Then
\[
\begin{aligned}
  \mathsf W_{\varphi_R}(\widetilde\mu_T,\mu_T)
  \le\;&
  e^{-\rho_RT}\mathsf W_{\varphi_R}(\widetilde\mu_0^K,\mu_0^K)
  +
  \int_0^T e^{-\rho_R(T-t)}(\eta_t+\zeta_t)\,dt  \\
  &+
  M_T\delta_K^{1/2}.
\end{aligned}
\]
If the localized and actual processes start from the same initial laws, then
\(\widetilde\mu_0^K=\widetilde\mu_0\) and \(\mu_0^K=\mu_0\).
\end{proposition}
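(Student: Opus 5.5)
The plan is a triangle inequality in \(\mathsf W_{\varphi_R}\) that isolates the two localization errors, followed by the contracted-flow perturbation argument applied to the localized pair. Since \(\varphi_R\) is increasing, concave, positive on \((0,\infty)\) and satisfies \(\varphi_R(0)=0\), the function \(d_{\varphi_R}(x,y)=\varphi_R(\|x-y\|)\) is a metric. Hence \(\mathsf W_{\varphi_R}\) is the Kantorovich distance for this metric and obeys the triangle inequality. Applying it gives
\[
  \mathsf W_{\varphi_R}(\widetilde\mu_T,\mu_T)
  \le
  \mathsf W_{\varphi_R}(\widetilde\mu_T,\widetilde\mu_T^K)
  +\mathsf W_{\varphi_R}(\widetilde\mu_T^K,\mu_T^K)
  +\mathsf W_{\varphi_R}(\mu_T^K,\mu_T).
\]
The two outer terms are bounded together by \(M_T\delta_K^{1/2}\), by assumption \eqref{eq:localized-terminal-discrepancy}.

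For the middle term, I would reuse the proof of \Cref{lem:contracted-flow-perturbation-app} verbatim, with \(\widehat P_{s,t}\) replaced by the localized reference evolution \(\widehat P^K_{s,t}\). That lemma uses only two facts. The first is the assumed \(\mathsf W_{\varphi_R}\)-contraction of the reference flow, which is exactly the hypothesis here. The second is that \(\widetilde\mu_t^K\) and \(\mu_t^K\) solve the Fokker--Planck equations of that reference generator perturbed by drifts \(v_t^K\) and \(w_t^K\).

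Concretely, the steps are as follows.
\begin{itemize}
\item Fix \(f\) that is \(1\)-Lipschitz for \(d_{\varphi_R}\) and set \(\psi_t=\widehat P^K_{t,T}f\).
\item By duality the contraction makes \(\psi_t\) \(e^{-\rho_R(T-t)}\)-Lipschitz in \(d_{\varphi_R}\). Since \(\varphi_R(r)\le r\), it is also Euclidean Lipschitz with the same constant, so \(\|\nabla\psi_t\|\le e^{-\rho_R(T-t)}\).
\item Differentiate \(\widetilde\mu_t^K(\psi_t)-\mu_t^K(\psi_t)\) in \(t\). The reference-generator terms cancel against the backward equation, leaving \(\int v_t^K\cdot\nabla\psi_t\,d\widetilde\mu_t^K-\int w_t^K\cdot\nabla\psi_t\,d\mu_t^K\).
\item Bound this by \(e^{-\rho_R(T-t)}(\zeta_t+\eta_t)\) using the stated envelopes.
\item Integrate in time and take the supremum over \(f\).
\end{itemize}
This yields
\[
  \mathsf W_{\varphi_R}(\widetilde\mu_T^K,\mu_T^K)\le e^{-\rho_RT}\mathsf W_{\varphi_R}(\widetilde\mu_0^K,\mu_0^K)+\int_0^T e^{-\rho_R(T-t)}(\eta_t+\zeta_t)\,dt.
\]
Summing the three pieces gives the claimed bound. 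The final remark, that identical starting laws give \(\widetilde\mu_0^K=\widetilde\mu_0\) and \(\mu_0^K=\mu_0\), holds by definition.

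The main obstacle is regularity in the duality step. The localized reference generator may have nonsmooth or stopped coefficients, so one must justify the backward equation for \(\psi_t\) and the differentiation under the integral. I would handle this as in \Cref{lem:contracted-flow-perturbation-app}: first assume smooth coefficients and bounded perturbations, then pass to the limit by mollification and truncation. The limit is controlled by the \(L^1\)-in-time envelopes together with the fact that all Lipschitz constants are uniform. The remaining bookkeeping is routine.
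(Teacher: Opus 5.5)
Your proposal is correct and follows the paper's proof. The paper likewise applies the contracted-flow perturbation lemma (\Cref{lem:contracted-flow-perturbation-app}) to the localized reference \(\widehat P^K_{s,t}\), with \(\alpha_t=\widetilde\mu_t^K\), \(\beta_t=\mu_t^K\) and perturbations \(v_t^K,w_t^K\), and then closes with the triangle inequality for \(\mathsf W_{\varphi_R}\) and the assumed terminal discrepancy bound; re-running the lemma's duality argument for \(\widehat P^K_{s,t}\) instead of citing it changes nothing of substance.
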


\begin{proof}
Apply \Cref{lem:contracted-flow-perturbation-app} to the contracted reference
\(\widehat P^K_{s,t}\), with \(\alpha_t=\widetilde\mu_t^K\),
\(\beta_t=\mu_t^K\), and perturbations \(v_t^K,w_t^K\).  This gives the adapted
cost bound for \((\widetilde\mu_T^K,\mu_T^K)\).  The triangle inequality for
\(\mathsf W_{\varphi_R}\), together with
\eqref{eq:localized-terminal-discrepancy}, gives the displayed estimate for the
actual terminal laws.  No non-coalescing reflection coupling between ideal and
implemented processes is used.
\end{proof}

Applying \Cref{cor:terminal-w2-reporting} to the same affine-tail cost gives the
localized terminal \(\Wtwo^2\) reports with \(\Gamma_T\) replaced by the right-hand
side of \Cref{prop:localized-profile-propagation}.  Standard Lyapunov,
path-moment, or stopped-process estimates may be used to supply the exit
probability \(\delta_K\) and moment scale \(M_T\).

\subsection{Residual control and profile certification}
\label{subsec:forcing-does-not-certify}

The propagation theorem keeps residual envelopes and radial profiles as separate
inputs. The following construction explains the reason: averaged score residual
control alone may coexist with an arbitrarily adverse worst-case radial profile.

\begin{proposition}[Average forcing alone need not certify radial geometry]
\label{prop:forcing-does-not-certify-geometry}
Let \(P\) be any probability law on \(\R^d\), and let \(s_0:\R^d\to\R^d\) be a \(C^1\) vector field. For every \(\varepsilon>0\) and \(M>0\), there exists a smooth compactly supported perturbation \(\delta\in C_c^\infty(\R^d;\R^d)\) such that
\[
  \|\delta\|_\infty\le\varepsilon,
  \qquad
  \left(\int \|\delta(x)\|^2\,P(dx)\right)^{1/2}\le\varepsilon,
\]
but
\[
  \sup_{r>0}[-\kappa_{s_0+\delta}(r)]_+\ge M .
\]
Moreover, \(\delta\) may be chosen as the gradient of a smooth compactly supported potential.
\end{proposition}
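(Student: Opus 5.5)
The plan is to build \(\delta\) as the gradient of a tiny, highly oscillating bump placed where \(P\) has little mass. Such a \(\delta\) has small amplitude and small \(L^2(P)\) norm, but its Jacobian is huge. The adverse height at a small radius is read off from the Jacobian, because
\[
  \kappa_u(r)\le -\frac{\langle u(x+re)-u(x),e\rangle}{r}
  \longrightarrow -\langle Du(x)e,e\rangle
  \qquad (r\downarrow0).
\]
So it suffices to make \(\langle D(s_0+\delta)(x_0)e,e\rangle\ge M+1\) at a single point \(x_0\) and direction \(e\).

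\textbf{Step 1: choose the location.} Fix any \(x_0\) and \(e=e_1\). Let \(J:=\|Ds_0(x_0)\|_{\mathrm{op}}\), which is finite since \(s_0\in C^1\). We do not even need \(P\) to be small near \(x_0\), because the \(L^2(P)\) bound will follow from the \(L^\infty\) bound: \(\int\|\delta\|^2dP\le\|\delta\|_\infty^2\le\varepsilon^2\).

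\textbf{Step 2: build the potential.} Take a fixed \(\chi\in C_c^\infty(\R^d)\) with \(\chi\) equal to \(1\) near \(0\). For parameters \(a,h>0\), set
\[
  U(x)=\frac{a h^2}{2}\,\chi\!\left(\frac{x-x_0}{h}\right)\left(\frac{(x-x_0)_1}{h}\right)^2,
  \qquad
  \delta:=\nabla U .
\]
Then \(\delta\in C_c^\infty\) is a gradient. On the set where \(\chi\equiv1\), \(\delta(x)=a\,(x-x_0)_1e_1\), so \(D\delta(x_0)=a\,e_1e_1^\top\). The scaling gives \(\|\delta\|_\infty\le C_\chi a h\), since each derivative of \(U\) costs a factor \(1/h\) and \(U\) has size \(a h^2\). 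Choose \(a=M+1+J\), then take \(h\le\varepsilon/(C_\chi a)\). This gives \(\|\delta\|_\infty\le\varepsilon\), and hence the \(L^2(P)\) bound as well.

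\textbf{Step 3: lower-bound the height.} Let \(u=s_0+\delta\). At \(x_0\),
\[
  \langle Du(x_0)e_1,e_1\rangle\ge a-J=M+1 .
\]
By continuity of \(Du\), for small \(r\) the secant satisfies
\[
  -\frac{\langle u(x_0+re_1)-u(x_0),\,re_1\rangle}{r^2}
  =-\int_0^1\langle Du(x_0+\theta re_1)e_1,e_1\rangle\,d\theta
  \le -M-\tfrac12 .
\]
Therefore \(\kappa_u(r)\le -M-\tfrac12\), and so \(\sup_r[-\kappa_u(r)]_+\ge M\).

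The only point needing care is the amplitude and Jacobian scaling in Step 2. The whole mechanism is that the amplitude is of order \(ah\) while the Jacobian is of order \(a\), so the two decouple as \(h\to0\). One also has to check that the cutoff derivatives do not spoil \(\|\delta\|_\infty\lesssim ah\). This holds because every term of \(\nabla U\) has size \(a h^2\cdot h^{-1}\).
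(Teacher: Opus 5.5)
Your proof is correct and is essentially the paper's argument: a localized quadratic potential \(\tfrac{K}{2}\langle x-z,e\rangle^2\chi((x-z)/h)\) whose gradient has sup norm \(O(Kh)\) but directional Jacobian exactly \(K\) along \(e\) near \(z\). As in the paper, the \(L^2(P)\) bound is inherited from the sup bound, and the adverse height is read off from the small-radius secant at that point. The differences are cosmetic: you take \(K=M+1+\|Ds_0(x_0)\|_{\mathrm{op}}\) rather than \(M+1+|\langle Ds_0(z)e,e\rangle|\), and you bound the secant via its integral form and continuity of \(Du\) rather than by passing to the limit.
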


\begin{proof}
Fix \(z\in\R^d\) and \(e\in\mathbb S^{d-1}\). Let
\[
  a:=\langle Ds_0(z)e,e\rangle,
  \qquad
  K:=M+1+|a|.
\]
Choose \(\chi\in C_c^\infty(B(0,1))\) with \(\chi\equiv1\) on \(B(0,1/2)\). For \(h>0\), define
\[
  \Psi_h(x):=\frac K2\langle x-z,e\rangle^2\chi\!\left(\frac{x-z}{h}\right),
  \qquad
  \delta_h:=\nabla\Psi_h .
\]
Then \(\delta_h\in C_c^\infty(\R^d;\R^d)\), and \(\|\delta_h\|_\infty\le C_\chi Kh\) for a constant \(C_\chi\) depending only on \(\chi\). Choose \(h\) so small that \(C_\chi Kh\le\varepsilon\). Since \(P\) is a probability law, the displayed \(L^2(P)\) bound also holds.

Near \(z\), the cutoff equals one. Hence, for all sufficiently small \(t>0\),
\[
  \delta_h(z+te)=Kte,
  \qquad
  \delta_h(z)=0 .
\]
For \(u_h=s_0+\delta_h\),
\[
  \frac{\langle u_h(z+te)-u_h(z),te\rangle}{t^2}
  =
  \frac{\langle s_0(z+te)-s_0(z),e\rangle}{t}
  +K
  \longrightarrow
  a+K
  \ge M+1 .
\]
Thus \(\kappa_{u_h}(t)\le -M\) for all sufficiently small \(t\), and \(\sup_{r>0}[-\kappa_{u_h}(r)]_+\ge M\). Taking \(\delta=\delta_h\) proves the claim. The construction used \(\delta_h=\nabla\Psi_h\), so the perturbation is conservative.
\end{proof}

\paragraph{Interpretation.}
The proposition is a worst-case certification statement. Score losses control
injected residuals under the noised law, while radial profiles certify all-pairs
learned-flow stability. Solver residuals are modular in the same way: local
geometry may be used to derive \(\zeta_t\), and once such an envelope is
available it enters the perturbation bound additively.

\section{Additional Related Work}
\label{app:additional-related-work}

\paragraph{Diffusion-error decompositions and complementary inputs.}
Many nonasymptotic diffusion analyses decompose sampling error into score
estimation, initialization, propagation, discretization, and terminal conversion
terms, often closing propagation through explicit \(W_2\)-type multipliers. Our
decomposition is modular in the same sense, and uses a profile-adapted stability
input.
Score-estimation, KL/Girsanov, weak-regularity, stochastic-localization, and
Gaussian-tail results may provide forcing envelopes, terminal moment or tail
bounds, or analytic lower bounds for structured exact-score profiles
\citep{benton2024nearly,wang2024gaussian_tail,conforti2024weak,
conforti2023kl,gentilonisilveri2025beyond,kremling2025pfodeweaklog,
wibisono2024optimal_score,cole2024scorebased,li2025odt,
li2024provable_acceleration,liang2025lowdim}. Solver analyses may provide
\(\zeta_t\), while tail or functional-inequality estimates may provide terminal
reporting inputs. These inputs are complementary to our interface: once supplied,
the certified learned-drift radial profile determines how they are propagated in
an adapted Wasserstein cost.

\paragraph{Geometry-adapted contraction.}
Our reflection-coupling step and concave radial costs build on the Eberle-type
contraction literature
\citep{lindvall1986reflection,chen1989coupling,eberle2016reflection,
eberle2019couplings,majka2020nonasymptotic}. Classical radial-coupling results
prove contraction of a known Markov semigroup in an adapted metric. In our sampler setting, the learned drift supplies the pairwise radial profile used to
construct the metric, while ideal--learned score mismatch and solver residuals
enter later as residual inputs. In the fixed-semigroup regime,
our propagation inequality reduces to contraction in the constructed concave
radial cost; when \(\kappa(r)\ge m\), the zero-load case recovers
\(\rho_R \gtrsim m\wedge \sigma^2/R^2\) up to constants. For profiles with an
adverse core, the exponential load factor is the slope loss paid by the concave
metric before reaching tail reserve. Matrix-metric contraction
\citep{monmarche2023almost} provides a natural route beyond scalar-isotropic
noise, but would call for certifying a metric-dependent geometry.

\paragraph{Solvers and sampler interfaces.}
Work on PFODEs, DDIM-type samplers, discretization, and high-accuracy schemes
\citep{chen2023pfode_fast,gao2024pfode,li2025odt,liang2025lowdim,
yu2025discretization,chen2026high_accuracy,pfarr2026higher_order}
is also complementary. The theorem treats \(\zeta_t\) as a law-level drift
perturbation of an adapted continuous interpolation; the envelope itself may be derived using geometry. A sharper solver analysis may build local
Jacobian, Hessian, commutator, profile, or step-size information into
\(\zeta_t\) before this residual is propagated additively by the contracted
learned flow. Deterministic or noise-mismatched samplers require a separate
comparison or extension before the scalar reverse-SDE propagation interface
applies.

\paragraph{Certification and localized use.}
Deterministic compact verification is a profile-verification device for fixed
learned drifts and complements statistical training analyses. It is complementary to Harris-type forgetting arguments
\citep{strasman2026forgetting}, deterministic or probability-flow sampler
analyses \citep{li2024unified_deterministic,beyler2025deterministic_stochastic},
semiconvex Wasserstein analyses \citep{bruno2025semiconvex}, and statistical
score-estimation results, including heavy-tail and intrinsic-dimension
guarantees \citep{yu2026heavy_tailed,chakraborty2026generalization}. Such
results can provide forcing, containment, or terminal inputs. The compact route
supplies the geometric input when sampler trajectories remain in a low-complexity
region, or when normal directions are controlled by additional analytic,
architectural, interval, or verified-network bounds.

\section{Limitation and broader impacts}
\label{app:limitations-checklist}

\paragraph{Limitation.}
The theorem is a certified-window propagation result. Guidance terms that preserve scalar isotropic noise can be absorbed into the
learned drift and are covered once the guided drift satisfies a certified radial
profile. Anisotropic, degenerate, or state-dependent diffusion matrices would
instead call for a matrix- or Riemannian-adapted contraction metric. The certified radial profile is a pairwise learned-flow
input, while score and solver errors are law-level residual inputs. Deterministic
compact checks, verified-network bounds, and empirical binning provide
verification or localization mechanisms for fixed learned drifts.

Terminal \(\Wtwo^2\) reporting uses additional tail, moment, or support
information for the ideal and implemented laws. The radial profile controls
propagation in the adapted affine-tail cost, while the terminal assumptions
supply quadratic reporting. Within these inputs, the load dependence is
structural by the barrier examples, though constants and certification methods
may improve under additional model structure. A natural next step is to develop matrix- or Riemannian-adapted load--reserve certificates for reverse samplers whose diffusion geometry is not scalar-isotropic, while preserving the same separation between profile construction, residual propagation, and terminal reporting.

\paragraph{Broader impacts.}
This is foundational mathematical work and does not release a model, dataset, or
deployment system. Its positive impact is to clarify which geometric object must
be certified for profile-adapted Wasserstein propagation with terminal
\(\Wtwo^2\) reporting, which may eventually support more reliable verification
and diagnosis of diffusion samplers. Possible negative impacts are indirect:
stronger guarantees for generative models could contribute to more capable
synthetic-media systems, which require application-specific safeguards such as
provenance, access control, and misuse mitigation. The paper itself does not
lower deployment barriers for such systems.


\end{document}